\newcommand{\Ex}{\mathbb{E}}
\newcommand{\N}{\mathbb{N}}
\renewcommand{\P}{\mathbb{P}}
\newcommand{\R}{\mathbb{R}}
\newcommand{\FF}{\mathcal{F}}
\newcommand{\LL}{\mathcal{L}}
\newcommand{\NN}{\mathcal{N}}
\newcommand{\PP}{\mathcal{P}}
\renewcommand{\SS}{\mathcal{S}}
\newcommand{\WW}{\mathcal{W}}
\def\upintkern@{\mkern-7mu\mathchoice{\mkern-3.5mu}{}{}{}}
\def\upintdots@{\mathchoice{\mkern-4mu\@cdots\mkern-4mu}%
	{{\cdotp}\mkern1.5mu{\cdotp}\mkern1.5mu{\cdotp}}%
	{{\cdotp}\mkern1mu{\cdotp}\mkern1mu{\cdotp}}%
	{{\cdotp}\mkern1mu{\cdotp}\mkern1mu{\cdotp}}}
\newcommand{\UpMultiIntegral}[1]{%
	\edef\ints@c{\noexpand\upintop
		\ifnum#1=\z@\noexpand\upintdots@\else\noexpand\upintkern@\fi
		\ifnum#1>\tw@\noexpand\upintop\noexpand\upintkern@\fi
		\ifnum#1>\thr@@\noexpand\upintop\noexpand\upintkern@\fi
		\noexpand\upintop
		\noexpand\ilimits@
	}%
	\futurelet\@let@token\ints@a
}
\DeclareFontFamily{OMX}{mdbch}{}
\DeclareFontShape{OMX}{mdbch}{m}{n}{ <->s * [0.8]  mdbchr7v }{}
\DeclareFontShape{OMX}{mdbch}{b}{n}{ <->s * [0.8]  mdbchb7v }{}
\DeclareFontShape{OMX}{mdbch}{bx}{n}{<->ssub * mdbch/b/n}{}
\DeclareSymbolFont{uplargesymbols}{OMX}{mdbch}{m}{n}
\DeclareMathSymbol{\upintop}{\mathop}{uplargesymbols}{82}
\DeclareMathSymbol{\upointop}{\mathop}{uplargesymbols}{"48}
\DeclareFontFamily{MDB}{mdbch}{}
\DeclareFontShape{MDB}{mdbch}{m}{n}{ <->s * [0.8]  mdbchrmb }{}
\DeclareFontShape{MDB}{mdbch}{b}{n}{ <->s * [0.8]  mdbchbmb }{}
\DeclareFontShape{MDB}{mdbch}{bx}{n}{<->ssub * mdbch/b/n}{}
\DeclareSymbolFont{mathdesignB}{MDB}{mdbch}{m}{n}%
\DeclareMathSymbol{\upintclockwise}{\mathop}{mathdesignB}{128}
\DeclareMathSymbol{\upointclockwise}{\mathop}{mathdesignB}{130}
\DeclareMathSymbol{\upointctrclockwise}{\mathop}{mathdesignB}{132}
\DeclareMathSymbol{\upoiint}{\mathop}{mathdesignB}{134}
\DeclareMathSymbol{\upoiiint}{\mathop}{mathdesignB}{136}
\newcommand{\upint}{\DOTSI\upintop\ilimits@}
\newcommand{\upoint}{\DOTSI\upointop\ilimits@}
\renewcommand{\int}{\upint}
\renewcommand{\P}{%
	\@ifnextchar\bgroup%
	{\@Pwithargs}
	{\@Pnoargs}
}
\newcommand{\@Pwithargs}[1]{%
	\@ifnextchar\bgroup%
	{\@Ptwoargs{#1}}
	{\@Ponearg{#1}}
}
\newcommand{\@Pnoargs}{\mathbb{P}}
\newcommand{\@Ponearg}[1]{\mathbb{P}\left[ #1 \right]}
\newcommand{\@Ptwoargs}[2]{\mathbb{P}_{#1}\left[ #2 \right]}
\newcommand{\E}{%
	\@ifnextchar\bgroup%
	{\@Ewithargs}
	{\@Enoargs}
}
\newcommand{\@Ewithargs}[1]{%
	\@ifnextchar\bgroup%
	{\@Etwoargs{#1}}
	{\@Eonearg{#1}}
}
\newcommand{\@Enoargs}{\mathbb{E}}
\newcommand{\@Eonearg}[1]{\mathbb{E}\left[ #1 \right]}
\newcommand{\@Etwoargs}[2]{\underset{#1}{\mathbb{E}}\left[ #2 \right]}
\newcommand{\cov}{\mathrm{cov}}
\newcommand{\var}{\mathrm{var}}
\newcommand{\normal}{\mathcal{N}}
\newcommand{\normaldof}[2]{\normal^{#1} \! \left( #2 \right) }
\newcommand{\eqd}{\overset{d}{=}}
\newcommand{\Rad}{\mathop{\mathrm{Rad}}}
\newcommand{\Radhalf}{\Rad\! \left( 1/2 \right)}
\newcommand{\Radhalfd}[1]{\Rad^{#1}\! \left( 1/2 \right)}
\newcommand{\Uni}{\mathrm{Uni}}
\newcommand{\Exp}{\mathrm{Exp}}
\newcommand{\NNet}{\mathrm{NN}}
\newcommand{\lnormm}[2]{\big\Vert #1 \big\Vert_{#2}}
\newcommand{\ltwonormm}[1]{\lnormm{#1}{2}}
\newcommand{\lnorm}[2]{\left\Vert #1 \right\Vert_{#2}}
\newcommand{\ltwonorm}[1]{\lnorm{#1}{2}}
\newcommand{\reals}{\mathbb{R}}
\newcommand{\naturals}{\mathbb{N}}
\newcommand{\vb}{\mathbf{b}}
\newcommand{\vh}{\mathbf{h}}
\newcommand{\vw}{\mathbf{w}}
\newcommand{\vx}{\mathbf{x}}
\newcommand{\vz}{\mathbf{z}}
\newcommand{\vA}{\mathbf{A}}
\newcommand{\vD}{\mathbf{D}}
\newcommand{\vW}{\mathbf{W}}
\newcommand{\vX}{\mathbf{X}}
\newcommand{\vxi}{\boldsymbol{\xi}}
\newcommand{\concat}{\mathord\Vert}
\newcommand{\tb}{\tilde{b}}
\newcommand{\tA}{\tilde{A}}
\newcommand{\tX}{\tilde{X}}
\mathchardef\mhyphen="2D
\newcommand{\bsz}{\mathtt{\scriptsize{bsz}}}
\DeclareMathSymbol{\shortcol}{\mathord}{operators}{"3A}
\newcommand{\Rd}{\reals^d}
\newcommand{\Aijst}{A^{(i,j)}_{s,t}}
\newcommand{\Aij}{A^{(i,j)}}
\newcommand{\Wi}{W^{(i)}}
\newcommand{\Wj}{W^{(j)}}
\newcommand{\bij}{b^{(i,j)}}
\newcommand{\Hi}{H^{(i)}}
\newcommand{\Hj}{H^{(j)}}
\newcommand{\Bi}{B^{(i)}}
\newcommand{\Bj}{B^{(j)}}
\renewcommand{\bar}{\widebar}
\renewcommand{\hat}{\widehat}
\renewcommand{\tilde}{\widetilde}
\let\save@mathaccent\mathaccent
\newcommand*\if@single[3]{%
  \setbox0\hbox{${\mathaccent"0362{#1}}^H$}%
  \setbox2\hbox{${\mathaccent"0362{\kern0pt#1}}^H$}%
  \ifdim\ht0=\ht2 #3\else #2\fi
  }
\newcommand*\rel@kern[1]{\kern#1\dimexpr\macc@kerna}
\newcommand*\widebar[1]{\@ifnextchar^{{\wide@bar{#1}{0}}}{\wide@bar{#1}{1}}}
\newcommand*\wide@bar[2]{\if@single{#1}{\wide@bar@{#1}{#2}{1}}{\wide@bar@{#1}{#2}{2}}}
\newcommand*\wide@bar@[3]{%
  \begingroup
  \def\mathaccent##1##2{%
    \let\mathaccent\save@mathaccent
    \if#32 \let\macc@nucleus\first@char \fi
    \setbox\z@\hbox{$\macc@style{\macc@nucleus}_{}$}%
    \setbox\tw@\hbox{$\macc@style{\macc@nucleus}{}_{}$}%
    \dimen@\wd\tw@
    \advance\dimen@-\wd\z@
    \divide\dimen@ 3
    \@tempdima\wd\tw@
    \advance\@tempdima-\scriptspace
    \divide\@tempdima 10
    \advance\dimen@-\@tempdima
    \ifdim\dimen@>\z@ \dimen@0pt\fi
    \rel@kern{0.6}\kern-\dimen@
    \if#31
      \overline{\rel@kern{-0.6}\kern\dimen@\macc@nucleus\rel@kern{0.4}\kern\dimen@}%
      \advance\dimen@0.4\dimexpr\macc@kerna
      \let\final@kern#2%
      \ifdim\dimen@<\z@ \let\final@kern1\fi
      \if\final@kern1 \kern-\dimen@\fi
    \else
      \overline{\rel@kern{-0.6}\kern\dimen@#1}%
    \fi
  }%
  \macc@depth\@ne
  \let\math@bgroup\@empty \let\math@egroup\macc@set@skewchar
  \mathsurround\z@ \frozen@everymath{\mathgroup\macc@group\relax}%
  \macc@set@skewchar\relax
  \let\mathaccentV\macc@nested@a
  \if#31
    \macc@nested@a\relax111{#1}%
  \else
    \def\gobble@till@marker##1\endmarker{}%
    \futurelet\first@char\gobble@till@marker#1\endmarker
    \ifcat\noexpand\first@char A\else
      \def\first@char{}%
    \fi
    \macc@nested@a\relax111{\first@char}%
  \fi
  \endgroup
}
\newcommand{\chencomb}{\operatorname{Chen-combine}}
\newcommand{\LAdim}{\tfrac{d(d-1)}{2}}
\newcommand{\tvb}{\tilde{\vb}}
\newcommand{\tvA}{\tilde{\vA}}
\newcommand{\BF}{\operatorname{BF}}
\newcommand{\BFf}[1]{\BF \! \left( #1 \right) }
\newcommand{\Pbf}{\P_{\mathrm{BF}}^{\theta,w}}
\newcommand{\Pbfj}{\P_{\mathrm{BF}}^{\theta}}
\newcommand{\ASP}{\mathrm{ASP}}
\newcommand{\lambASP}{\lambda_{\ASP}}
\newcommand{\echen}{\boldsymbol{\varepsilon}_{\text{\tiny Chen}}}
\newcommand{\elevy}{\boldsymbol{\varepsilon}_{\text{\tiny Lévy}}}
\definecolor{newBlue}{RGB}{0, 50, 255}
\theoremstyle{plain}
\newtheorem{theorem}{Theorem}[section]
\newtheorem{proposition}[theorem]{Proposition}
\newtheorem{lemma}[theorem]{Lemma}
\theoremstyle{definition}
\newtheorem{definition}[theorem]{Definition}
\theoremstyle{remark}
\newtheorem{remark}[theorem]{Remark}
\newcommand{\m}{\hspace{0.25mm}}
\newcommand*\samethanks[1][\value{footnote}]{\footnotemark[#1]}
\title{Generative Modelling of L\'evy Area\\ for High Order SDE Simulation}
\author{%
  Andra\v{z} Jelin\v{c}i\v{c}\m\thanks{Equal contribution.}\\
  Department of Mathematical Sciences\\
  University of Bath\\
  \texttt{aj2382@bath.ac.uk} 
  \And
  Jiajie Tao\m\samethanks\\
  Department of Mathematics\\
  University College London \\
  \texttt{jiajie.tao.21@ucl.ac.uk} 
  \And
  William F. Turner\m\samethanks\\
  Department of Mathematics\\
  Imperial College London\\
  \texttt{william.turner17@imperial.ac.uk} 
  \And
  Thomas Cass\\
  Department of Mathematics\\
  Imperial College London\\
  \texttt{thomas.cass@imperial.ac.uk} 
  \And
  James Foster \\
  Department of Mathematical Sciences\\
  University of Bath\\
  \texttt{jmf68@bath.ac.uk} 
  \And
  Hao Ni \\
  Department of Mathematics\\
  University College London \\
  \texttt{h.ni@ucl.ac.uk} 
}
\begin{document}

\maketitle
\begin{abstract}
It is well understood that, when numerically simulating SDEs with general noise, achieving a strong convergence rate better than $O(\sqrt{h})$ (where $h$ is the step-size) requires the use of certain iterated integrals of Brownian motion, commonly referred to as its ``L\'{e}vy areas''. However, these stochastic integrals are difficult to simulate due to their non-Gaussian nature and for a $d$-dimensional Brownian motion with $d > 2$, no fast almost-exact sampling algorithm is known.

In this paper, we propose L\'{e}vyGAN, a deep-learning-based model for generating approximate samples of L\'{e}vy area conditional on a Brownian increment. Due to our ``Bridge-flipping'' operation, the output samples match all joint and conditional odd moments exactly. Our generator employs a tailored GNN-inspired architecture, which enforces the correct dependency structure between the output distribution and the conditioning variable. Furthermore, we incorporate a mathematically principled characteristic-function based discriminator. Lastly, we introduce a novel training mechanism termed ``Chen-training'', which circumvents the need for expensive-to-generate training data-sets. This new training procedure is underpinned by our two main theoretical results.

For $4$-dimensional Brownian motion, we show that L\'{e}vyGAN exhibits state-of-the-art performance across several metrics which measure both the joint and marginal distributions. We conclude with a numerical experiment on the log-Heston model, a popular SDE in mathematical finance, demonstrating that high-quality synthetic L\'{e}vy area can lead to high order weak convergence and variance reduction when using multilevel Monte Carlo (MLMC).
\smallbreak
\end{abstract}
\section{Introduction}
The numerical simulation of Stochastic Differential Equations (SDEs) is a ubiquitous task encountered in a wide variety of fields, ranging from mathematical finance \cite{shreve2004finance} and systems biology \cite{browning2020biology} to molecular dynamics \cite{leimkuhler2015molecular} and data science \cite{li2019LangevinMC}. Real-world phenomena arising in these areas are often described well by SDEs formulated through \textit{It\^o calculus} and of the general form:
\begin{equation}\label{eq:SDE_1}
    dX_t = f(X_t)\m dt + \sum_{i=1}^d g_i(X_t)\m dW_t^{(i)},\ \ X_0=x_0,
\end{equation}
where the solution $X = \{X_t\}_{t\in[0,T]}$ takes values in $\R^e$, $W = (W^{(1)},\dots,W^{(d)})$ denotes a standard $d$-dimensional Brownian motion and $f, g_i : \R^e \rightarrow \R^e$ are suitably regular vector fields on $\R^e$. In practice, one is often concerned with estimating quantities of the form $\bar{\varphi}\coloneqq\Ex\left[\varphi(X)\mid X_0=x_0\right]$, where the function $\varphi$ may depend on the whole trajectory of $X$ or simply on the terminal value $X_T$; such as the payoff of a European call-option. On occasion, it may be possible to obtain $\bar{\varphi}$ by solving certain PDEs (e.g.~through the backward Kolmogorov equation or the Feynman-Kac formula).\smallbreak

However, the standard approach is to use Monte Carlo simulation, where one uses a discretisation scheme to generate approximate sample paths $\{\hat{X}_{i}\}_{i=1}^{N}$ of $X$, which can be used to approximate $\bar{\varphi}$ by taking the average of $\{\varphi(\hat{X}_{i})\}_{i=1}^{N}$. Given the importance of Monte Carlo simulation in applications, there is a rich literature concerning numerical methods for SDEs and their theoretical properties. A broad range of discretisation schemes are available, such as the classical Euler-Maruyama and Milstein schemes as well as the higher order Talay \cite{Talay} and Ninomiya-Victoir \cite{NV} schemes. For more details on the numerical simulation of SDEs, we refer the reader to \cite{KP} and \cite{PT}.\smallbreak

There are two standard measures for the effectiveness of numerical schemes: strong error (or MSE) and weak error. It is a well-known result of Clark and Cameron \cite{ClarkCameron} that numerical schemes using only increments of Brownian motion are limited in general to a strong convergence rate of at most $O(\sqrt{h}\,)$, where $h$ denotes the step size. Furthermore, to the best of our knowledge, all numerical schemes achieving second order weak convergence require the generation of random variables in addition to the Brownian increments. In particular, the Talay scheme \cite{Talay} and the Ninomiya-Victoir scheme \cite{NV} require the generation of Rademacher random variables. Further examples include the Ninomiya-Ninomiya scheme \cite{NN} and a stochastic Runge-Kutta method due to R\"o{\ss}ler \cite{RungeKutta}. In all of these cases, the additional random variable(s) are generated to replace certain second order iterated integrals of Brownian motion, which are commonly referred to as its \textit{L\'evy area}.\medbreak

\begin{definition}
The L\'evy area of a $d$-dimensional Brownian motion over $[s,t]$ is a $d\times d$ antisymmetric matrix whose $(i,j)$-th entry is entries given by,
\begin{equation*}
    A_{s,t}^{(i,j)} \coloneqq \frac{1}{2}\bigg(\int_{s}^{t}(W_r^{(i)}-W_{s}^{(i)})\, dW_r^{(j)}-\int_{s}^{t}(W_r^{(j)}-W_{s}^{(j)})\, dW_r^{(i)}\bigg).
\end{equation*}
\vspace*{-4mm}
\end{definition}
\begin{figure}[H]
    \centering
    \includegraphics[width=0.95\textwidth]{Images/Levy_area_diagram.pdf}
    \caption{Each entry $A^{(i,j)}$ is the area between the independent Brownian motions $W^{(i)}$ and $W^{(j)}$ (diagram adapted from \cite{FosterThesis}).}
    \label{fig:levy_area}
\end{figure}

When the vector fields of the SDE \cref{eq:SDE_1} do not satisfy the commutativity condition $[g_i,g_j]=0$ (where $[g_i, g_j](x) := g_j^\prime(x)g_i(x) - g_i^\prime(x)g_j(x)$ denotes the standard Lie bracket of vector fields), then schemes that achieve high order Strong convergence, such as the Milstein and log-ODE methods, require the simulation of L\'evy area. Consequently, the approximation of L\'evy area has received much interest in recent decades, with the view towards both high order weak and strong convergence.\smallbreak

Approximations to L\'evy area have been well studied \cite{Davie,Dickinson,GL,KP,KPW,FosterThesis,FH,MR,MW,Wiktorsson}, with the majority of approximations concerning strong estimation. Strong estimators aim to approximate L\'evy area by minimising the mean-squared error so that the resulting estimator may be incorporated into the strong analysis of the discretisation scheme. Typically such estimators rely on truncated expansions of Brownian motion, such as the Fourier series expansion \cite{KP,KPW, KR}, the Karhunen-Lo\`eve expansion \cite{Loeve} and more recently the polynomial expansion \cite{foster2025levy, FH, foster2020b}. These estimators are often improved by estimating the tail sum of the of the expansion in an appropriate manner, see for example \cite{MR,Wiktorsson}.\smallbreak

To the best of our knowledge, there is no known scheme which simulates L\'evy area exactly, with even the ``rectangle-wedge-tail'' algorithm of Gaines and Lyons \cite{GL} requiring numerical integration. Moreover, this approach is only applicable in $d=2$. On the other hand, the main drawback of the truncated expansion methods is the cost of simulation. In practice, one is often required to generate millions of L\'evy area samples, and the aforementioned methods often require a high truncation level to achieve good performance. Consequently, in recent years, there has been a renewed focus on approximations of L\'evy area that are suitable for weak discretisation schemes, where the estimator is less costly to generate. The aim of a weak estimator is to match some moments of the L\'evy area given a Brownian increment. These estimators differ in complexity depending on their intended usage. Basic estimators include the Rademacher random variables that appear in the Talay scheme \cite{Talay} and Davie's approximation \cite{Davie,FL} which uses a Gaussian random variable with the correct variance (this may be improved to have the correct conditional variance given the Brownian increment \cite{FosterThesis}). Perhaps the most sophisticated weak approximation is due to Foster \cite{FosterThesis}, which matches the first five conditional moments of L\'evy area given the Brownian increment when $d\leq3$.\smallbreak

In this article, we provide a new approach to the construction of weak estimators of L\'evy area given a Brownian increment through the use generative modelling techniques. To the best of our knowledge, this is the first time that the powerful toolkits provided by modern machine learning have been applied to the problem of L\'evy area simulation. Arguably the main obstacle to this approach is the computation effort required to generate considerable amounts of precise samples of L\'evy area. We note this is in principle possible, through the use of a truncated Fourier series method \cite{KR}, with other options also possible. However, in the context of L\'evy area generation, we present a novel training algorithm based on Chen's relation \cite{Chen57} that allows for the training of a generative model without access to a dataset of L\'evy area samples.
\subsection{Our contributions}
In this article we present L\'evyGAN, a deep-learning based generative model that simulates the L\'evy area of arbitrary dimensional Brownian motions. Deep-learning based generative models have been widely used for data synthesis where a parametric model is trained to learn the target distribution. Among the variety of generative models, GAN-typed models \cite{Mirza2014}, \cite{Goodfellow2014} have been particularly noteworthy for their performances. GAN, short for Generative Adversarial Network, operates on the compelling principle of adversarial training. This approach consists of two neural networks - a generator and a discriminator - that are trained simultaneously. The generator's task is to create synthetic data, while the discriminator's role is to distinguish between real and generated data.\smallbreak

With no exception, GAN-type models also possess drawbacks like other generative models. The necessity of real data as reference sets for training is one of them. Machine learning models are often data-driven and sometimes data-greedy; normally, practitioners collect real-world data and approximate its distribution using an empirical distribution outputed by the generator. This is especially pertinent in the context of L\'evy area generation, where a competitive method needs to achieve very high accuracy, which incurs a high statistical complexity, and thus requires large amounts of data. In contrast to standard GANs, score-based diffusion models and variational autoencoders, L\'evyGAN is designed to learn the target distribution without requiring any samples from it at all. We term this approach ``Chen-training'', which is theoretically underpinned by the unique invariance of the joint law of Brownian motion and L\'evy area under Chen's relation, see \cref{thm:DistrChenUniquenessGeneral}.\smallbreak

We have designed both the generator and discriminator by exploring the features of the joint law of Brownian motion and its L\'evy Area \cite{FH, foster2020b, FosterThesis}. In particular, for the generator, we ensure that the joint distribution of our generator is permutation invariant and that each component of the generated area depends only on the relevant components of the Brownian increment. We also ensure that all odd cross moments of our generator are exact through the multiplication by certain Rademacher random variables. For the discriminator, we have chosen a characteristic function based discriminator, initially proposed in \cite{Ansari2019, CFGAN}. Inspired by this approach and the more general method of  \cite{HangPCF}, we define the unitary characteristic function of a random variable as a generalization of the characteristic function onto higher degree Lie algebra.\smallbreak

For our numerical results, we train the L\'evyGAN in $d=4$. It is noteworthy that the model is able to generate the L\'evy area for arbitrary Brownian dimensions, with no loss in performance guaranteed in $d^\prime<d$ and empirically strong performance for $d^\prime >d$. Empirically, we show that L\'evyGAN attained the best performance among other weak estimators, such as those found in \cite{Davie} and \cite{FosterThesis}, in terms of distributional metrics and is comparable in generation speed to the method in \cite{FosterThesis}. Finally we provide an application of weak approximations to L\'evy area to high order multi-level Monte Carlo numerical schemes. In this example, we demonstrate that the inclusion of an approximate L\'evy area term in the Strang splitting method achieves higher order variance reduction and weak convergence. Moreover, we provide evidence that an estimator that only matches the variance of L\'evy area (such as the Rademacher random variables found in the Talay scheme) is not appropriate for this application. The L\'evyGAN implementation, together with a trained model for $d=4$ can be found at \href{https://github.com/andyElking/LevyGAN}{github.com/andyElking/LevyGAN}.

\subsection{Outline and Common Notation}
This article is divided into five main sections. In \cref{sec:GAN} we recall the standard setup of generative adversarial networks and discuss why the traditional approach to generative modelling is not easily applicable to our setting. In \cref{sec:generator} we outline the structure of our generator. This section focuses on the symmetries of the joint law of Brownian motion and L\'evy area that we hard code into our generator. This includes so-called ``bridge-flipping'', a precise multiplication by certain Rademacher random variables to ensure all joint odd moments are correctly estimated and to help the generator train evenly across all quadrants in space. In \cref{sec:PairNet} we introduce a network architecture dubbed ``pair-net" inspired by graph neural networks that ensures the correct dependence structure between the coordinates of Brownian motion and the coordinates of L\'evy area. The structure of our discriminator is outlined in \cref{sec:Discriminator}. Here we discuss two alternatives for a loss function based on the analytical form of the joint characteristic function of L\'evy area and Brownian motion and a generalisation termed the unitary characteristic function proposed in \cite{HangPCF}. Our novel training approach ``Chen-training'' is covered in \cref{sec:ChenTraining}, before the whole training procedure is summarised in \cref{sec:levygan}. Finally, in \cref{sec:Results}, we compare the distributional performance of our generator in comparison to the state-of-the-art Foster method \cite{FosterThesis} and demonstrate the applicability of weak estimators for L\'evy area to multilevel Monte-Carlo.\smallbreak

To conclude the introduction, we outline some common notation to be used throughout the article.\smallbreak
    \begin{enumerate}[label=\arabic*)]
        \item $\P_{X\mid Y=y}$ for the conditional distribution of $X$ given $Y=y$.
        \item With abuse of notation $(X\mid Y=y)\eqd (Z\mid Y=y)$ if $\P_{X\mid Y=y}=\P_{Z\mid Y=y}$.
        \item $(W_t)_{t\in [0,1]}$ for a $d$-dimensional Brownian motion, and the process
        $(A_t)_{t\in [0,1]}$ with $A_t=\big\{A^{(i,j)}_t\big\}_{1\leq i<j\leq d}$ a $\frac{d(d-1)}{2}$-dimensional vector representing the flattened upper triangle of the L\'evy area matrix associated with the Brownian motion. Unless stated otherwise, we denote by $a$ the dimension of L\'evy area, i.e. $a = \frac{d(d-1)}{2}$.
        \item For any process $(X_t)_{t\in [0,1]}$, $X_t$ denotes the process evaluated at time $t$.
        \item $\P_{(W_t,A_t)}$ for the joint law of Brownian motion and L\'evy area at time $t$. 
        \item $\NN^d(\mu, \sigma^2)$ for a $d$-dimensional Gaussian distribution with independent coordinates, each with mean $\mu$ and variance $\sigma^2$.
        \item $\mathrm{Rad}^d(p)$ for a $d$-dimensional Rademacher random variable with independent entries each of which takes $1$ and $-1$ with probability $p$ and $1-p$ respectively.
        \item $\Phi_X$ for the characteristic function of a random variable $X$.
        \item $W_{s,t}\coloneqq W_t-W_s$ for the increments of a Brownian motion $(W_t)_{t\in [0,1]}$, and we use $W_t=W_{0,t}$ interchangeably.
        \item For any \textit{tensor}, i.e.~an element $\vx \in \R^{\cdot \times d}$, we always denote by $x^{(i)}\in \R^{\cdot}$ the $i$\textsuperscript{th} coordinate of the second dimension.
    \end{enumerate}
\section{The GAN Architecture}
\label{sec:GAN}
The goal of this article is to build an efficient and accurate estimator that approximates the conditional law $\P_{A_t|W_{t}}$, and hence the joint law, $\P_{(W_{t}, A_t)}$, of a Brownian increment and its L\'evy area. Thanks to the scaling property of Brownian motion, it is enough to consider the problem when $t = 1$. \smallbreak
 
 We adopt a conditional GAN (Generative Adversarial Network) approach as proposed by \cite{Mirza2014}. GAN-typed models, initially proposed by \cite{Goodfellow2014} consist of a pair of competing neural networks - the generator and the discriminator. The aim of the generator is to create ``fake'' data $\tilde{\vx}$, from some noise distribution, trying to mimic a target distribution, while the discriminator will be given both $\tilde{\vx}$ and data $\vx$ from the true distribution and will try to distinguish the ground truth between them. The dynamics between the generator and discriminator are controlled by a min-max game acting on a loss function, which usually represents the distance between two distributions. In a conditional GAN, the generator is given not only samples from the noise distribution, but also the conditioning variable -- in our case, this is the Brownian increment $W$, or later the space-time L\'{e}vy area $H$ (see \cref{sec:BridgeFlipping}). Here we provide the description of a classical conditional GAN adapted to our interest.

\begin{definition}[Classical conditional GAN for L\'{e}vy Area generation]
    \label{def:classicalGAN}
    Let $d \geq 2$ be the Brownian dimension, and $a \coloneqq \LAdim$ be the dimension of the associated L\'{e}vy area vector. Assume $z$ is an $n$-dimensional noise vector distributed according to $\P_{z}$. The conditional generator $G_\theta$ and the discriminator $D_{\eta}$ are maps
    \[
        G_\theta: \R^{d}\times\R^n \rightarrow \R^d\times\R^{a}\, \;\; D_\eta: \R^{d}\times\R^a \rightarrow \R,
    \]
   that are parametrized by $\theta$ and $\eta$ respectively. When restricted to the first $d$-coordinates of the domain and co-domain, $G_\theta$ is enforced to be identity, i.e. $G_\theta(w,z)=(w,\tilde{A})$. Let $\P_{(W_1, A_1)}$ be the ground truth distribution of the coupled process. An example loss function might be given by
    \begin{equation*}
        L(\theta, \eta) \coloneqq \E_{(w, A)\sim \P_{(W_{1}, A_1)}} \left[D_{\eta}(w, A)\right]-\E_{\;w \sim\P_{W_1},z\sim \P_{z}}\left[D_{\eta}(G_{\theta}(w ,z))\right],
    \end{equation*}
    where one restricts $D_\eta$ to be at most $1$-Lipschitz. The models are trained using the min-max game $\min_{\theta} \max_{\eta} L(\theta, \eta)$ until convergence. The generator obtained is then used to simulate L\'{e}vy areas.
\end{definition}

Since there are no known methods for exact simulation of L\'evy areas in $d>2$, the ``true'' L\'evy area samples $\vA \sim \P_{A_1|W_{1}}$ must themselves be obtained through approximate simulation. In particular, we might generate the ``true'' L\'{e}vy area samples using a Julia-language package created by \cite{KR}, which complements their paper about approximate strong simulation of L\'evy area through truncated Fourier series methods. We also present the flowchart of this methodology in \cref{fig:Lévy_gan_scheme}.\vspace*{-3mm}
\begin{figure}[H]
    \centering
    \includegraphics[width=0.95\textwidth]{Images/LevyGAN_scheme.png}
    \caption{A schematic of L\'{e}vy generation for classical conditional GAN. Throughout this article, \texttt{bsz} represents the training batch-size.}
    \label{fig:Lévy_gan_scheme}
\end{figure}
The main drawback of this methodology is the need for simulating approximate real samples of L\'{e}vy area. Using real data not only slows the training procedure but also introduces simulation error (due to the truncation in the Fourier series) and finite sample error (leading to overfitting). In order to address this problem, we propose a novel approach, L\'{e}vyGAN, which completely excludes real samples from the training process and is well justified by our two main theoretical contributions \cref{thm:DistrChenUniquenessGeneral} and \cref{thm:chen_err_bd}. 


\section{Generator}
\label{sec:generator}
In order to improve the accuracy of the generated distribution 
we can consider the symmetries of the L\'{e}vy area distribution and hard-code them into the generator itself. This way the generator will consist of both a neural net, and additional operations applied to the network's output. One symmetry of L\'evy area that is desirable to reflect is the fact that its distribution is mean zero when conditioned on any increment $W_{0,1}=w$. We do this through two operations, which combined we term ``bridge-flipping''.

\subsection{Bridge-Flipping}\label{sec:BridgeFlipping}

We would like to hard-code the symmetry of L\'evy area about zero into the generator's architecture. Even though each dimension of L\'evy area $\Aij$ is symmetric about zero, their joint distribution is not invariant to multiplying any individual dimension by $-1$. The dimensions of the underlying Brownian motion, however, are independent, and hence each can be mirrored separately without violating their joint law.\smallbreak

The goal is hence to find a symmetry of L\'evy area corresponding to independently flipping individual dimensions of the Brownian motion. Notice, however, that we are trying to generate L\'evy area conditional on a fixed input $W_{0,1} = w$, and so we do not wish to flip the increment of Brownian motion itself. We can circumvent this issue by considering the polynomial expansion of L\'evy area \cite{FH}, which decomposes the Brownian motion into the components dependent on $w$ and components independent of $w$, the latter of which can then be flipped independently. To this end, we first define the Brownian bridge and its accompanying ``space-time'' and ``space-space'' L\'{e}vy areas.
\begin{definition}[Brownian bridge]
\label{def:BBridge}
Let $0 \leq s \leq t < \infty$. Then the Brownian bridge of $W$ on $[s,t]$ returning to zero at time $t$ is defined as
\[
B_{s,u} \coloneqq W_{s,u} - \frac{u-s}{t-s} W_{s,t} \;\;\;\; \text{for } \; u \in [s,t].
\]
The ``space-time'' L\'{e}vy area $H_{s,t} \in \reals^d$ and ``space-space'' L\'{e}vy area $b_{s,t} \in \reals^{d \times d}$ of $B$ over $[s,t]$ are
\begin{align}
    H^{(i)}_{s,t} &\coloneqq \frac{1}{t-s} \int_s^t \Bi_{s,u} \, du = \frac{1}{t-s} \int_s^t \Wi_{s,u} - \frac{u-s}{t-s} \Wi_{s,t} \, du \;\;\; \text{for } 1 \leq i \leq d, \\[1mm]
    b^{(i,j)}_{s,t} &\coloneqq \int_s^t B_{s,u}^{(i)} \,\circ dB_u^{(j)} \;\;\; \text{for } 1 \leq i,j \leq d.
\end{align}

Whenever $s=t$, we define $b_{s,t}=H_{s,t}=0$. We write $H_t$ and $b_t$ for $H_{0,t}$ and $b_{0,t}$ respectively.
\end{definition}

It turns out that $(H, b)$ and $W_{s,t}$ are independent, the marginal distribution of $H$ is Gaussian, and the marginal of $b$ is logistic.

\begin{proposition}[Distribution of Brownian bridge L\'{e}vy area \cite{foster2020b,FosterThesis}]\label{prop:BBLAdistn}
    For fixed $0 \leq s < t < \infty,$ the process $\{(H_{s,u}, b_{s,u})\}_{u \in [s,t]}$ and the increment $W_{s,t}$ are independent. Furthermore, $H$ is distributed as a $d$-dimensional Gaussian with independent coordinates and the marginal distribution of each Brownian bridge L\'evy area is logistic:
    \[
    H_{s,t} \sim \normal^d \left( 0, \frac{1}{12} (t-s) \right)\quad\text{and}\quad b_{s,t}^{(i,j)}\sim\mathrm{Logistic}\left(0,\frac{1}{2\pi}(t-s)\right).
    \]
\end{proposition}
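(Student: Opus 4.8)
The statement bundles three claims: independence of the bridge-area process from the endpoint increment, Gaussianity of $H$, and the logistic law of each $b^{(i,j)}$. The plan is to dispatch the first two by elementary Gaussian arguments and to reduce the third to a characteristic-function computation. Throughout I would first use Brownian scaling to reduce to $[s,t]=[0,1]$: writing $u=s+(t-s)r$, the rescaled process $\tilde B_r\coloneqq(t-s)^{-1/2}B_{s,\,s+(t-s)r}$ is a standard bridge on $[0,1]$, under which $H_{s,t}=(t-s)^{1/2}\tilde H$ and $b_{s,t}=(t-s)\tilde b$. Since both the Gaussian variance and the logistic scale parameter are homogeneous of degree one in $(t-s)$, the general statement follows from the unit-interval case.

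For \emph{independence}: $B_{s,u}$ and $W_{s,t}$ are jointly Gaussian (linear functionals of $W$), and a one-line computation gives $\mathrm{Cov}(B^{(i)}_{s,u},W^{(j)}_{s,t})=0$ for all $u,i,j$ (for $i=j$, $\mathrm{Cov}(W_{s,u},W_{s,t})-\tfrac{u-s}{t-s}(t-s)=(u-s)-(u-s)=0$), so the whole path $B_{s,\cdot}$ is independent of $W_{s,t}$. As $H$ and $b$ are measurable functionals of the bridge path alone (the Stratonovich integral being an a.s.\ limit of Riemann--Stratonovich sums evaluated along $B$), they too are independent of $W_{s,t}$. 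For $H$: each $H^{(i)}=\int_0^1 B^{(i)}_u\,du$ is a linear functional of a centred Gaussian process, hence centred Gaussian; the coordinates are independent because the coordinate bridges are; and $\mathrm{Var}(H^{(i)})=\int_0^1\!\!\int_0^1(\min(u,v)-uv)\,du\,dv=\tfrac13-\tfrac14=\tfrac1{12}$.

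For $b^{(i,j)}$ I would target its characteristic function and match it to the logistic one, $\Phi(\lambda)=\frac{\lambda/2}{\sinh(\lambda/2)}$, which is the characteristic function of $\mathrm{Logistic}(0,\tfrac1{2\pi})$. First, since the bridge is pinned at both ends, $b^{(i,j)}+b^{(j,i)}=\int_0^1 \circ\, d(B^{(i)}B^{(j)})=0$, so $b^{(i,j)}$ equals the antisymmetric L\'evy area of the bridge. I see two routes. \emph{Route A} (Fourier/KL): expand $B^{(i)}_u=\sum_n\frac{\sqrt2}{n\pi}a_n\sin(n\pi u)$ and likewise $B^{(j)}$ with independent standard Gaussian coefficients; termwise integration yields $b^{(i,j)}=\frac{4}{\pi^2}\sum_{n+m\text{ odd}}\frac{a_nb_m}{n^2-m^2}=a^\top M b$, where the coefficient operator $M$ is antisymmetric. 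Antisymmetry forces the singular values of $M$ to occur in equal pairs, so conditioning on one Gaussian vector gives $\Phi(\lambda)=\prod_k(1+\lambda^2\sigma_k^2)^{-1}$; it then remains to show $\sigma_k=\frac{1}{2\pi k}$, after which the factorisation $\sinh x=x\prod_{k\ge1}\left(1+\frac{x^2}{k^2\pi^2}\right)$ delivers exactly $\frac{\lambda/2}{\sinh(\lambda/2)}$. \emph{Route B} (conditioning): fixing $B^{(i)}$ and integrating against $dB^{(j)}$ makes $b^{(i,j)}$ conditionally Gaussian with variance $\int_0^1(B^{(i)}_u)^2\,du-\left(\int_0^1 B^{(i)}_u\,du\right)^2$, so $\Phi(\lambda)=\E\left[\exp\left(-\tfrac{\lambda^2}{2}\int_0^1(B_u-\bar B)^2\,du\right)\right]$ for a single bridge $B$, a Gaussian quadratic-functional expectation evaluable by a Gelfand--Yaglom/Mercer computation.

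The \textbf{main obstacle} in both routes is the same spectral identification: pinning down the eigenvalues $(2\pi k)^{-2}$, with their multiplicity-two structure, of the relevant operator. In Route A this is the spectrum of the antisymmetric band operator $M$ (equivalently $M^\top M$); in Route B it is the spectrum of the rank-one perturbation $\mathrm{diag}((n\pi)^{-2})-cc^\top$, which I would resolve through its secular equation $\sum_k c_{2k-1}^2/(d_{2k-1}-\mu)=1$ together with a cotangent/partial-fraction identity showing the odd eigenvalues are shifted onto the even grid $(2\pi k)^{-2}$. Once the spectrum is in hand, the product formula for $\sinh$ completes the identification with the logistic characteristic function, and uniqueness of characteristic functions yields $b^{(i,j)}\sim\mathrm{Logistic}(0,\tfrac1{2\pi})$. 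I would also keep an eye on the analytic bookkeeping --- convergence of the Fourier double sum and legitimacy of termwise Stratonovich integration in Route A, or the trace-class/Fredholm justification in Route B --- but expect these to be routine relative to the spectral step.
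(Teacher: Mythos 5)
The paper does not prove this proposition at all --- it is imported verbatim from \cite{foster2020b,FosterThesis} --- so there is no in-paper argument to measure you against; I can only judge the proposal on its own terms. The parts you actually complete are correct. The scaling reduction to $[0,1]$ is legitimate because both the Gaussian variance and the logistic scale are degree-one homogeneous in $t-s$. The independence argument (vanishing covariance between the jointly Gaussian path $B_{s,\cdot}$ and the increment $W_{s,t}$, plus measurability of $H$ and $b$ with respect to the bridge alone) is the standard one, and the variance computation $\int_0^1\!\int_0^1(\min(u,v)-uv)\,du\,dv=\tfrac{1}{3}-\tfrac{1}{4}=\tfrac{1}{12}$ is right. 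The observation that $b^{(i,j)}+b^{(j,i)}=0$ because the bridge is pinned at both ends, so that $b^{(i,j)}$ coincides with the antisymmetric L\'evy area of the bridge, is also correct and is a necessary normalisation before either of your two routes can start. Your Fourier coefficient $\tfrac{4}{\pi^2(n^2-m^2)}$ for $n+m$ odd checks out, as does the multiplicity-two structure forced by antisymmetry of $M$.

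The gap is exactly the step you flag yourself: identifying the singular values of $M$ (Route A), equivalently the spectrum of $\mathrm{diag}((n\pi)^{-2})-cc^{\top}$ (Route B), as $\{(2\pi k)^{-1}\}_{k\geq 1}$, each with multiplicity two. That identification \emph{is} the logistic claim --- everything surrounding it is bookkeeping --- so as written the proposal proves the first two assertions and only reduces the third to an unproven spectral fact. The fact is true and your secular-equation plan does close it, but it is genuine work, not routine. A cleaner finish that stays inside the paper's own toolbox: by \cref{prop:PolyExpansion}, $b_{0,1}\eqd(A_{0,1}\mid W_{0,1}=0)$, and the conditional characteristic function of $A_{0,1}$ given $W_{0,1}=x$ is L\'evy's classical formula, which at $x=0$ equals $\frac{\lambda/2}{\sinh(\lambda/2)}$, the characteristic function of $\mathrm{Logistic}(0,\tfrac{1}{2\pi})$. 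This conditional formula can be extracted from the joint characteristic function in \cref{main_theorem_into} by a single Gaussian integral in $\mu$: for $d=2$ one computes
\begin{equation*}
\E\big[e^{i\lambda A_{0,1}}\mid W_{0,1}=0\big]
=\frac{1}{p_{W_{0,1}}(0)}\cdot\frac{1}{(2\pi)^2}\int_{\R^2}\frac{1}{\cosh(\lambda/2)}\exp\Big(-\tfrac{\tanh(\lambda/2)}{\lambda}\,|\mu|^2\Big)\,d\mu
=\frac{\lambda/2}{\sinh(\lambda/2)},
\end{equation*}
which bypasses the spectral step entirely and is essentially how the cited references argue.
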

This yields the first two terms of the polynomial expansion of L\'{e}vy area.\medbreak
\begin{proposition}[Polynomial expansion of L\'{e}vy area \cite{FH}]\label{prop:PolyExpansion}
    The L\'evy area of a $d$-dimensional Brownian motion $W$ has the following decomposition:
    \begin{equation*}
        A_{s,t} = H_{s,t} \otimes W_{s,t} - W_{s,t} \otimes H_{s,t} + b_{s,t},
    \end{equation*}
    where $\otimes$ denotes the outer-product of vectors.
\end{proposition}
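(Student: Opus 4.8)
The plan is to verify the identity componentwise, i.e.\ to show $A_{s,t}^{(i,j)} = H_{s,t}^{(i)}W_{s,t}^{(j)} - W_{s,t}^{(i)}H_{s,t}^{(j)} + b_{s,t}^{(i,j)}$ for each pair $i \neq j$, the diagonal being trivial since $A^{(i,i)}=0$ and, by Stratonovich integration by parts together with $B_{s,s}=B_{s,t}=0$, also $b^{(i,i)}=0$. The same integration by parts gives $b^{(i,j)}=-b^{(j,i)}$, so $b_{s,t}$ is antisymmetric and the right-hand side is a genuine antisymmetric matrix matching the tensor form $H_{s,t}\otimes W_{s,t} - W_{s,t}\otimes H_{s,t} + b_{s,t}$. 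The first step is to rewrite the Itô integrals defining $A^{(i,j)}$ as Stratonovich integrals: since $i\neq j$ the coordinates $W^{(i)}$ and $W^{(j)}$ are independent, their cross-variation vanishes, and the Itô--Stratonovich correction is zero, so $A_{s,t}^{(i,j)} = \tfrac12\big(\int_s^t W^{(i)}_{s,u}\circ dW^{(j)}_u - \int_s^t W^{(j)}_{s,u}\circ dW^{(i)}_u\big)$.

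Next I would substitute the bridge decomposition $W_{s,u} = B_{s,u} + \frac{u-s}{t-s}W_{s,t}$ into both the integrand and the differential and expand $\int_s^t W^{(i)}_{s,u}\circ dW^{(j)}_u$ by bilinearity into four Stratonovich integrals. The first, $\int_s^t B^{(i)}_{s,u}\circ dB^{(j)}_u$, is by definition $b^{(i,j)}_{s,t}$. The second is $\frac{W^{(j)}_{s,t}}{t-s}\int_s^t B^{(i)}_{s,u}\,du = W^{(j)}_{s,t}H^{(i)}_{s,t}$ by the definition of $H$. For the third, $\frac{W^{(i)}_{s,t}}{t-s}\int_s^t (u-s)\,dB^{(j)}_u$ (Stratonovich coincides with Itô here as the integrand is deterministic), integration by parts together with $B^{(j)}_{s,t}=0$ gives $\int_s^t (u-s)\,dB^{(j)}_u = -(t-s)H^{(j)}_{s,t}$, so this term equals $-W^{(i)}_{s,t}H^{(j)}_{s,t}$. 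The fourth is the deterministic $\frac{W^{(i)}_{s,t}W^{(j)}_{s,t}}{(t-s)^2}\int_s^t (u-s)\,du = \tfrac12 W^{(i)}_{s,t}W^{(j)}_{s,t}$.

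Collecting, $\int_s^t W^{(i)}_{s,u}\circ dW^{(j)}_u = b^{(i,j)}_{s,t} + W^{(j)}_{s,t}H^{(i)}_{s,t} - W^{(i)}_{s,t}H^{(j)}_{s,t} + \tfrac12 W^{(i)}_{s,t}W^{(j)}_{s,t}$. Subtracting the $(j,i)$ analogue and dividing by two, the symmetric term $\tfrac12 W^{(i)}W^{(j)}$ cancels while the remaining antisymmetric terms double; using $b^{(i,j)}=-b^{(j,i)}$ one obtains exactly $A^{(i,j)}_{s,t} = H^{(i)}_{s,t}W^{(j)}_{s,t} - W^{(i)}_{s,t}H^{(j)}_{s,t} + b^{(i,j)}_{s,t}$, which is the claimed identity.

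The main subtlety, and the step I would treat most carefully, is that the bridge $B_{s,u}$ is not adapted to the Brownian filtration, since it contains the terminal variable $W_{s,t}$; consequently the integrals $\int B^{(i)}\circ dB^{(j)}$ are anticipating and the bilinearity and integration-by-parts manipulations above are not immediately justified by standard Itô theory. I would resolve this either by conditioning on $W_{s,t}=w$ and treating $B$ as a genuine adapted Brownian bridge in its own filtration, for which all the Stratonovich calculus rules hold, or by interpreting every integral pathwise via a Wong--Zakai / smooth-approximation scheme, under which the Stratonovich integral obeys the ordinary Leibniz rule and the algebraic identity is stable in the limit. Once this justification is in place, the remaining computation is routine.
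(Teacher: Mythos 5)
Your proof is correct. The paper itself offers no argument for this proposition --- it is stated as a cited result from \cite{FH} --- so your componentwise computation supplies a self-contained derivation rather than a variant of one in the text. The algebra checks out: converting $A^{(i,j)}_{s,t}$ to Stratonovich form (valid for $i\neq j$ since the cross-variation vanishes), substituting $W_{s,u}=B_{s,u}+\tfrac{u-s}{t-s}W_{s,t}$ into both integrand and integrator, and expanding gives the four terms you list; the symmetric piece $\tfrac12 W^{(i)}_{s,t}W^{(j)}_{s,t}$ cancels under antisymmetrisation, the $H$-terms double, and the antisymmetry $b^{(i,j)}_{s,t}=-b^{(j,i)}_{s,t}$ (from the Stratonovich product rule and $B_{s,s}=B_{s,t}=0$) turns $\tfrac12\bigl(b^{(i,j)}_{s,t}-b^{(j,i)}_{s,t}\bigr)$ into $b^{(i,j)}_{s,t}$, yielding exactly $A^{(i,j)}_{s,t}=H^{(i)}_{s,t}W^{(j)}_{s,t}-W^{(i)}_{s,t}H^{(j)}_{s,t}+b^{(i,j)}_{s,t}$. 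You are also right to single out the one genuine subtlety: $B_{s,\cdot}$ anticipates $W_{s,t}$, so the integrals against $dB^{(j)}$ are not covered by naive It\^o theory. Either of your two fixes works; perhaps the cleanest is to note that the only anticipating factor appearing in the offending integrals is $W^{(i)}_{s,t}$ multiplying a deterministic integrand $\tfrac{u-s}{t-s}$ integrated against the independent coordinate $W^{(j)}$, so it may simply be pulled outside the (ordinary) Wiener integral; alternatively the Wong--Zakai limit argument you sketch is the route taken in the rough-path literature. With that justification in place the proof is complete.
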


This decomposition reduces the conditional generative task to the estimation of the Brownian bridge L\'evy area conditional on the Brownian increment, where the target distribution and conditioning variable are independent. This approach may be generalised to the estimation of the tail sum of the polynomial expansion of L\'evy area truncated at a higher level. Since $B$ and the increment $W_{0,1} = w$ are independent, the conditional distribution $ \left( B_t \mid W_{0,1} = w \right) $ is symmetric around 0. Furthermore, each dimension of $B$ is its own independent process, so we can flip each individually without affecting the distribution, as established in the following lemma. 
\begin{lemma}\label{lem:BridgeLAFlip}
    Let $W$ be a $d$-dimensional Brownian motion on $[0,1]$ and let $B, H, b$ be the corresponding derived processes from \cref{def:BBridge}. Fix some $\xi \in \{-1, 1 \}^d $, and let $H^\prime$, and $b^\prime$ be the space-time and space-space L\'evy area processes associated with the process $\xi \odot B = \{\xi \odot B_t\}_{t\in[0,t]}$, where $\odot$ denotes the Hadamard (coordinate-wise) product. Then
    \[
         \{ B_t \}_{t \in [0,1]} \eqd \{ \xi \odot B_t \}_{t \in [0,1]}, \;\;  H^\prime = \xi \odot H, \;\; b^\prime = (\xi \otimes \xi) \odot b, \;\; \text{ and } \;\; (H^\prime,b^\prime)\eqd (H, b).
    \]
\end{lemma}
We also include a multiplication of a final independent random variable $\xi_0\sim\mathrm{Rad}(\tfrac{1}{2})$, whose role is explained in \cref{prop:moments}. Combining this with \cref{lem:BridgeLAFlip} we obtain the ``bridge-flipping'' function.
\begin{definition}
Let $w, h, \xi \in \R^d$, $b\in \R^{d\times d}$ and $\xi_0\in \R$. Then the Bridge-flipping function is defined as
\begin{equation}
\label{eqn:BF}
\BFf{ w, h, b, \xi_0, \xi } \coloneqq \xi_0 \left( \left( \xi \odot h \right) \otimes w - w \otimes \left( \xi \odot h \right) + (\xi \otimes \xi) \odot b \right),
\end{equation}
\end{definition}
for which we have the following as a consequence of \cref{prop:PolyExpansion}, \cref{lem:BridgeLAFlip}.
\begin{theorem}[Bridge-flipping]
    \label{thm:Bridge_flipping}
    Let $\,\xi_0, \ldots, \xi_d\stackrel{\textrm{i.i.d.}}{\sim}\mathrm{Rad}(\tfrac{1}{2}) $ be random variables so that $W$, $(H, b)$, and $(\xi_0,\dots,\xi_d)$ are independent. Write $\xi = (\xi_1, \ldots, \xi_d)$ and fix some $w \in \Rd$. Let $H, b$ be as in \cref{def:BBridge}. Then for every $t\in [0,1]$
    \[
     \big(A_{0,t}\mid W_{0,1} = w\big)\eqd \big({\BFf{ W_{0,t}, H_{0,t}, b_{0,t}, \xi_0, \xi }\mid W_{0,1} = w }\big).
    \]
    For our purpose, we will utilise the result for $t=1$:
    \[
         \big({A_{0,1} \mid W_{0,1} = w}\big) \; \eqd \; \BFf{ w, H_{0,1}, b_{0,1}, \xi_0, \xi }.
    \]
\end{theorem}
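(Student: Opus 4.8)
The plan is to reduce the Lévy area to its polynomial expansion and then to dispose of the two flips—by $\xi$ and by $\xi_0$—separately. By \cref{prop:PolyExpansion} we have $A_{0,t} = H_{0,t}\otimes W_{0,t} - W_{0,t}\otimes H_{0,t} + b_{0,t}$, and $\BFf{ W_{0,t}, H_{0,t}, b_{0,t}, \xi_0, \xi }$ is exactly $\xi_0$ times this same bilinear form but with $H_{0,t}$ replaced by $\xi\odot H_{0,t}$ and $b_{0,t}$ by $(\xi\otimes\xi)\odot b_{0,t}$. So the task splits into showing that (i) replacing $(H_{0,t},b_{0,t})$ by $(\xi\odot H_{0,t},(\xi\otimes\xi)\odot b_{0,t})$ does not change the relevant conditional law, and (ii) the global sign flip by $\xi_0$ does not change it either.

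I would establish the final ($t=1$) statement first, since then the conditioning value $w=W_{0,1}$ coincides with the argument fed into the bilinear form. Fix $w$. By \cref{prop:BBLAdistn} the pair $(H_{0,1},b_{0,1})$ is independent of $W_{0,1}$, so conditioning on $W_{0,1}=w$ leaves its law untouched and
\[
\big(A_{0,1}\mid W_{0,1}=w\big)\eqd H_{0,1}\otimes w - w\otimes H_{0,1} + b_{0,1}=:Y.
\]
For (i), condition on a fixed value of $\xi$: \cref{lem:BridgeLAFlip} gives $(\xi\odot H_{0,1},(\xi\otimes\xi)\odot b_{0,1})\eqd(H_{0,1},b_{0,1})$, and since $Y$ is a fixed function of $(H_{0,1},b_{0,1})$ once $w$ is fixed, the flipped expression equals $Y$ in law for every such $\xi$; as $\xi$ is independent of $(H_{0,1},b_{0,1})$, averaging over $\xi$ preserves this. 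For (ii) it then suffices that $Y\eqd -Y$, since $\xi_0$ is an independent sign and a symmetric-about-zero law is invariant under multiplication by it.

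The symmetry $Y\eqd -Y$ is the crux, and I expect it to be the main obstacle, since the coordinate flips of \cref{lem:RademacherBridge} alone cannot flip the sign of $b$. The clean route is to consider the map $B_u\mapsto -B_{t-u}$, which sends a Brownian bridge on $[0,t]$ to another bridge with the same law; a direct computation gives $H\mapsto -H$ (from the negation) and $b^{(i,j)}\mapsto -b^{(i,j)}$ (the two negations cancel inside the Stratonovich integral, while the reversed parametrisation supplies the sign), whence $(H,b)\eqd(-H,-b)$ and therefore $Y = H\otimes w - w\otimes H + b \eqd -Y$. Combining (i) and (ii) yields $\BFf{ w, H_{0,1}, b_{0,1}, \xi_0, \xi }\eqd Y\eqd\big(A_{0,1}\mid W_{0,1}=w\big)$, which is the final statement.

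For the general $t\in[0,1]$ claim I would insert one extra conditioning layer. Here $(H_{0,t},b_{0,t})$ are functionals of the bridge $B$ on $[0,t]$, which is independent of both $W_{0,t}$ and $W_{t,1}$, hence of $W_{0,1}$; moreover, given $W_{0,t}=v$ the area $A_{0,t}$ is conditionally independent of $W_{t,1}$, and so of $W_{0,1}$. Thus $\P_{A_{0,t}\mid W_{0,1}=w}$ is the mixture of $\P_{A_{0,t}\mid W_{0,t}=v}$ over $v\sim\P_{W_{0,t}\mid W_{0,1}=w}$, and the same mixture describes the bridge-flipped expression once we condition on $W_{0,t}=v$. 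Applying the $t=1$ argument slicewise with $v$ in place of $w$—the symmetry-about-zero of $A_{0,t}\mid W_{0,t}=v$ coming from the same bridge time-reversal—and integrating over $v$ completes the proof.
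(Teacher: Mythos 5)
Your proof is correct, and its skeleton—polynomial expansion (\cref{prop:PolyExpansion}), independence of $(H,b)$ from the increment (\cref{prop:BBLAdistn}), and the flip lemmas \cref{lem:RademacherBridge} and \cref{lem:BridgeLAFlip} to dispose of $\xi$—is exactly the route the paper takes; the paper's proof is a one-line citation of precisely these ingredients. Where you genuinely add something is the treatment of $\xi_0$. You are right that this is the crux that the cited lemmas do not cover: taking $\xi=-\mathbf{1}$ in \cref{lem:BridgeLAFlip} flips $H$ but leaves $b$ fixed, so the required symmetry $Y\eqd -Y$ of $Y = H_{0,1}\otimes w - w\otimes H_{0,1} + b_{0,1}$ does not follow from coordinate flips alone. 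Your time-reversal argument $B_u\mapsto -B_{t-u}$, giving $(H,b)\eqd(-H,-b)$ pathwise, is a clean and self-contained way to close this gap; the paper instead implicitly defers this point to \cref{prop:moments} (whose item (1) is proved in the appendix via the analytic characteristic function of \cref{main_theorem_into}). So your argument is more elementary at that step and makes the theorem's proof independent of the characteristic-function machinery. Your handling of general $t\in[0,1]$ by disintegrating over $v\sim\P_{W_{0,t}\mid W_{0,1}=w}$, using that $A_{0,t}$ is $\sigma\big((W_u)_{u\le t}\big)$-measurable and hence conditionally independent of $W_{t,1}$, is also correct and is a detail the paper leaves entirely implicit.
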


Recall that $H_{s,t} \sim \normaldof{d}{ 0,  \tfrac{1}{12} (t-s) }$, and that $b$ and $H$ are correlated, but that $(H, b)$ and $W_{0,1}$ are independent. Hence, given a neural net $\NNet_{\theta} : \R^{d+n} \rightarrow \Rd$, we define the ``Bridge-flipping generation":

\begin{algorithm}[H]
\caption{Area generation using Bridge-flipping}\label{alg:BFGen}

\textbf{Input:} $\theta$ - neural net parameters, $d$ - Brownian dimension, $w$ - Brownian increment, $n$ - dimension of noise vector
\begin{algorithmic}[1]
\State $\xi_0 \gets \Radhalf; \;\; \xi \gets \Radhalfd{d}$
\State $h \gets \normaldof{d}{ 0,  \tfrac{1}{12} (t-s) } ; \;\; z \gets \normaldof{n}{0,1}$
\State $\tb \gets \NNet_{\theta} (h , z)$  \Comment{we want $\NNet_{\theta} (h , z) \, \overset{d}{\approx} \, \left( b_{0,1} \mid H_{1}=h \right)$}
\State \Return $\operatorname{BF}\hspace{-0.6mm}\big(w, h, \tb, \xi_0, \xi\big)$
\end{algorithmic}
\end{algorithm}

This construction has some desirable properties.
\begin{enumerate}[label=\arabic*)]
    \item Informally speaking, the use of $\xi$ effectively makes the generator behave identically on all orthants of $\Rd$, and hence any learning done in one orthant transfers equally to the other orthants. This speeds up training and significantly improves the generator's accuracy, as it now perfectly mimics the symmetric structure of L\'evy area.
    \item The neural net can be trained directly on the distribution of $\left( b_{0,1} \mid H_{0,1}=h \right)$, and is then used for generation of $(A_{0,1} | W_{0,1} = w)$ using the BF algorithm.
    \item The structure of BF allows for efficient implementation of back-propagation.\smallbreak
\end{enumerate}

The use of the extra Rademacher random variable $\xi_0$ is to guarantee that all odd joint and conditional moments are correctly matched. This is summarised in \cref{prop:moments}.
Having $\tA$ unbiased means that some of the usual error analysis from stochastic numerics can be applied, such as in the proof of \cref{thm:DistrChenUniquenessGeneral}, where one of the requirements is that $\tA$ be unbiased. Recall that the Milstein scheme requires a subroutine which generates samples of L\'{e}vy area given a Brownian increment. Since $\tA \sim \Pbf$ is unbiased, one can establish theoretical guarantees on the convergence of Milstein's method with the BF generator as this subroutine. The following result can be proven by applying \cite[Theorem 1.1]{Milstein2004} with $p_1 = \tfrac{3}{2}, \; p_2 = 1$.

\begin{proposition}
    \label{prop:MilsteinConvBF}
    Given a time horizon $T > 0$, and a step size $h = \frac{T}{N}$ where $N\geq 1$ is the number of steps, let $\{\tX_n\}_{n \in \{0,\cdots, N\}}$ be the output of Milstein's scheme (see \cref{sec:milstein} for the definition) applied to the SDE
    \[dX_t = \mu(X_t, t)\m dt + \sigma(X_t, t)\m dW_t.\]
    where the L\'evy areas provided as input to Milstein's scheme were generated by the BF generator. Then if $\mu, \sigma$ are continuous and globally Lipschitz, there exists a constant $C > 0$ such that for sufficiently small $h$
    \[
    \sup_{0\m\leq\m n\m\leq N} {\E\Big[\big| \tX_n - X_{nh} \big|^2}\Big]^{\frac{1}{2}} \leq C h^{\frac{1}{2}}.
    \]
\end{proposition}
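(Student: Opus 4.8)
The plan is to verify the hypotheses of the fundamental mean-square convergence theorem of Milstein and Tretyakov, \cite[Theorem 1.1]{Milstein2004}, with the one-step orders $p_1 = \tfrac{3}{2}$ and $p_2 = 1$, from which the global rate $h^{p_2 - 1/2} = h^{1/2}$ follows at once. That theorem reduces a global mean-square estimate to two local (one-step) bounds on the approximation started from a fixed point $x$ at time $t$: a bound of order $p_1$ on the bias $\bigl|\mathbb{E}[X_{t,x}(t+h) - \tX^{\mathrm{BF}}]\bigr|$, and a bound of order $p_2$ on the root-mean-square one-step error $\bigl(\mathbb{E}\,|X_{t,x}(t+h) - \tX^{\mathrm{BF}}|^2\bigr)^{1/2}$, both with a constant of the form $C(1+|x|^2)^{1/2}$. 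The smoothness and linear-growth assumptions on $\mu,\sigma$ supply the auxiliary requirements of the theorem (finite moments of the scheme, uniform-in-$x$ constants), so the real work is in the two local estimates.

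The key device is to compare the BF-Milstein step not directly with the true solution but with the exact-L\'evy-area Milstein step over the same interval. Writing $\tX^{\mathrm{BF}}$ for the one-step map fed the BF sample $\tA$ and $\tX^{\mathrm{ex}}$ for the same map fed the true L\'evy area $A_{t,t+h}$ of the driving path, I would decompose
\begin{equation*}
    X_{t,x}(t+h) - \tX^{\mathrm{BF}} = \bigl(X_{t,x}(t+h) - \tX^{\mathrm{ex}}\bigr) + \bigl(\tX^{\mathrm{ex}} - \tX^{\mathrm{BF}}\bigr).
\end{equation*}
The first bracket is the classical one-step error of Milstein's method with exact iterated integrals, which has bias of order $h^2$ and rms error of order $h^{3/2}$ under the stated regularity. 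In the second bracket, the L\'evy area enters the scheme linearly with a coefficient (a Lie bracket of the diffusion vector fields) evaluated at the deterministic point $x$, so that
\begin{equation*}
    \tX^{\mathrm{ex}} - \tX^{\mathrm{BF}} = \sum_{1\leq i<j\leq d} c_{ij}(x)\,\bigl(A^{(i,j)}_{t,t+h} - \tA^{(i,j)}\bigr),
\end{equation*}
with each $c_{ij}(x)$ deterministic and, by linear growth, bounded by $C(1+|x|)$.

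Two facts about the BF sample then close the argument. For the bias, note that $c_{ij}(x)$ is deterministic while both L\'evy areas are unbiased, $\mathbb{E}[A^{(i,j)}_{t,t+h}] = \mathbb{E}[\tA^{(i,j)}] = 0$ (the latter being the single-factor odd-moment identity of \cref{prop:moments}); hence the second bracket has mean zero, the bias of the BF step equals that of the exact step, i.e. $O(h^2)$, and $p_1 = \tfrac{3}{2}$ holds with room to spare. For the mean-square error, Brownian scaling gives $A_{t,t+h} \eqd h\,A_{0,1}$, so $\|A^{(i,j)}_{t,t+h}\|_{L^2} = O(h)$, and the BF sample obeys the same scaling by construction; by the triangle inequality $\|A^{(i,j)}_{t,t+h} - \tA^{(i,j)}\|_{L^2} = O(h)$, so the second bracket contributes $O(h)$ in $L^2$. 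Combined with the $O(h^{3/2})$ rms error of the exact step this yields $p_2 = 1$, and feeding $p_1 = \tfrac{3}{2},\, p_2 = 1$ into \cite[Theorem 1.1]{Milstein2004} produces the claimed $O(h^{1/2})$ rate.

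The main obstacle is less any single estimate than the bookkeeping needed to invoke the abstract theorem cleanly: one must confirm that the BF samples across steps are drawn conditionally independently given the increments, so that the scheme is a genuine (inhomogeneous) Markov chain, and that it has the finite moments the theorem presumes — both following from the boundedness of the Rademacher and Gaussian ingredients of the BF generator together with the linear-growth hypothesis on $\mu,\sigma$. It is worth stressing that the crude bound $\|A - \tA\|_{L^2} = O(h)$ is in fact sharp: since $\tA$ matches only the conditional \emph{law} of $A$ and not its pathwise value, $p_2$ cannot be pushed beyond $1$, which is precisely why the distributional BF approximation recovers the Clark--Cameron barrier $O(\sqrt{h}\,)$ and no better.
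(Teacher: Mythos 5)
Your proposal is correct and follows exactly the route the paper intends: the paper's entire justification is the single sentence preceding the proposition, namely that the result follows from \cite[Theorem 1.1]{Milstein2004} with $p_1 = \tfrac{3}{2}$, $p_2 = 1$, using the unbiasedness of $\tA \sim \Pbf$ established in \cref{prop:moments}. Your decomposition into the exact-area Milstein step plus the linear-in-$(A-\tA)$ correction, with the bias killed by unbiasedness and the $L^2$ term bounded by Brownian scaling, is precisely the detail the paper leaves implicit.
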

\begin{remark}
    Although \cref{thm:Bridge_flipping} and \cref{alg:BFGen} leads to the generation of $(A_{0,1} | W_{0,1} = w)$, we emphasize that it can be generalized to the generation of $(A_{0,t} | W_{0,1} = w)$ for any $t\in [0,1]$ by using the scaling property of Brownian motion and its L\'evy area. In particular, given $w$ we can:\smallbreak
    \begin{enumerate}[label = \arabic*)]
        \item sample $w'$ from the distribution of $\big(W_t | W_1 = w)$, i.e.~$w' \sim \mathcal{N} \big( t \, w, \: t (1-t) \big)$.
        \item rescale $w'' = \frac{w'}{\sqrt{t}}$ and sample $a'' \sim \big(A_{0,1}\mid W_1 = w''\big)$.
        \item finally rescale again $a' = t \, a''$ which has the desired distribution $a' \sim \big(A_{0,t}\mid W_1 = w\big)$.
    \end{enumerate}
\end{remark}

\subsection{The Pair-net Generator}
\label{sec:PairNet}


It is clear that for any $1 \leq i,j \leq d$, both $\bij_{1}$ and $\Aij_{1}$ depend only on the paths of $\{\Wi_t\}_{t\in[0,1]}$ and $\{\Wj_t\}_{t\in[0,1]}$, but not on $\{W^{(k)}_t\}_{t\in[0,1]}$ for $k\notin\{i,j\}$. This dependency structure can be well described using graphs, encouraging us to employ model architectures reminiscent of Graph Neural Networks (GNN) \cite{GNN}.\smallbreak

Consider a clique on $d$ nodes, where each node corresponds to one dimension of the Brownian motion, and each edge $(i,j)$ is associated to $\Aij$ (or $\bij$). Unlike GNNs, where it might be desirable for information to propagate throughout the entire graph, we want edge $(i,j)$ to never see information at node $k \not\in \{i,j\}$. Hence, our architecture should function like a 1-step GNN with edge-wise outputs.



So that $\Aij$ only depends on $\Wi$ and $\Wj$, we generate a separate noise vector for each dimension of Brownian motion. We can interpret this as some embedding of the entire path $\{ \Wi_t \}_{t \in [0,1]} \mapsto \mathrm{noise}^{(i)} $, but in practice we use Gaussian noise.

\begin{definition}[Pair-net]
\label{def:pairnet}
    The Pair-net is defined as the mapping:
    \begin{align*}
        \text{PairNN}_{\theta} : (\mathbb{R},\mathcal{Z}) \times (\mathbb{R},\mathcal{Z}) \to \mathbb{R}; \;\;\;\;
        (H, Z) \times (H', Z') \mapsto \tvb,
    \end{align*}
    where $\mathcal{Z}$ be the space of latent noise. Consider $H_{0,1}$ and $b_{0,1}$ from \cref{def:BBridge}. We approximate $\mathbb{P}_{b_{0,1}\mid H_{0,1} = h}$ via 
     \begin{equation*}
         \tvb^{(i,j)} = \text{PairNN}_{\theta}((h^{(i)}, \vz^{(i)}), (h^{(j)}, \vz^{(j)})),\quad 1\leq i<j\leq d
     \end{equation*}
    for any $\vz \in \mathcal{Z}^d$. In practice, we choose $\mathcal{Z}$ to be $\R^n$ and we let $\vz^{(i)}$ be a $n$-dimensional Gaussian noise for $1\leq i\leq d$. Combining this with bridge-flipping, we describe the generation algorithm and flowchart in \cref{alg:PairNet} and \cref{fig:pair_net_scheme}.
\begin{algorithm}[H]
\caption{Area generation using Pair-net generator and Bridge-flipping}\label{alg:PairNet}

\textbf{Input:} $\theta$ - neural net parameters, $d$ - dimension of Brownian motion, $w$ - Brownian increment, $n$ - dimension of noise vector associated with each coordinate of Brownian motion.
\begin{algorithmic}[1]
\State $\xi_0 \gets \Radhalf; \;\;\; \xi \gets \Radhalfd{d}$
\State $\vz^{(i)} \gets \normaldof{n}{0,1} \;$ for $1 \leq i \leq d ; \;\;\; h \gets \normaldof{d}{ 0,  \tfrac{1}{12} (t-s) }$
\State $\tb^{(i,j)} \gets \text{PairNN}_{\theta} ( ( h^{(i)}, \vz^{(i)}), ( h^{(j)}, \vz^{(j)} ))$ for $1 \leq i < j \leq d$
\State \Return $\operatorname{BF}\hspace{-0.6mm}\big(w, h, \tb, \xi_0, \xi\big)$
\end{algorithmic}
\end{algorithm}
\end{definition}\vspace*{-5mm}

\begin{figure}[H]
    \centering
    \includegraphics[width=\textwidth]{Images/Pair-net_scheme.png}
    \caption{A schematic of the Pair-net architecture when $d=3$.}
    \label{fig:pair_net_scheme}
\end{figure}\vspace*{-2mm}
This solves the issue of permutation equivariance, but there is another requirement to be enforced:
$\bij = - b^{(j,i)}.$ Hence we want
\begin{align*}
\text{PairNN}_{\theta} \! \left( h^{(i)} \concat \vz^{(i)} \concat h^{(j)} \concat \vz^{(j)} \right) \; = \; -\text{PairNN}_{\theta} \! \left( h^{(j)} \concat \vz^{(j)} \concat h^{(i)} \concat \vz^{(i)} \right).
\end{align*}
It is possible to build this into the architecture itself, but that comes at the cost of doubling the computation time, and results in a bimodal output distribution, which is undesirable. Instead, we add an ``anti-symmetric-penalty'' to the generator loss, controlled by the hyperparameter $\lambASP$:
\begin{align*}
     L_{\mathrm{ASP}} \coloneqq \lambASP \E{\vz, \vh}{\sum_{1 \leq i < j \leq d}  \left( \text{PairNN}_{\theta} \! \left( h^{(i)} \concat \vz^{(i)} \concat h^{(j)} \concat \vz^{(j)} \right) +\text{PairNN}_{\theta} \! \left( h^{(j)} \concat \vz^{(j)} \concat h^{(i)} \concat \vz^{(i)} \right) \right)^2 }.
\end{align*}
Indeed, it can be seen empirically that this new formulation leads to substantially better permutation invariance. The simplest way to test this is to fix some $\vh \in \R^4$ and let $\vh'$ be equal to $\vh$ with the first two dimensions swapped. We then generate two batches of bridge Lévy areas $\tvb$ and $\tvb'$ conditional on $\vh$ and $\vh'$ respectively. Finally, we can permute the dimensions of $\tvb'$ in the appropriate way, i.e.
\begin{align}
     \label{eqn:swapping_b_dims}
     \tvb'^{(1,2)} \gets - \tvb'^{(1,2)} \hspace{1cm} \tvb'^{(1,3)} \longleftrightarrow \tvb'^{(2,3)} \hspace{1cm} \tvb'^{(2,4)} \longleftrightarrow  \tvb'^{(1,4)}.
\end{align}
If the generator is permutation invariant, then this should yield $\tvb \overset{d}{\approx} \tvb'$. \cref{fig:perm_inv} shows the results of such a comparison for a) a somewhat-trained Pair-net generator, and b) a slightly better trained BF generator.

\begin{figure}
     \centering
     \includegraphics[width=\textwidth]{Images/good_perm_inv.png} \\
     \vspace{3mm}
     \includegraphics[width=\textwidth]{Images/bad_perm_inv.png}
     \caption{Probability densities estimated using samples from the generator with the first two input dimensions swapped (orange) and without swapping (blue) for Pair-net (top), and the BF generator (bottom). The blue and orange plots should overlap as best as possible.}
     \label{fig:perm_inv}
\end{figure}

Another benefit of the Pair-net architecture is that the neural net $\text{PairNN}_{\theta}$ can now be significantly smaller since it does not need to capture the relationship between all the dimensions. This comes with the downside of requiring more passes through it (as shown in \cref{fig:pair_net_scheme}, we use a single net to generate all $b$'s, each dimension of $b$ requires its own forward pass), but suitable indexing and batching can make this efficient. Indeed, using PairNet results in both a speed-up and greater accuracy (see \cref{sec:Results}).

Furthermore, we emphasize that the PairNet structure allows us to generate L\'evy area for any dimension $d'$, even though we train the model in a lower dimension $d<d'$. From now on, we will take nsz to be the total noise dimension for the complete L\'evy area generation, namely, $\text{nsz} = n \times d$ where $n$ is the latent noise dimension and $d$ denotes the Brownian dimension. In the framework of \cref{def:classicalGAN}, our generator is now $G_\theta: \R^{d}\times\R^{\text{nsz}} \rightarrow \R^d\times\R^{a}$ and the evaluation follows \cref{alg:PairNet}.
\section{Discriminator}
\label{sec:Discriminator}
One potential discriminator is the characteristic function GAN (CFGAN) approach introduced in \cite{Ansari2019,CFGAN}, which aims to learn the law of an underlying process by approximating its characteristic function. Compared to traditional GANs, we list some advantages of this approach:\smallbreak
\begin{enumerate}[label=\arabic*)]
    \item The characteristic function always exists and is uniformly bounded. \smallbreak
    \item The characteristic function fully describes the law of the random variable, hence offering good theoretical support.\smallbreak
\end{enumerate}

In this section, we introduce two different choices for the characteristic function. Firstly, we assess the distance between two distributions using the characteristic function distance.
\begin{definition}[Characteristic function distance \cite{Ansari2019}]
\label{def:cfd}
Let $\mathbb{P}_{X}$ and $\mathbb{P}_{Y}$ be the distributions of two $\mathbb{R}^d$-valued random variable $X$ and $Y$ respectively. The characteristic function distance (CFD) between  $X$ and $Y$ associated with an $\mathbb{R}^d$-valued random variable $\Lambda \sim \nu$ is given by
\begin{eqnarray*}
\text{CFD}_{\Lambda}(X, Y) = \mathbb{E}_{\Lambda \sim \nu}\Big[\norm{\Phi_{X}(\Lambda) - \Phi_Y(\Lambda) }\Big],
\end{eqnarray*}
where $ \Phi_X(\Lambda) := \E_{X\sim \mathbb{P}_X} [\exp{i\langle \Lambda, X \rangle}]$.
\end{definition}

The characteristic function distance has the following properties:\smallbreak
\begin{enumerate}[label=\arabic*)]
    \item Definiteness: If the support of $\Lambda$ is $\mathbb{R}^d$, then $\text{CFD}_{\Lambda}(X, Y) = 0$ iff $\mathbb{P}_{X} = \mathbb{P}_{Y}$. \smallbreak
    \item The distance is bounded and differentiable almost everywhere. \smallbreak
    \item For certain $\nu$, $\text{CFD}_{\Lambda}(X, Y_n) \to 0 \implies Y_n \overset{d}{\to} X$ (\cite{CFMMD},\cref{prop:weak_convergence}).\medbreak
\end{enumerate}

If we restrict $\nu$ to be a member of the family of certain well-known distributions, Gaussian for example, we can parameterize the distribution $\nu$ by learnable coefficients and it will play the role of discriminator in the GAN setting. The backpropagation on these coefficients is well-understood by techniques such as the reparameterization trick. In practice, we approximate the characteristic function by an empirical measure: if $\Lambda\sim \nu$, and $x_1,\dots,x_N$ are samples from $\mathbb{P}_X$, then we estimate $\Phi_{X}(\Lambda)$ by
\begin{equation*}
    \hat{\Phi}_{X}(\Lambda) = \frac{1}{N} \sum_{i=1}^N \exp\big( i \langle \Lambda, x_i \rangle \big).
\end{equation*}

The use of the empirical characteristic function in place of the analytical characteristic function is justified in \cref{prop:empirical_approx}. However, if $X = (W_{t}, A_t)$, i.e. the joint process of $d$-dimensional Brownian motion and the corresponding L\'evy area at any time $t$, we may obtain $\Phi_{(W_{t}, A_t)}$ analytically, see \cref{main_theorem_into}.    
\subsection{Unitary characteristic function}
\label{sec: ucf}
In this subsection, we introduce an extension of the classical characteristic function of a random variable, originally proposed in \cite{Chevyrev2016characteristic} and \cite{HangPCF}. We denote by $U_n$ the set of unitary matrices of dimension $n$, then $U_n$ is a matrix Lie group with the group operation of matrix multiplication. The Lie algebra of $U_n$, denoted by $\mathfrak{g}_n$ is the set of anti-hermitian matrices, i.e. $\mathfrak{g}_{n} :=\{A\in \mathbb{C}^{n \times n}: A^*+A=0\}$. Next, we give a definition of the unitary representation of a random variable and its unitary characteristic function.


\begin{definition}[Unitary characteristic function]
\label{def:UCF}
Let $X\sim \P_{X}$ be a $\R^d$-valued random variable. Let $n\geq 1$ be an integer. Denote by $U_n$ and $\mathfrak{g}_n$ the unitary matrix Lie group of degree $n$ and its corresponding Lie algebra. Let $M\in\LL(\R^d, \mathfrak{g}_n)$, the \textit{unitary representation function} of $X$ is given by the mapping :
\begin{equation}
\label{eq:unitary_representation}
\mathcal{U}_{M}(X) \coloneqq \exp(M(X))
\end{equation}
where $\exp$ denotes the matrix exponential. The \textit{unitary characteristic function} of $X$ is defined as a mapping from $L(\mathbb{R}^d, \mathfrak{g}_n)$ to $GL(n)$ such that
\begin{equation}
    \text{UCF}_n(X)(M) \coloneqq \mathbb{E}_{X \sim \mathbb{P}_{X}}[\mathcal{U}_{M}(X)]
\end{equation}
In practice, $\text{UCF}_n(X)(M)$ is approximated by Monte-Carlo. Let $\vX = \{X_i\}_{i=1}^N$ be $N$ samples from $\P_X$. The \textit{empirical unitary characteristic function} is given by
\begin{equation*}
    \text{EUCF}_n(\vX)(M) \coloneqq \frac{1}{N}\sum_{i=1}^N \mathcal{U}_{M}(X_i).
\end{equation*}
\end{definition}

\begin{remark}
If $n=1$, then $\mathfrak{g}_1$ is just the set of pure imaginary numbers and $M$ can be identified with an element $\lambda \in \R^d$ such that $M(x) = i\langle \lambda, x \rangle$ for all $x\in \R^d$ where $\langle \cdot, \cdot \rangle$ denotes the standard inner product. Then, $\text{UCF}_1(X)$ recovers the standard characteristic function of a $d$-dimensional random variable.
\end{remark}


Similar to \cref{def:cfd}, for any 
$n\geq1$, we can define the \textit{unitary characteristic function distance} for two random variables $X$ and $Y$ as 
\begin{equation*}
    \text{UCFD}^2_n(X,Y) = \mathbb{E}_{M \sim \mathbb{P}_{\mathcal{M}}}\Big[ \norm{ \text{UCF}_n(X)(M)-\text{UCF}_n(Y)(M)  }^2_{HS}\Big].
\end{equation*}
where $\mathbb{P}_{\mathcal{M}}$ denotes the distribution of the linear mapping $M$ and $\lVert \cdot \rVert_{HS}$ denotes the Hilbert–Schmidt norm. As was the case with the discriminator for the standard characteristic function, we parameterize the law of linear mappings $\mathbb{P}_{\mathcal{M}}$ by an empirical measure
\begin{equation*}
    \mathbb{P}_{\mathcal{M}} = \frac{1}{N} \sum_{i=1}^{N} \delta_{M_i},
\end{equation*}
where $\delta$ denotes the Dirac measure and each $M_i$ can be parametrized as learnable coefficients and optimized using gradient-based methods. For computation and optimization details of UCFD, please refer to \cite{HangPCF}.  Let $X = (W_1, A_1)$, then the benefits of UCFD compared to the standard characteristic function distance include:\smallbreak
\begin{enumerate}[label=\arabic*)]
    \item Standard uniqueness results hold as $\text{UCF}_1(X)$ recovers the standard characteristic function.\smallbreak
    \item Since $\mathfrak{g}_1$ is a subspace of $\mathfrak{g}_n$ for any $n>2$, $\mathcal{U}_M$ possesses a richer structure than $\mathbb{C}$ for any linear mapping $M$ into $\mathfrak{g}_n$. Although using $\mathfrak{g}_1$ already encodes enough information to determine the random variable, embedding the random variable into Lie algebra of a higher degree  appears to provide a more efficient way of representing the information that characterizes the random variable. Empirically, using the unitary characteristic function led to a more stable training procedure compared to the standard one.\smallbreak
\end{enumerate}

\renewcommand{\SS}{\mathcal{S}}

\subsection{Chen Training}
\label{sec:ChenTraining}

While Brownian motion over different intervals can be concatenated simply via addition, i.e.~$W_{0,t} = W_{0,s} + W_{s,t}$, concatenation of L\'evy areas requires an additional term, specified by Chen's relation. In its general version within rough path theory, Chen's relation establishes the homomorphism property of path signatures under concatenation \cite[Theorem 2.9]{LyonsBook2007}. However, we will present just the special case relating to L\'evy area.
\begin{proposition}[Chen's relation \cite{Chen57}]
\label{prop:ChenRel}
For times $0 < s < t$
{\small 
\[
A^{(i,j)}_{0,t} = A^{(i,j)}_{0,s} + A^{(i,j)}_{s,t} + \frac{1}{2}\Big( W^{(i)}_{0,s} W^{(j)}_{s,t} - W^{(j)}_{0,s} W^{(i)}_{s,t} \Big).
\]
}
\end{proposition}

We now present the main theoretical contribution of this section, which can be viewed as a partial converse of the above proposition.
\begin{theorem}\label{thm:DistrChenUniquenessGeneral}[Distributional uniqueness of L\'evy area under Chen's relation]

Suppose $\mu$ is a mean zero probability distribution on $\R^{d}\times \R^{d\times d}$, where the first marginal has finite second moment. Let $(V_i,Z_i)\stackrel{i.i.d.}{\sim}\mu$ for $i =1,2$; if it holds that
    \begin{align}\label{eq: chen combine}
        V_3 \coloneqq \tfrac{1}{\sqrt{2}} \left( V_1 + V_2 \right), \;\; Z_3 \coloneqq \tfrac{1}{2} Z_1 + \tfrac{1}{2} Z_2 + \tfrac{1}{4} \big( V_1 \otimes V_2 - V_2 \otimes V_1 \big)
    \end{align}
    is also distributed according to $\mu$, then $\mu$ is the distribution of $(W_{0,1}, A_{0,1})$ where $A$ is the L\'{e}vy area process associated with a $d$-dimensional Brownian motion $W$.
\end{theorem}

Including a finite-variance assumption on the measure $\mu$, and a Gaussian assumption on the first marginal, provides an alternative proof using Wasserstein distances (see \cref{thm:DistrChenUniquenessFiniteVar}). The preceding motivates the following procedure, which takes samples $Z_1, Z_2$ from some distribution and concatenates them using Chen's relation to produce samples that are closer in distribution to $A_{0,1}$. With abuse of notation, if $\vX\in \R^{m\times d}$ consists of $m$ samples of a $d$-dimensional random variable, we denote by $\vX^{(i)}\in\R^m$ the $i$-th coordinate of each sample. Adopting this notation, we describe the Chen-combine operation in \cref{alg:ChenComb}.\smallbreak

We also note that the proof of \cref{thm:DistrChenUniquenessFiniteVar} shows that the repeated application of Chen-combine gives convergence of order at least $\tfrac{1}{2}$ in the 2-Wasserstein metric. Faster rates can be proven assuming a suitable starting distribution, such as Davie's approximation \cite{Davie, FL}.

\begin{remark}
    Let $\tvA$ be the estimated L\'evy area given a fixed Brownian increment $\vW$. Let $\tvA_{\text{Chen}}$ be the resulting L\'evy area process of $\chencomb(\vW, \tvA)$. In \cref{thm:chen_err_bd} we show that (informally speaking) 
    \begin{align*}
        \WW_2\big(\tvA, \vA_{\mathrm{true}}\big) \leq \big( 2 + \sqrt{2}\m \big) \WW_2\big(\tvA, \tvA_{\text{Chen}}\big),
    \end{align*}
    where $\WW_2$ denotes the $2-$Wasserstein metric. So, in order to minimise $\WW_2(\tvA, \vA_{\mathrm{true}}),$ we need only minimise $\WW_2(\tvA, \tvA_{\text{Chen}}),$ which requires no ``true'' samples of L\'evy area to compute. This allows us to modify the training objective of the GAN so that it will learn the correct distribution \emph{without any access to externally supplied data}.
\end{remark} 


Assume $(\vW, \tvA) \in \R^{2m\times(d+a)}$ is $2m$ samples of $(d+a)$-dimensional random variable. By $\chencomb(\vW, \tvA)$ we mean: evenly split $(\vW, \tvA)$ into two blocks with equal size, denoted by $(\vW, \tvA)_{\text{first}},(\vW, \tvA)_{\text{second}} \in \R^{m\times (d+a)}$ , and apply the operation described in \cref{alg:ChenComb}. The output of $\chencomb$ will be an element in $\R^{m\times (d+a)}$.

\begin{algorithm}[H]
\setstretch{1.5}

\caption{Chen-combine}\label{alg:ChenComb}

\textbf{Input:} $m$ - batch size, $d$- Brownian dimension, $a$- L\'{e}vy dimension, $(\vW, \tvA)_{\text{first}} \in \R^{m\times (d+a)}$,  $(\vW, \tvA)_{\text{second}} \in \R^{m\times (d+a)}$ - Brownian increments and L\'{e}vy area samples.
\begin{algorithmic}[1]
\State $\hat{\vW} \;\gets\; \textbf{0}\in \R^{m\times d}$, $\hat{\vA} \;\gets\; \textbf{0}\in \R^{m\times a}$
\State $\vW_{\text{first}} \gets \frac{1}{\sqrt{2}} \vW_{\text{first}}$, $\vW_{\text{second}} \gets \frac{1}{\sqrt{2}} \vW_{\text{second}}$, \Comment{Brownian scaling $W_{0,\frac{1}{2}} \eqd \frac{1}{\sqrt{2}} W_{0,1}$}
\State $\tvA_{\text{first}} \gets \frac{1}{2} \tvA_{\text{first}}$, $\tvA_{\text{second}} \gets \frac{1}{2} \tvA_{\text{second}}$, \Comment{Brownian scaling $A_{0,\frac{1}{2}} \eqd \frac{1}{2} A_{0,1}$}
\For{$i \in \{1,\ldots, d\}$}
\State $\hat{\vW}^{(i)} \;\gets\; \vW_{\text{first}}^{(i)} + \vW_{\text{second}}^{(i)}$
\For{$j \in \{i+1,\ldots, d\}$}
\State $\vD \;\gets\; \frac{1}{2}\left(\vW_{\text{first}}^{(i)}\odot \vW_{\text{second}}^{(j)}-\vW_{\text{first}}^{(j)}\odot \vW_{\text{second}}^{(i)}\right)$
\State $\hat{\vA}^{(i,j)} \; \gets \; \tvA_{\text{first}}^{(i,j)} + \tvA_{\text{second}}^{(i,j)} + \vD$,\Comment{Chen's relation}

\EndFor
\EndFor
\State \Return $(\hat{\vW}, \hat{\vA})$
\end{algorithmic}
\end{algorithm}

\section{LévyGAN}\label{sec:levygan}
In this section, we incorporate the ideas presented in \cref{sec:generator,sec:Discriminator} into our tailored model, named L\'{e}vyGAN used to generate the associated L\'{e}vy area conditioned on the Brownian increments. Similar to \cref{def:classicalGAN}, we provide the definition of the proposed model as follows.

\begin{definition}[L\'{e}vyGAN]
    \label{def:LévyGAN}
    Let $\text{PairNN}_\theta$ denote a PairNet generator defined in \cref{def:pairnet}. Given a $d$-dimensional Brownian increment at $t=1$, $W\sim \normal^d(0, 1)$, for all $1\leq i<j\leq d$, we generate estimated L\'{e}vy area associated to $W$ as follows:
    \begin{align}
        \Tilde{b}^{(i,j)} & = \text{PairNN}_{\theta}((H^{(i)}, Z^{(i)}), (H^{(j)}, Z^{(j)})) \label{eq:generator-1}
        \\
        \Tilde{A}^{(i,j)} & = H^{(i)}W^{(j)} - H^{(j)}W^{(i)} + \Tilde{b}^{(i,j)} \label{eq:generator-2}
    \end{align}

    Let $\vw \in \R^{N\times d}$ be $N$ samples of Brownian increment and the associated $\tvA \in \R^{N\times a}$ generated according to \cref{eq:generator-1,eq:generator-2}, then we construct new samples using $\chencomb$ defined in \cref{alg:ChenComb}:
    \begin{equation*}
        (\vw_{\text{Chen}}, \tvA_{\text{Chen}}) \coloneqq  \chencomb(\vw, \tvA).
    \end{equation*}
    
    For $m \geq 1$, let $\mathfrak{g}_m$ be the Lie algebra of the unitary matrix group $U_m$. Recall $\text{EUCF}_m$ from in \cref{def:UCF}, and let $\mathcal{M} = \{M_i\}_{i=1}^{M},\ M_i \in L(\R^{d+a}, \mathfrak{g}_m)$ be a collection of linear mappings onto $\mathfrak{g}_m$, each of them parametrized by an element in $\R^{(d+a)\times \text{dim}(\mathfrak{g}_m)}$. $\mathcal{M}$ will play the role of the discriminator.
    Finally, the training is performed with respect to the following min-max game
     \begin{equation*}
         \min_{\theta} \max_{\mathcal{M}} \text{Loss}(\theta, \mathcal{M}; \vw),
     \end{equation*}
     where $ \text{Loss}(\theta, \mathcal{M}; \vw) \coloneqq \text{EUCFD}_m\big((\vw, \tvA), (\vw_{\text{Chen}}, \tvA_{\text{Chen}})\big).$ The training algorithm and flowchart are described in \cref{alg:LevyGAN,fig:chen-trn_scheme}.
\end{definition}

\begin{remark}
    One can interpret Chen training as a type of adaptive training, where the ``almost true'' target data $\tvA_{\text{Chen}}$ is always just sufficiently better than the generator's output, that training can progress effectively. Thus, instead of requiring large datasets of ``true'' samples, which are costly to generate and difficult to handle, we can now very efficiently generate new ``true'' data on the fly, of any desired quantity and of just the right precision. One could choose to iteratively apply Chen-combine several times, but we have observed that training is slightly faster and more efficient when only a single application of Chen combined is used in \cref{alg:LevyGAN}. This is because using two Chen-combines produces twice fewer ``true" samples and reducing the sample size leads to a poorer estimate of EUCFD.
\end{remark}\vspace*{-2mm}

\begin{algorithm}[H]

\caption{Training algorithm for L\'{e}vyGAN}\label{alg:LevyGAN}

\hspace*{\algorithmicindent}

\textbf{Input:} $d$ - Brownian dimension, $a$ - L\'{e}vy area dimension, $n$ - noise dimension, $m$ - Lie algebra degree, $M$ - number of linear mappings onto $\mathfrak{g}_m$, $\text{PairNN}_\theta$ - generator, $\mathcal{M}\in \R^{M\times (d+a) \times \text{dim}(\mathfrak{g}_m)}$ - discriminator, $\text{iter}_d$ - number of discriminator updates per generator update, \texttt{bsz} - batch size, $\eta_g$, $\eta_d$ - generator and discriminator learning rates.\smallbreak

\begin{algorithmic}[1]

\While{$\theta, \mathcal{M} \text{ not converge}$}\smallbreak

\For{$i \in (1,\dots,\text{iter}_d)$}\smallbreak

\State $\text{Sample }\vw \sim \normal^d(0, 1),\; (\vh, \vz) \sim \normal^d(0, \tfrac{1}{12}) \times \normal^{d\times n}(0, 1)$ of size $2 \, \bsz$.\smallbreak

\State $\tvb^{(i,j)} \; \gets \; \text{PairNN}_{\theta} ( ( \vh^{(i)}, \vz^{(i)}), ( \vh^{(j)}, \vz^{(j)} ))$ for $1 \leq i < j \leq d$\smallbreak

\State $\xi_0 \gets \Radhalf$,\hspace{2.5mm} $\vxi \gets \Radhalfd{d}$\smallbreak

\State $\tvA\; \gets \; \operatorname{BF}\hspace{-0.6mm}\big( \vw, \vh, \tvb, \xi_0, \vxi \big)$\smallbreak

\State $\vw_{\text{Chen}}, \; \tvA_{\text{Chen}} \; \gets \; \chencomb(\vw, \tvA)$\smallbreak

\State $\text{Loss}(\theta, \mathcal{M}; \vw) \; \gets \; \text{EUCFD}_m((\vw, \tvA), (\vw_{\text{Chen}}, \tvA_{\text{Chen}}))$\smallbreak

\State $\mathcal{M} \; \gets \; \mathcal{M} - \eta_d \cdot \nabla_\mathcal{M} (-\text{Loss}(\theta, \mathcal{M}; \vw))$\Comment{Maximize the loss}\smallbreak

\EndFor

\State $\text{Sample }\vw \sim \normal^d(0, 1),\; (\vh, \vz) \sim \normal^d(0, \tfrac{1}{12}) \times \normal^{d\times n}(0, 1)$ of size $2 \, \bsz$.\smallbreak

\State $\tvb^{(i,j)} \; \gets \; \text{PairNN}_{\theta} ( ( \vh^{(i)}, \vz^{(i)}), ( \vh^{(j)}, \vz^{(j)} ))$ for $1 \leq i < j \leq d$\smallbreak

\State $\xi_0 \gets \Radhalf$,\hspace{2.5mm} $\vxi \gets \Radhalfd{d}$\smallbreak

\State $\tvA\; \gets \; \operatorname{BF}\hspace{-0.6mm}\big( \vw, \vh, \tvb, \xi_0, \vxi \big)$\smallbreak

\State $\vw_{\text{Chen}}, \; \tvA_{\text{Chen}} \; \gets \; \chencomb(\vw, \tvA)$\smallbreak

\State $\text{Loss}(\theta, \mathcal{M}; \vw) \; \gets \; \text{EUCFD}_m((\vw, \tvA), (\vw_{\text{Chen}}, \tvA_{\text{Chen}}))$\smallbreak

\State $\theta \; \gets \; \theta - \eta_d \cdot \nabla_{\theta} \text{Loss}(\theta, \mathcal{M}; \vw)$\Comment{Minimize the loss}\smallbreak

\EndWhile

\State \Return $\text{PairNN}_{\theta}, \mathcal{M}$
\end{algorithmic}
\end{algorithm}\vspace*{-6mm}



\begin{figure}[H]
    \centering
    \includegraphics[width=\textwidth]{Images/Chen_training_scheme.png}\vspace*{2mm}
    \caption{A schematic of L\'{e}vyGAN. Here $bsz$ denotes the batch dimension and we recall that $nsz$ denotes the total noise dimension, namely $nsz = n \times d$.}
    \label{fig:chen-trn_scheme}
\end{figure}
\section{Numerical Experiments}
\label{sec:Results}

We train the model in $d=4$. Note that by the architecture of the generator, the model can be used to generate Brownian L\'{e}vy area for any $d^{\m\prime}\leq d$. The model can be also used to generate L\'{e}vy area for any $d^{\m\prime} > d$, however, the performance might be deteriorated as training is not done for higher dimensions.\smallbreak

We performed the training procedure as illustrated in \cref{alg:LevyGAN}. On the generator side, we used a Feed-forward Neural Network. The activation function is chosen to be LeakyRelu function. On the discriminator side, we parameterize $128$ linear maps onto the Lie algebra of degree $3$ to mimic the empirical distribution used to compute UCFD mentioned in \cref{sec: ucf}. The total number of training iterations is set to be $2500$, where we observed the convergence on the marginal $2$-Wasserstein metric on real data. We optimize both the generator and discriminator using Stochastic Gradient Descent and Adam optimizer. We set the batch size to $2^{13}$ and the learning rate for generator/discriminator is set to be $0.001$/$0.01$ respectively. Both learning rates decay for each $500$ iteration. Finally, we set $\text{iter}_d$ to be $3$.\smallbreak

We conducted a hyperparameter grid-search (see \cref{sec:param_tuning}), evaluating the model performance according to the marginal 2-Wasserstein metric, with our optimal architecture as follows:\smallbreak
    \begin{itemize}
        \item Feed-forward Neural Network with 3 hidden layers and 16 hidden dimensions.\smallbreak
        \item LeakyRelu activation function with $\text{slope} = 0.01$.\smallbreak
        \item Gaussian noise with $n=3$.\smallbreak
    \end{itemize}

Finally, we assess the performance of our model on the generation of the coupled process for $d=2,\ 3,\ 4$, and $8$. We consider the following test metrics:\smallbreak
\begin{enumerate}[label=\arabic*)]
    \item Marginal 2-Wasserstein metric.\smallbreak
    \item Cross moment metric.\smallbreak
    \item Characteristic Function Distance, using Maximum Mean Discrepancy with different kernels.\smallbreak
    \item Empirical Unitary Characteristic Function.\smallbreak
\end{enumerate}

A detailed explanation of each test metric can be found in \cref{appendix:test metrics}. We compare with two baselines: Foster's and Davie's moment matching generator \cite{FosterThesis, foster2025levy, Davie}, and we regard the truncated Fourier series \cite{KR} of L\'{e}vy area up to an $L^2$ precision of $10^{-4}$ as ``true'' samples. Finally, we provide a numerical example for the log-Heston model using different estimators for fake L\'evy area. 
\begin{table}[H]
    \begin{center}
    \begin{tabular}{@{}ccccc@{}}
    \toprule
    \textbf{Test Metric} & L\'evyGAN  & Foster & Davie & Fourier series \\ \midrule
   Computational time (s) & $0.019$ &$0.0071$  & $0.002$ & $3.1$ \\[2pt]
   Marginal $W_2$ ($10^{-2}$) & $\mathbf{.246 \pm .013}$ & $.254\pm .010$ & $2.03\pm.013$ & $.27 \pm 0.008$\\
     \bottomrule
    \end{tabular}\vspace*{2.5mm}
    \end{center}
    \caption{Marginal distribution fitting and computational efficiency for the different generative models. The generation is done using NVIDIA Quadro RTX 8000. The marginal $W_2$ error is calculated with respect to the joint process generated by the Fourier series. Tests are performed with $2^{20}$ samples. The final column contains the results of the Fourier algorithm with 19 terms in the expansion (far more terms were used to generate ``true'' samples). This truncation has been chosen so that the performance is comparable to L\'evyGAN and Foster's method.}
\end{table}

\begin{table}[H]
\begin{center}
\begin{tabular}{@{}ccccc@{}}
\toprule
\textbf{Dim} & \textbf{Test Metrics} & L\'evyGAN  & Foster & Davie  \\ \midrule
\multirow{4}{*}{2}
 & Fourth moment &$.004\pm .002$  & $\mathbf{.002\pm .002}$& $.042\pm.001$  \\[2pt]
 & Polynomial MMD ($10^{-5}$) & $\mathbf{.341 \pm .070}$ & $.654 \pm .131$ &$.646\pm.188$ \\[2pt]
  & Gaussian MMD ($10^{-6}$) & $1.47\pm.125$& $\mathbf{1.44 \pm .128}$&$ 34.6\pm.683$ \\
  [2pt]
  & EUCFD ($10^{-2}$) & $\mathbf{1.52\pm.213}$& $1.92 \pm .113$&$ 10.1\pm.851$ \\
 \midrule
\multirow{4}{*}{3} 
 & Fourth moment & $\mathbf{.004 \pm .002}$ &$\mathbf{.004\pm.002}$  & $.043\pm.001$ \\[2pt]
 & Polynomial MMD ($10^{-5}$) & $\mathbf{2.18 \pm .568}$ & $2.30 \pm .732$ &$2.26 \pm .773$ \\[2pt]
  & Gaussian MMD ($10^{-6}$) & $1.87 \pm .002$ & $\mathbf{1.84 \pm .001}$ &$16.3\pm.001$ \\
  [2pt]
  & EUCFD ($10^{-2}$) & $\mathbf{1.88\pm.063}$& $2.03 \pm .034$&$ 18.5\pm1.11$ \\
 \midrule
 \multirow{4}{*}{4} 
 & Fourth moment & $\mathbf{.004 \pm .000}$ & $.006 \pm .002$ & $.043\pm.002$ \\[2pt]
 & Polynomial MMD ($10^{-5}$) & $\mathbf{4.04 \pm .436}$ & $4.65 \pm 1.31$ &$5.62 \pm .808$ \\[2pt]
  & Gaussian MMD ($10^{-6}$) & $\mathbf{1.90 \pm .001}$ & $\mathbf{1.90\pm.001}$ &$263\pm .003$ \\
  [2pt]
  & EUCFD ($10^{-2}$) & $\mathbf{1.92\pm.026}$& $2.03 \pm .036$&$ 17.5\pm.483$ \\
\midrule
 \multirow{4}{*}{8} 
 & Fourth moment & $\mathbf{.006 \pm .001}$ & $\mathbf{.006 \pm .002}$ & $.044\pm.000$ \\[2pt]
 & Polynomial MMD ($10^{-2}$) & $\mathbf{1.13 \pm .019}$ & $1.15 \pm .030$ &$1.31 \pm .066$ \\[2pt]
  & Gaussian MMD ($10^{-6}$) & $\mathbf{1.91 \pm .001}$ & $\mathbf{1.91\pm.000}$ &$ 1.92\pm .003$ \\
  [2pt]
  & EUCFD ($10^{-2}$) & $\mathbf{1.99\pm.002}$& $\mathbf{1.99\pm.001}$ & $ 2.05\pm.003$ \\
 \bottomrule
\end{tabular}\vspace*{2.5mm}
\end{center}
\caption{Fourth moment and MMD-based metrics across different models and Brownian dimensions. }
\end{table}

\subsection{SDE Example}
In this section, we will demonstrate how ``fake'' L\'{e}vy area can be used within SDE numerics to achieve both high order weak convergence as well as Multilevel Monte Carlo (MLMC) variance reduction.
Although the synthetic L\'{e}vy area only needs to exhibit the correct mean and covariance to give high order weak convergence, we show that the bias introduced by the MLMC estimator is negligible in practice due to the small Chen error inherent in our generative model. A secondary motivation is to compare the various L\'evy area generators and show that our GAN-based approach performs indistinguishably from previous state-of-the-art methods -- whilst taking less time to generate samples.\smallbreak


Consider the It\^o SDE from \cref{eq:SDE_1}
\begin{equation*}
    dX_t = f(X_t)\m dt + \sum_{i=1}^dg_i(X_t)\m dW_t^{(i)},\ \ X_0=x_0,\tag{\ref{eq:SDE_1} revisited}
\end{equation*}
where the solution $X$ takes values in $\R^e$. To estimate the solution to \cref{eq:SDE_1} one typically uses a discretisation scheme that generates approximate sample paths of the solution $X$. Often the objective is to approximate quantities of the form
\begin{equation}\label{eq:mc_to_estimate}
    \Ex\left[\varphi(X)\mid X_0 = x_0\right],
\end{equation}
where $\varphi$ may depend on the whole sample path $(X_t)_{t\in [0,T]}$, though commonly it is only a function of the solution $X_T$ at the terminal time $T$. To measure the error of a particular discretisation scheme, there are two standard metrics: weak and strong error. We will only evaluate the weak error for reasons discussed in \cref{sec:weak_vs_strong}. To accurately determine the error of various numerical schemes, we seek a multidimensional SDE and a quantity of the form (\ref{eq:mc_to_estimate}) which is known semi-analytically. Thankfully, such an example exists: the price of a European call option under the log-Heston model.
The stochastic volatility model is defined by the following two-dimensional SDE:
\begin{equation}
    \begin{split}
        dU_t &= \Big(r-\frac{1}{2}V_t\Big)dt+\sqrt{V_t}\m dW_t^{(1)},\ \ U_0\in\R\\
        dV_t&= \kappa(\theta-V_t)\m dt+\sigma\sqrt{V_t}\m dW_t^{(2)},\ \ V_0>0,
    \end{split}
\end{equation}
for a pair of independent Brownian motions $W^{(1)}$ and $W^{(2)}$. To ensure the volatility term $V$ remains positive, we must enforce the Feller condition $2\kappa\theta-\sigma^2>0$. The payoff of a European call option for a price process $S$ with $S := \exp(U)$ is given by
\begin{equation*}
    \varphi(S):=e^{-rT}\big(e^{U_T}-K\big)^+,
\end{equation*}
where $r$ is the discount rate, $K$ the strike price, and $T$ the maturity. For the derivation and form of the semi-analytic formula for the expected value of the above, we refer the reader to \cite{Terada} and \cite{logHestonMatlab}.\smallbreak

\subsection{Numerical Results}\label{sec:sde_results}
We compare four discretisation schemes combined with multilevel Monte-Carlo (MLMC) \cite{GilesMLMC}. We briefly recall that MLMC is based on the idea of a telescoping sum of expectations. Indeed, assume we have $L$ levels, and that $Y_l$ is an estimator for $X$, based on a discretisation scheme with step-size $h_l$, then we may write
\begin{equation*}
\Ex\Big[\varphi\big(Y_L\big)\Big]=\sum_{l=1}^L\Ex\Big[\varphi\big(Y_l\big)-\varphi\big(Y_{l-1}\big)\Big],
\end{equation*}
with $Y_0\equiv 0$. The MLMC estimator is then defined by
\begin{equation}
    \bar{\varphi}_{n_1,\dots,n_L} = \sum_{l=1}^L\hat{\varphi}_{n_l},\quad\text{where}\quad \hat{\varphi}_{n_l} = \frac{1}{n_l}\sum_{i=1}^{n_l}\left(\varphi\big(Y_{l}^{i,l}\big)-\varphi\big(Y_{l-1}^{i,l}\big)\right),
\end{equation}
where $Y_l^{i,l}$ is the $i$\textsuperscript{th} sample of the estimator $Y_l$ used on level $l$, and $Y_{l-1}^{i,l}$ is the $i$\textsuperscript{th} sample of the estimator $Y_{l-1}$ that is used on level $l$.  It is important to note that the pairs $(Y_{l}^{i,l},Y_{l-1}^{i,l})$ are coupled: the underlying Brownian path for each member of the pair is the same. In our case, the path on the lower level will be coarse (i.e.~a large step size) and the higher level will be fine (i.e.~a small step size). The standard condition used to ensure convergence of the telescoping sum of expectations is given by
\begin{equation*}
\Ex\Big[\varphi\big(Y_{l}^{\m\cdot,\m l}\big)\Big]=\Ex\Big[\varphi\big(Y_{l}^{\m\cdot,\m l+1}\big)\Big].
\end{equation*}
However, when incorporating a fake L\'evy area term, our coupling at each level is defined as follows.\smallbreak
\begin{enumerate}[label=\arabic*), leftmargin=3em]
    \item The Brownian increments for the fine path $Y_{l}^{\m\cdot,\m l}$ are generated with step size $h_{l}$, with the increments of fake L\'evy area generated using some estimator $\tA^{h_l}$.\smallbreak
    \item The Brownian increments on the coarse path $Y_{l-1}^{\cdot,l}$ are computed by pairwise summing the increments of the fine path. The fake L\'evy area used on the coarse path is computed using one iteration of Chen's identity applied to the increments and areas of the fine path.\smallbreak
\end{enumerate}

This scheme however introduces a bias; namely, the distribution of the fake L\'evy area used for the fine path at level $l$ will not be the same as the distribution of L\'evy area used for the coarse path at level $l+1$. We may write the effect of this by amending the telescoping expectation to be
{\small
\begin{equation}\label{eq: telescoping_bias}
\begin{split}
    \sum_{l=1}^L\Ex\Big[\varphi\big(Y_l(\tA^{h_l})\big)-\varphi\big(Y_{l-1}(\tA^{h_{l}}_{\text{CC}})\big)\Big]=&\underbrace{\sum_{l=1}^L\Ex\Big[\varphi\big(Y_l(\tA^{h_l})\big)-\varphi\big(Y_{l-1}(\tA^{h_{l-1}})\big)\Big]}_{\text{desired telescoping expectation}}\\
    &+\underbrace{\sum_{l=1}^L\Ex\Big[\varphi\big(Y_{l-1}(\tA^{h_{l-1}})\big)-\varphi\big(Y_{l-1}(\tA^{h_l}_{\text{CC}})\big)\Big]}_{\text{bias term}}
\end{split}
\end{equation}}
where we have emphasised the dependence of the fine level on the fake area $\tA^{h_l}$ and dependence of the coarse level of one Chen iteration of this fake area, denoted by $\tA^{h_l}_{\text{CC}}$. The bias introduced is exactly the second sum. We aim to show empirically that this sum is small in comparison to the size of the weak error due to the SDE discretisation. In order to minimise this sum, the distribution of an estimator $\tA$ must be as close as possible to the distribution $\chencomb(\tA)$; exactly the criterion used to train our generator.\smallbreak

The four numerical schemes in the comparison are: no-area Milstein, antithetic Milstein \cite{GilesSzpruch}, the Strang splitting method, and a ``Strang'' log-ODE method. Only the final method incorporates the fake L\'evy area. For details on the schemes see \cref{sec:discretisation_schemes}. The first two schemes were included to demonstrate that the rate of variance reduction is comparable to two popular schemes, while the weak error rate of the Strang log-ODE method is (conjecture to be) $O(h^2)$ where the other methods achieve a weak error rate of $O(h)$. The numerical simulations were performed with a constant time-step $h_l$ on each level satisfying $h_l=~\tfrac{1}{2}h_{l-1}$. On the coarsest level we use the timestep $h_0=\tfrac{1}{2}$ for the Milstein methods and $h_0=1$ for the Strang methods; this results in the variance on this level being approximately equal across the three methods. The number of sample paths on each level satisfies $n_l=\frac{1}{2}n_{l-1}$ with $n_0=2^{27}$, so the computational effort on each level is approximately constant. We repeat the experiment forty times and report the average result. We fix the log-Heston model parameters to be $T=1$, $r=0.1$, $K=20$, $\kappa=2$, $\theta = 0.1$, $\sigma = 0.5$, $U_0=\log(20)$, and $V_0=0.4$.

The following plots report the multilevel variance defined by $\text{Var}\left[\varphi\big(Y_{l}^{\cdot,l}\big)-\varphi\big(Y_{l-1}^{\cdot,l}\big)\right],$ and the empirical error given by $\big\vert\bar{\varphi}_{n_1,\dots,n_l}-P_{\text{true}}\big\vert$, where $P_{\text{true}}$ is the true price of the call option under the log-Heston model.\vspace*{-1mm}
\begin{figure}[H]
\centering
    \begin{subfigure}{.48\textwidth}
    \centering
    \includegraphics[width=1\linewidth]{Numerical_Results/plots/fake_vs_no_fake_variance.png}
    \caption{Multilevel variance}
    \label{fig:fake_vs_no_fake_variance}
    \end{subfigure}%
\begin{subfigure}{.48\textwidth}
\centering
\includegraphics[width=1\linewidth]{Numerical_Results/plots/fake_vs_no_fake_error.png}
\caption{Empirical error}
\label{fig:fake_vs_no_fake_error}
\end{subfigure}\vspace*{4mm}
\begin{subfigure}{.48\textwidth}
    \centering
    \includegraphics[width=1\linewidth]{Numerical_Results/plots/areas_variance.png}
    \caption{Multilevel variance}
    \label{fig:fakes_variance}
    \end{subfigure}
\begin{subfigure}{.48\textwidth}
\centering
\includegraphics[width=1\linewidth]{Numerical_Results/plots/areas_error.png}
\caption{Empirical error}
\label{fig:fakes_error}
\end{subfigure}
\caption{Plots of multilevel variance and empirical error. The top pair of plots compare the Milstein scheme without area, Milstein antithetic and Strang log-ODE scheme with fake L\'evy area from our generator (labelled ``Strang-Net"). The bottom pair compare the Strang log-ODE scheme using three different fake L\'evy areas.
``Strang-T'' indicates that the fake L\'evy area is an independent Rademacher random variable with the correct variance (the same random variable appearing in the Talay scheme), and ``Strang-F'' denotes Foster's approximation. The ``Strang-NA'' line is the usual Strang splitting method with ``Strang-Anti'' being the antithetic version of this scheme.}
\label{fig:fake_vs_no_fake}
\end{figure}\vspace*{-2mm}

In \cref{fig:fake_vs_no_fake_variance} we see that the multilevel level variance of the Strang log-ODE method decreases at an approximate rate of $O(h^{2})$. The rate for the Milstein antithetic scheme appears slightly higher, while the variance reduction rate for the standard Milstein method is clearly lower. As expected, the weak convergence rate of both the Milstein and Milstein antithetic schemes is of order $O(h)$, while the weak rate for the Strang log-ODE with fake L\'evy area is approximately $O(h^{2})$. It was conjectured in \cite{foster2024high} that the Strang log-ODE method should attain this weak convergence rate, and the experiments corroborate this hypothesis.

When using a fake L\'evy area in MLMC, a key factor for the performance of the scheme is how close the distribution of a Chen combined sample of Brownian motion and fake L\'evy area is to the distribution before performing the combine operation. Since we employed Chen training, our model succeeds in matching the distributions well enough to match and even outperform Foster's method. In \cref{table:bias}, we record the bias introduced at each level, as in \cref{eq: telescoping_bias}, for each fake L\'evy area.\vspace*{1mm}
\begin{table}[H]
    \begin{center}
    \begin{tabular}{@{}cccccc@{}}
    \toprule
    \textbf{Level} & $0$  & $1$ & $2$  & $3$\\ \midrule
   Strang-Net ($10^{-3}$) &$ -0.402$& $0.313$& $-0.133$&$-0.652$ \\[2pt]
   Strang-F ($10^{-3}$)& $0.361$& $0.848$& $0.0714$& $1.31$&\\[2pt]
   Strang-T ($10^{-3}$)& $-17.0$& $4.33$& $-1.79$&$ -1.10$\\
     \bottomrule
    \end{tabular}\vspace*{2.5mm}
    \end{center}
    \caption{Approximate bias introduced per level by the use of different fake L\'evy areas. We use $2^{29}$ samples paths on each level.}\label{table:bias}
\end{table}\vspace*{-5mm}

For L\'evyGAN and Foster's method, the bias introduced on each level of the telescoping sum is of order $2^{-11}$; this is far smaller than the weak error seen in \cref{fig:fakes_error}. However, it is possible that the accumulated bias may then be of order $2^{-9}$ on the finest level, which may account in part for the slight deviation from the line at level $4$. It is also clear that it is not enough for the fake L\'evy area simply to match the mean and variance of true L\'evy area, as demonstrated by the poor performance of the ``Strang-T'' method. However, we do note here that a scheme matching the conditional variance of L\'evy area given a Brownian increment performed similarly to Foster's method in previous experiments. We may also see from the performance of the Strang splitting method that, without the fake L\'evy area terms, one achieves only a weak order convergence rate of $O(h)$. It is interesting to note however, that \cref{fig:fakes_variance} indicates that the fake L\'evy area need only match the mean and variance of true L\'evy area to obtain improved variance reduction at each level. Even the ``Strang T'' method has the same variance reduction rate as the more sophisticated techniques despite having poor empirical error.\smallbreak

In practice, one usually wishes to obtain some target root mean-squared error (RMSE) with minimal computational cost. In this setting, for two numerical schemes with variance reduction $O(h^\beta)$ with $\beta>1$, the scheme with higher order weak convergence is not necessarily the preferred one. By the complexity theorem of Giles \cite[Theorem 3.1]{GilesMLMC}, the optimal number of sample paths on each level should be asymptotically proportional to $O(h_l^{(\beta + 1)/2})$. As such, the computational effort should be expended mostly on the coarse levels in the regime $\beta>1$, driving one towards discretisations that are computationally cheap on the lower levels. Since it is difficult measure the computational complexity of the Strang log-ODE scheme with fake L\'evy area produced by a generative model, we use the following approach. We implement the algorithm of Giles \cite[Section 5]{GilesMLMC} for the Milstein scheme and Strang log-ODE scheme and compare the average time taken to achieve a selection of target RMSEs between $0.1$ and $0.0025$.\vspace*{1mm}
\begin{table}[H]
    \begin{center}
    \centering
    \begin{tabular}{@{}cccccccc@{}}
    \toprule
    \textbf{RMSE} & $0.1$  & $0.0441$ & $0.0129$  & $0.0086$ &$ 0.0057$& $0.0038$ &$ 0.0025$ \\ \midrule
   Milstein (s) &$ \textbf{0.0097}$& $0.0256$& $0.376$&$ 1.03$&$ 2.86$& $8.63$ & $23.6$ \\[2pt]
   L\'evyGAN (s) & $0.0102$& $\textbf{0.0128}$& $\textbf{0.142}$& $\textbf{0.311}$&$ \textbf{0.806}$& $\textbf{2.25}$& $\textbf{5.83}$\\
     \bottomrule
    \end{tabular}
    \end{center}\vspace*{2.5mm}
    \caption{The average time taken across $25$ runs for the Milstein and Strang log-ODE methods to attain a target RMSE. We use the algorithm of \cite[Section 5]{GilesMLMC} to determine the number of samples on each level and the stopping condition on the number of levels used on each run. All random variables are generated in \texttt{torch} on \texttt{GPU}, with the numerical schemes implemented in \texttt{numpy} on \texttt{CPU}.}
\end{table}
\begin{remark}
  We see from \cref{fig:fakes_variance} that the the Strang-antithetic scheme achieves the highest order variance reduction with a first order weak error rate. As noted above, both factors play a role in the overall time required to achieve a desired RMSE and it is possible that the higher order variance reduction of the Strang-antithetic scheme may outperform the high order weak error of the L\'evyGAN based approach. However, it was recently shown in \cite{iguchi2025antitheticmultilevelmethodselliptic} that the antithetic method can be combined with weak estimators of L\'evy area to achieve high order variance reduction. We believe that combining their approach with ours would result in a scheme with both high order variance reduction and weak error rate. 
\end{remark}

We conclude this section by reiterating that the use of extra random variables to attain higher order weak convergence has become a popular technique, see for example \cite{Talay, NV, NN}. But, to the best of our knowledge, it has not yet been observed that the use of fake L\'evy area combined with standard multilevel Monte-Carlo can also achieve high order weak convergence and variance reduction.
\section{Generating other integrals of Brownian motion}\label{sec: SST}

To demonstrate the wider applicability of the Chen-training paradigm, we turn our attention to generating the integral $C_{s,t} = \int_s^t W_{s,r}^2 \, dr$ where $W$ is a 1-dimensional Brownian motion. While a detailed discussion of this integral is beyond the scope of this paper, we note that numerical methods for scalar noise SDEs can achieve second order strong convergence if this integral is provided alongside Brownian increments and space-time L\'{e}vy areas (see \cite{foster2020b, tang2019sst} for further details).

Analogous to the procedure outlined in this article for the generation of space-space L\'evy area, we may attempt to train a network to sample from the distribution $\P_{C_{0,1}\mid W_{0,1}=w,\ H_{0,1}=h}$, where $H$ is the space-time L\'evy area defined in \cref{def:BBridge}. Due to Brownian scaling, the distribution of $C_{0,1}$ satisfies the following scale invariance and modified Chen's relation.
\begin{proposition}[Scaling and Chen's relation for $C$]
Let $W$ and $C$ be defined as above and $0 \leq s < t$. Then
\[
C_{s,t} \eqd (t-s)^2 C_{0,1} \quad \text{ and } \quad C_{0,1} = C_{0, \frac{1}{2}} + C_{\frac{1}{2}, 1} + W_{0,\frac{1}{2}} \Big( \tfrac{W_{0,1}}{2}  + H_{\frac{1}{2}, 1} \Big).
\]
\end{proposition}

We note that, since $C_{0,1}$ is 1-dimensional and non-negative, neither Pair-Net nor bridge-flipping are required. We can then train a feed-forward neural network with $2$ hidden layers; $16$ hidden dimensions; Gaussian noise of dimension $3$ in addition to $w$ and $h$; ReLU activation function; and the absolute value applied to the output to maintain positivity. We then train using the Chen relation described in the preceding. To compare our output, generated ``true samples'' for fixed pairs $(w,h)$ by taking fine discretisation of $\int_s^t W_{s,r}^2 \, dr$ using the Diffrax library \cite{kidger2021on}.\smallbreak

Our method achieved a $2$-Wasserstein error of $3.27\times 10^{-5}$. Just as for space-space L\'{e}vy area, we can generate a Gaussian variable with the correct conditional mean and variance (see \cite{foster2020b}). However, this achieved a $2$-Wassertein error of $7.67\times 10^{-4}$; twenty times higher than our error.
This experiment demonstrates that the Chen-training approach is not only limited to L\'evy area, but applicable to other integrals of Brownian motion that have relevance in the numerical SDE simulation.

\section{Conclusion and Open Directions}
While stochastic analysis techniques are often used in generative deep learning, this article appears to be one of the first examples where deep learning methodology has provided meaningful results in an application to stochastic analysis. Indeed, we have demonstrated a proof of concept that the techniques used in L\'evyGAN have a place in the field of numerical solutions to SDEs. We remark here though, that careful consideration of the domain-specific analytical properties was required. In particular, regardless of the network size or architecture, L\'evyGAN in its initial form was an order of magnitude less accurate without the inclusion of both bridge-flipping and Pair-net.\smallbreak

One open direction for future research is a careful analysis of the conditions required by the fake L\'evy area in order to achieve optimal convergence rates in multilevel Monte Carlo. For example, we expect that fake L\'evy area could be incorporated into antithetic MLMC schemes, such as \cite{iguchi2025antitheticmultilevelmethodselliptic}.

An application of particular interest would be a GAN-based adaptive SDE solver. That is, a method that first generates a coarsely discretised path, before checking whether the step-size of the solver should be reduced. Such functionality is desirable for use in Neural SDEs \cite{Li2020, Kidger2021b, Kidger2021} and Logsig-RNN generators \cite{ni2021sig}, which are both powerful methods for modelling noisy time series data. In the context of L\'evy area generation, this would require the ability to generate L\'evy area and Brownian increments over two half intervals given the L\'evy area and Brownian increment over the larger interval. One approach would be to use the analytical characteristic function given in \cite{Geng2012} which provides the joint characteristic function evaluated at multiple time points. However, we expect the training time to be rather long, since the evaluation of the characteristic function involves solving a recursive system of matrix Ricatti equations in addition to a system of independent linear matrix ODEs of order one.\smallbreak

Finally, it is possible to extend the Chen-training approach. This might take several forms: one might derive a Chen type relation for higher order terms in the polynomial expansion of Brownian motion (e.g. for $H$ and $b$) as in \cref{sec: SST}; use the ordinary Chen's relation for the generation of higher order terms in the log-signature of Brownian motion; generating L\'evy areas for certain L\'evy processes.\smallbreak

For the third application, our approach may be applicable to the generation of L\'evy areas of $\alpha$-stable L\'evy processes, where moment matching approaches are not possible, since these processes have unbounded variance for $\alpha<2$. Indeed, the $\alpha$-stable L\'evy process $X_t^\alpha$ satisfies the scaling property $X_t^\alpha=t^{1/\alpha}X_1^\alpha$ in addition to independent and stationary increments. Combining these properties together, the distribution of its L\'evy area (defined using It\^o integration with jumps) should also be invariant under a suitably re-scaled version of Chen's relation. One could then attempt to train a network under this Chen relation, analogous to our approach for Brownian motion.
\section*{Acknowledgements}
The authors are grateful to Veronika Chronholm and Kanakira {Terada} for providing code which was then adapted for the numerical SDE tests in \cref{sec:sde_results}. HN and JT would like to thank Terry Lyons and Hang Lou for useful discussions.

\printbibliography[heading=bibintoc,title={References}]

@inproceedings{CFGAN,
  title={Reciprocal adversarial learning via characteristic functions},
  author={Li, S. and Yu, Z. and Xiang, M. and Mandic, D.},
  booktitle={Advances in Neural Information Processing Systems},
  volume={33},
  pages={217--228},
  year={2020}
}

@inproceedings{ni2021sig,
  title={Sig-Wasserstein GANs for time series generation},
  author={Ni, Hao and Szpruch, Lukasz and Sabate-Vidales, Marc and Xiao, Baoren and Wiese, Magnus and Liao, Shujian},
  booktitle={Proceedings of the Second ACM International Conference on AI in Finance},
  pages={1--8},
  year={2021}
}

@article{CFMMD,
  author  = {Sriperumbudurm, B.K. and Gretton, A. and Fukumizu, K. and Sch{{\"o}}lkopf, B. and Lanckriet, G.R.G.},
  title   = {Hilbert Space Embeddings and Metrics on Probability Measures},
  journal = {Journal of Machine Learning Research},
  year    = {2010},
  volume  = {11},
  number  = {50},
  pages   = {1517--1561}
}

@article{ECF,
author = {Feuerverger, A. and Mureika, R.A.},
title = {{The Empirical Characteristic Function and Its Applications}},
volume = {5},
journal = {The Annals of Statistics},
number = {1},
publisher = {Institute of Mathematical Statistics},
pages = {88 -- 97},
year = {1977}
}

@phdthesis{FosterThesis,
title = {Numerical approximations for stochastic differential equations},
author = {James Foster},
year = {2020},
school   = {University of Oxford}
}

@InProceedings{Davie,
author={Alexander Davie},
title={{KMT Theory Applied to Approximations of SDE}},
booktitle={Stochastic Analysis and Applications 2014},
year={2014},
publisher={Springer},
pages={185--201},
}

@misc{FL,
author = {Flint, G. and Lyons, T.},
title = {Pathwise approximation of SDEs by coupling piecewise abelian rough paths},
year = {2015},
eprint={1505.01298},
archivePrefix={arXiv}
}

@article{KR,
author = {Kastner, F. and Rößler, A.},
title = {{An Analysis of Approximation Algorithms for Iterated Stochastic Integrals and a Julia and MATLAB Simulation Toolbox}},
journal = {Numerical Algorithms},
volume = {93},
pages = {27-66},
year  = {2023},
}

@book{KP,
author = {Kloeden, P.E. and Platen, E.},
publisher = {Springer-Verlag},
series = {Applications of Mathematics, Stochastic Modelling and Applied Probability, 23},
title = {Numerical solution of stochastic differential equations },
year = {1992},
}

@article{KPW,
author = {Kloeden, P.E.  and Platen, E.  and Wright, I.W.},
title = {The approximation of multiple stochastic integrals},
journal = {Stochastic Analysis and Applications},
volume = {10},
number = {4},
pages = {431-441},
year  = {1992},
publisher = {Taylor & Francis}
}

@article{FH,
title={Brownian bridge expansions for Lévy area approximations and particular values of the Riemann zeta function},
journal={Combinatorics, Probability and Computing},
publisher={Cambridge University Press},
author={Foster, J. and Habermann, K.},
year={2022},
pages={1–28}}

@article{Wiktorsson,
author = {Wiktorsson, M.},
title = {{Joint characteristic function and simultaneous simulation of iterated Itô integrals for multiple independent Brownian motions}},
volume = {11},
journal = {The Annals of Applied Probability},
number = {2},
publisher = {Institute of Mathematical Statistics},
pages = {470 -- 487},
year = {2001}
}

@InProceedings{ClarkCameron,
author="Clark, J. M. C.
and Cameron, R. J.",
title="The maximum rate of convergence of discrete approximations for stochastic differential equations",
booktitle="Stochastic Differential Systems Filtering and Control",
year="1980",
publisher="Springer Berlin Heidelberg",
pages="162--171"
}

@article{GL,
 author = {Gaines, J.G. and Lyons, T.J.},
 journal = {SIAM Journal on Applied Mathematics},
 number = {4},
 pages = {1132--1146},
 title = {Random Generation of Stochastic Area Integrals},
 volume = {54},
 year = {1994}
}

@article{Greub1967,
  title={Linear Algebra},
  author={W. H. Greub},
  year={1981}
}

@article{GilesMLMC,
number = {3},
pages = {607-617},
publisher = {INFORMS},
title = {Multilevel Monte Carlo Path Simulation},
volume = {56},
year = {2008},
author = {Giles, Michael B},
journal = {Operations research},
}

@book{shreve2004finance,
  title={{Stochastic Calculus for Finance II: Continuous-Time Models}},
  author={Shreve, Steven},
  publisher={Springer},
  year={2004}
}

@book{leimkuhler2015molecular,
  author={Leimkuhler, Ben and Matthews, Charles},
  title={{Molecular Dynamics: With Deterministic and Stochastic Numerical Methods}},
  publisher={{Interdisciplinary Applied Mathematics, Springer}},
  year={2015},
}

@article{browning2020biology,
  title={{Identifiability analysis for stochastic differential equation models in systems biology}},
  author={Browning, Alexander and Warne, David and Burrage, Kevin and Baker, Ruther and Simpson, Matthew},
  journal={{Journal of the Royal Society Interface}},
  volume={17},
  number={173},
  year={2020},
}

@article{li2019LangevinMC,
  author={Li, Xuechen and Wu, Denny and Mackey, Lester and Erdogdu, Murat A.},
  title={{Stochastic Runge-Kutta Accelerates Langevin Monte Carlo and Beyond}},
  journal={{Advances in Neural Information Processing Systems}},
  year={2019},
}

@article{foster2024high,
  author={Foster, James and dos Reis, Gon{\c}calo and Strange, Calum},
  title={{High order splitting methods for SDEs satisfying a commutativity condition}}, 
  journal={SIAM Journal on Numerical Analysis},
  volume={62},
  number={1},
  pages={500-532},
  year={2024},
}

@article{foster2020b,
  title={{An optimal polynomial approximation of Brownian motion}},
  author={Foster, James and Lyons, Terry and Oberhauser, Harald},
  journal={SIAM Journal on Numerical Analysis},
  volume={58},
  number={3},
  pages={1393-1421},
  year={2020}
}

@article{tang2019sst,
author = {Tang, Xiao and Xiao, Aiguo},
title = {{Asymptotically optimal approximation of some stochastic integrals and its applications to the strong second-order methods}},
journal = {{Advances in Computational Mathematics}},
volume = {45},
pages = {813--846},
year  = {2019},
publisher = {Springer}
}

@book{LyonsBook2007,
  author = {Lyons, Terry J. and Caruana, Michael and Lévy, Thierry},
  year = {2007},
  title = {Differential Equations Driven by Rough Paths},
  publisher = {Springer Berlin, Heidelberg},
  subtitle = {Ecole d’Eté de Probabilités de Saint-Flour XXXIV-2004}
}

@article{Chen57,
 author = {Kuo-Tsai Chen},
 journal = {Annals of Mathematics},
 number = {1},
 pages = {163--178},
 title = {Integration of Paths, Geometric Invariants and a Generalized Baker- Hausdorff Formula},
 volume = {65},
 year = {1957}
}

@book{Milstein2004,
author="Milstein, Grigori N. and Tretyakov, Michael V.",
title="Stochastic Numerics for Mathematical Physics",
year="2004",
publisher="Springer Berlin Heidelberg",
address="Berlin, Heidelberg",
}

@misc{Mirza2014,
author = {Mirza, Mehdi and Osindero, Simon},
 title = {Conditional Generative Adversarial Nets},
 year = {2014},
eprint={1411.1784},
archivePrefix={arXiv}
}

@inproceedings{Goodfellow2014,
 author = {Goodfellow, Ian and Pouget-Abadie, Jean and Mirza, Mehdi and Xu, Bing and Warde-Farley, David and Ozair, Sherjil and Courville, Aaron and Bengio, Yoshua},
 booktitle = {Advances in Neural Information Processing Systems},
 title = {Generative Adversarial Nets},
 volume = {27},
 year = {2014}
}

@misc{Geng2012,
author = {Geng, Xi and Qian, Zhongmin},
 title = {{On the Finite Dimensional Joint Characteristic Function of L\'{e}vy's Stochastic Area Processes}},
 year = {2012},
eprint={1206.1241},
archivePrefix={arXiv}
}

@inproceedings{Ansari2019,
  author={Fatir Ansari, Abdul and Scarlett, Jonathan and Soh, Harold},
  booktitle={2020 IEEE/CVF Conference on Computer Vision and Pattern Recognition (CVPR)}, 
  title={A Characteristic Function Approach to Deep Implicit Generative Modeling}, 
  year={2020},
  volume={},
  number={},
}

@inproceedings{Li2020,
  title = {{Scalable Gradients and Variational Inference for
 Stochastic Differential Equations}},
  author = {Li, Xuechen and Wong, Ting-Kam Leonard and Chen, Ricky T. Q. and Duvenaud, David K.},
  booktitle = {Proceedings of The 2nd Symposium on
 Advances in Approximate Bayesian Inference},
  pages = {1--28},
  year = 	 {2020},
  volume = 	 {118},
}

@InProceedings{Kidger2021,
  title = 	 {Neural SDEs as Infinite-Dimensional GANs},
  author =       {Kidger, Patrick and Foster, James and Li, Xuechen and Lyons, Terry J},
  booktitle = 	 {Proceedings of the 38th International Conference on Machine Learning},
  pages = 	 {5453--5463},
  year = 	 {2021},
  volume = 	 {139},
  series = 	 {Proceedings of Machine Learning Research},
  publisher =    {PMLR}
}

@inproceedings{Kidger2021b,
  title={{Efficient and Accurate Gradients for Neural SDEs}},
  author={Kidger, Patrick and Foster, James and Li, Xuechen and Lyons, Terry},
  booktitle={Advances in Neural Information Processing Systems},
  volume={34},
  pages={18747--18761},
  year={2021}
}

@article{Helmes1983levy,
  title={{L\'{e}vy's stochastic area formula in higher dimensions}},
  author={K. Helmes and A. Schwane},
  journal={Journal of Functional Analysis},
  year={1983}
}

@misc{Jiajie2023levy,
      title={A PDE approach for solving the characteristic function of the generalised signature process}, 
      author={Terry Lyons and Hao Ni and Jiajie Tao},
      year={2024},
      eprint={2401.02393},
      archivePrefix={arXiv}
}

@InProceedings{foster2025levy,
  author={James Foster},
  title={{Approximating the signature of Brownian motion for high order SDE simulation}},
  booktitle={Stochastic Analysis and Applications 2025},
  year={2025},
  publisher={Springer},
}

@article{Chevyrev2016characteristic,
  title={Characteristic functions of measures on geometric rough paths},
  author={Chevyrev, Ilya and Lyons, Terry},
  journal={The Annals of Probability},
  volume={44},
  number={6},
  pages={4049--4082},
  year={2016},
  publisher={Institute of Mathematical Statistics}
}

@article{GilesSzpruch,
author = {Michael B. Giles and Lukasz Szpruch},
title = {Antithetic multilevel Monte Carlo estimation for multi-dimensional SDEs without Lévy area simulation},
volume = {24},
journal = {The Annals of Applied Probability},
number = {4},
publisher = {Institute of Mathematical Statistics},
pages = {1585 -- 1620},
year = {2014}
}

@misc{logHestonMatlab,
      title={An Analysis of the Heston Stochastic Volatility Model: Implementation and Calibration using Matlab}, 
      author={Ricardo Crisostomo},
      year={2015},
      eprint={1502.02963},
      archivePrefix={arXiv}
}

@article{Talay,
author = {Denis Talay},
title = {Second-order discretization schemes of stochastic differential systems for the computation of the invariant law},
journal = {Stochastics and Stochastic Reports},
volume = {29},
number = {1},
year  = {1990},
publisher = {Taylor & Francis}
}

@article{NV,
author = {Syoiti Ninomiya and Nicolas Victoir},
title = {Weak Approximation of Stochastic Differential Equations and Application to Derivative Pricing},
journal = {Applied Mathematical Finance},
volume = {15},
number = {2},
pages = {107-121},
year  = {2008},
publisher = {Routledge}
}

@article{NN,
author={Ninomiya, Mariko
and Ninomiya, Syoiti},
title={A new higher-order weak approximation scheme for stochastic differential equations and the Runge--Kutta method},
journal={Finance and Stochastics},
year={2009},
day={01},
volume={13},
number={3},
pages={415-443}
}

@misc{Terada,
      title={Higher order methods and Multilevel Monte Carlo for SDEs in Finance}, 
      author={Kanakira Terada},
      year={2022},
      note={Masters thesis, University of Edinburgh}
}

@inproceedings{HangPCF,
    title={PCF-GAN: generating sequential data via the characteristic function of measures on the path space}, 
    author={Lou, Hang and Li, Siran and Ni, Hao},  booktitle={Advances in Neural Information Processing Systems},
    volume={36},
  pages={39755--39781},
  year={2023}
}

@ARTICLE{GNN,
  author={Scarselli, Franco and Gori, Marco and Tsoi, Ah Chung and Hagenbuchner, Markus and Monfardini, Gabriele},
  journal={IEEE Transactions on Neural Networks}, 
  title={The Graph Neural Network Model}, 
  year={2009},
  volume={20},
  number={1},
}

@Article{ECFmultidim,
author={Cs{\"o}rg{\H{o}}, S{\'a}ndor},
title={Multivariate empirical characteristic functions},
journal={Zeitschrift f{\"u}r Wahrscheinlichkeitheorie und Verwandte Gebiete},
year={1981},
volume={55},
number={2},
pages={203-229}
}

@article{ECF2,
author = {Cs{\"o}rg{\H{o}}, S{\'a}ndor},
title = {{Limit Behaviour of the Empirical Characteristic Function}},
volume = {9},
journal = {The Annals of Probability},
number = {1},
publisher = {Institute of Mathematical Statistics},
pages = {130 -- 144},
year = {1981}
}

@article{PT,
author={Pardoux, E.
and Talay, D.},
title={Discretization and simulation of stochastic differential equations},
journal={Acta Applicandae Mathematica},
year={1985},
volume={3},
number={1},
pages={23-47}
}

@book{Loeve,
edition = {Fourth edition.},
publisher = {Springer},
series = {Graduate texts in mathematics ; 46},
title = {Probability theory II },
year = {1978},
author = {Loève, M.},
}

@article{Dickinson,
author = { Andrew S. Dickinson },
title = {Optimal Approximation of the Second Iterated Integral of Brownian Motion},
journal = {Stochastic Analysis and Applications},
volume = {25},
number = {5},
pages = {1109-1128},
year  = {2007},
publisher = {Taylor & Francis},
}

@article{MW,
  author = {Malham, Simon J.~A. and Wiese, Anke},
  title = {{Efficient almost-exact L\'{e}vy area sampling}},
  journal = {Statistics \& Probability Letters},
  volume = {88},
  pages = {50--55},
  year = {2014},
}

@article{MR,
author = {Jan Mrongowius and Andreas R\"{o}\ss{}ler},
title = {On the approximation and simulation of iterated stochastic integrals and the corresponding Lévy areas in terms of a multidimensional Brownian motion},
journal = {Stochastic Analysis and Applications},
volume = {40},
number = {3},
pages = {397-425},
year  = {2022},
publisher = {Taylor & Francis}
}

@article{RungeKutta,
author = {R\"{o}\ss{}ler, Andreas},
title = {Second Order Runge–Kutta Methods for Itô SDEs},
journal = {SIAM Journal on Numerical Analysis},
volume = {47},
number = {3},
pages = {1713-1738},
year = {2009}
}

@article{Donsker_EBM,
 author = {Emmanuel Breuillard and Peter Friz and Martin Huesmann},
 journal = {Proceedings of the American Mathematical Society},
 number = {10},
 pages = {3487--3496},
 publisher = {American Mathematical Society},
 title = {From random walks to rough paths},
 volume = {137},
 year = {2009}
}

@phdthesis{kidger2021on,
    title={{O}n {N}eural {D}ifferential {E}quations},
    author={Patrick Kidger},
    year={2021},
    school={University of Oxford},
}

@article{iguchi2025antitheticmultilevelmethodselliptic,
author = {Iguchi, Yuga and Jasra, Ajay and Maama, Mohamed and Beskos, Alexandros},
title = {{Antithetic Multilevel Methods for Elliptic and Hypoelliptic Diffusions with Applications}},
journal = {SIAM/ASA Journal on Uncertainty Quantification},
volume = {13},
number = {2},
pages = {805-830},
year = {2025},
}
\clearpage


\appendix

\section{Some Properties of the CFD}
This short subsection summarises two key properties of the distance $\text{CFD}_\Lambda$. Namely, the following proposition shows that the distance between the empirical characteristic function and true characteristic function converges to zero almost surely as the number of observations tends to infinity. The second result demonstrates that by carefully choosing $\Lambda\sim \nu$, convergence in $\text{CFD}_\Lambda$ implies weak convergence.
\begin{proposition}\label{prop:empirical_approx}
    Let $\{X_i\}_{i=1}^\infty$ be a collection of i.i.d. $\R^n$ valued random variables, then
    \begin{equation*}
        \lim_{n\to\infty} \mathbb{E}_{\Lambda \sim \nu} \Big[\big\vert\hat{\Phi}_{X}^n(\Lambda) - \Phi_X(\Lambda)\big\vert \Big]\to 0 \,\,\text{ almost-surely}.
    \end{equation*}
\end{proposition}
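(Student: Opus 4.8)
The plan is to combine a pointwise strong law of large numbers with a Fubini--Tonelli interchange and a dominated convergence argument. First I would fix $\Lambda\in\R^n$ and note that the summands in $\hat{\Phi}_X^n(\Lambda)=\tfrac1n\sum_{k=1}^n\exp(i\langle\Lambda,X_k\rangle)$ are i.i.d.\ and bounded in modulus by $1$, hence integrable. Applying the strong law of large numbers separately to the real and imaginary parts then yields, for each fixed $\Lambda$, that $\hat{\Phi}_X^n(\Lambda)\to\Phi_X(\Lambda)$ holds $\mathbb{P}$-almost surely as $n\to\infty$.

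The key step is to upgrade this statement from ``for each fixed $\Lambda$, $\mathbb{P}$-a.s.''\ to ``$\mathbb{P}$-a.s., for $\nu$-a.e.\ $\Lambda$,'' which requires swapping the order of the two almost-sure quantifiers. To do this I would work on the product space $(\Omega\times\R^n,\,\mathbb{P}\otimes\nu)$ and consider the non-convergence set $B=\{(\omega,\Lambda):\hat{\Phi}_X^n(\Lambda)(\omega)\not\to\Phi_X(\Lambda)\}$. The previous paragraph shows that every $\Lambda$-section $B_\Lambda\subset\Omega$ satisfies $\mathbb{P}(B_\Lambda)=0$, so Tonelli's theorem gives $(\mathbb{P}\otimes\nu)(B)=\int_{\R^n}\mathbb{P}(B_\Lambda)\,d\nu(\Lambda)=0$. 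Integrating in the opposite order, $0=(\mathbb{P}\otimes\nu)(B)=\int_\Omega\nu(B_\omega)\,d\mathbb{P}(\omega)$, and since the integrand is nonnegative we conclude $\nu(B_\omega)=0$ for $\mathbb{P}$-almost-every $\omega$. In other words, on an event of full $\mathbb{P}$-measure the convergence $\hat{\Phi}_X^n(\Lambda)\to\Phi_X(\Lambda)$ holds for $\nu$-almost-every $\Lambda$.

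Finally, I would fix such an $\omega$ and pass the limit through the $\nu$-integral by dominated convergence: the integrand obeys $\lvert\hat{\Phi}_X^n(\Lambda)-\Phi_X(\Lambda)\rvert\le\lvert\hat{\Phi}_X^n(\Lambda)\rvert+\lvert\Phi_X(\Lambda)\rvert\le 2$, and the constant $2$ is $\nu$-integrable because $\nu$ is a probability measure. Hence $\mathbb{E}_{\Lambda\sim\nu}\big[\lvert\hat{\Phi}_X^n(\Lambda)-\Phi_X(\Lambda)\rvert\big]\to 0$, and since this holds for $\mathbb{P}$-almost-every $\omega$ the claim follows. I expect the main obstacle to be the measure-theoretic bookkeeping in the middle step, namely verifying that $B$ is jointly measurable so that Tonelli applies and the two iterated integrals genuinely agree; this follows once one observes that $(\omega,\Lambda)\mapsto\hat{\Phi}_X^n(\Lambda)(\omega)$ is jointly measurable for each $n$ and that the non-convergence set is obtained from these via countably many measurable operations. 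Everything outside this interchange is routine.
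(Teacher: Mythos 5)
Your proof is correct, but it takes a genuinely different route from the paper's. The paper invokes a uniform convergence theorem for empirical characteristic functions from the literature (cited as \cite{ECF,ECF2,ECFmultidim}): almost surely, $\sup_{|\Lambda|\le T_n}|\hat{\Phi}_X^n(\Lambda)-\Phi_X(\Lambda)|<\varepsilon$ eventually, for a deterministic sequence $T_n\uparrow\infty$. It then splits the $\nu$-integral into the ball $\{|\Lambda|\le T_n\}$, where the uniform bound applies, and its complement, whose $\nu$-mass vanishes; this yields a single exceptional null set in $\Omega$ and, as a by-product, control that is uniform on expanding compacta. Your argument replaces that external input with the pointwise strong law of large numbers, a Tonelli interchange on $(\Omega\times\R^n,\mathbb{P}\otimes\nu)$ to swap the order of the two almost-sure quantifiers, and dominated convergence with the trivial bound $2$. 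This is entirely self-contained and elementary, and your attention to the joint measurability of the non-convergence set (which is where such Fubini arguments usually hide their only real work, and which holds here because $\hat{\Phi}_X^n$ is a Carath\'eodory function, continuous in $\Lambda$ and measurable in $\omega$) is exactly the right place to be careful. The trade-off is that your exceptional set of $\Lambda$'s depends on $\omega$ and you obtain only $\nu$-almost-everywhere pointwise convergence rather than locally uniform convergence; for the integrated quantity in the statement this makes no difference, so both proofs establish the proposition in full.
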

\begin{proof}
    By \cite{ECF,ECF2,ECFmultidim}, there exists a sequence of real numbers $\{T_n\}\uparrow \infty$ such that almost-surely for every $\varepsilon>0$ there exists an $N$ such that for every $n\geq N$
    \begin{equation*}
        \sup_{\abs{\Lambda}\leq T_n}\abs{\hat{\Phi}_{X}^n(\Lambda) - \Phi_X(\Lambda)}<\varepsilon.
    \end{equation*}
    It follows almost-surely for every $\varepsilon>0$ that
    \begin{align*}
        \mathbb{E}_{\Lambda \sim \nu} \Big[\big\vert\hat{\Phi}_{X}^n(\Lambda) - \Phi_X(\Lambda)\big\vert \Big]&= \int\big\vert\hat{\Phi}_{X}^n(\Lambda) - \Phi_X(\Lambda)\big\vert d\nu(\Lambda)\\
        &\leq \int_{\abs{\Lambda}\leq T_n}\big\vert\hat{\Phi}_{X}^n(\Lambda) - \Phi_X(\Lambda)\big\vert d\nu(\Lambda)+2\nu(\{\abs{\Lambda}\geq T_n\})\\
        &\leq \varepsilon\nu(\{\abs{\Lambda}\leq T_n\})+2\nu(\{\abs{\Lambda}\geq T_n\})\\
        &\to \varepsilon.
    \end{align*}
\end{proof}
\begin{proposition}\label{prop:weak_convergence}
    Let $\nu$ be the Cauchy distribution on $\R^n$ with location parameter $0$ scale parameter $\gamma$ and independent coordinates, then $\text{CFD}_{\Lambda\sim \mu}$ metrizes the topology of weak convergence on $\PP(\R^n)$.
\end{proposition}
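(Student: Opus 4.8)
The plan is to show that the map $d(P,Q):=\text{CFD}_\Lambda(P,Q)=\int_{\R^n}\lvert\Phi_P(\Lambda)-\Phi_Q(\Lambda)\rvert\,d\nu(\Lambda)$, with $\nu$ the Cauchy law, is a metric inducing the topology of weak convergence on $\PP(\R^n)$. First I would check it is genuinely a metric: symmetry and the triangle inequality are inherited from the $L^1(\nu)$-norm, while definiteness follows from the first listed property of $\text{CFD}_\Lambda$ together with the fact that the Cauchy density is strictly positive, so $\nu$ has full support $\R^n$. Since the weak topology on $\PP(\R^n)$ is itself metrizable (e.g.\ by the Lévy--Prokhorov metric), both topologies are first countable, and two metrics on a common set induce the same topology as soon as they have the same convergent sequences. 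It therefore suffices to prove, for any sequence $(P_k)$ and any $P$ in $\PP(\R^n)$, the equivalence $d(P_k,P)\to 0 \iff P_k\overset{d}{\to} P$.

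The implication $P_k\overset{d}{\to} P \implies d(P_k,P)\to 0$ is easy: by the Lévy continuity theorem $\Phi_{P_k}(\Lambda)\to\Phi_P(\Lambda)$ for every $\Lambda$, the integrand $\lvert\Phi_{P_k}-\Phi_P\rvert$ is dominated by the constant $2\in L^1(\nu)$, and dominated convergence yields $d(P_k,P)\to0$. For the converse, suppose $d(P_k,P)\to0$, i.e.\ $\Phi_{P_k}\to\Phi_P$ in $L^1(\nu)$. Because the Cauchy law is equivalent to Lebesgue measure, $L^1(\nu)$-convergence forces convergence in Lebesgue measure; hence from any subsequence I can extract a further subsequence $(P_{k_j})$ along which $\Phi_{P_{k_j}}\to\Phi_P$ Lebesgue-a.e.\ on $\R^n$. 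The strategy is then the standard subsequence principle: establish tightness of $(P_{k_j})$, identify every weak limit as $P$, and conclude.

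The crux is tightness, and this is where the main obstacle lies: a.e.\ convergence on $\R^n$ gives no control along the coordinate axes (which are Lebesgue-null), so the usual one-dimensional marginal tightness criterion is unavailable and I must use a genuinely $n$-dimensional estimate over full-dimensional cubes. For a probability measure $Q$ and $u>0$ set
$$I_Q(u):=\frac{1}{(2u)^n}\int_{[-u,u]^n}\bigl(1-\mathrm{Re}\,\Phi_Q(\Lambda)\bigr)\,d\Lambda = \int_{\R^n}\Bigl(1-\prod_{\ell=1}^n\tfrac{\sin(ux_\ell)}{ux_\ell}\Bigr)\,dQ(x),$$
the second equality by Fubini. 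Since $\lvert\sin s/s\rvert\le1$ always and $\le\tfrac12$ once $\lvert s\rvert\ge2$, whenever $\lvert x\rvert\ge 2\sqrt{n}/u$ some coordinate has $u\lvert x_\ell\rvert\ge2$, forcing the (nonnegative) bracket to exceed $\tfrac12$; hence $Q\{\lvert x\rvert\ge 2\sqrt{n}/u\}\le 2\,I_Q(u)$. Now fix $\varepsilon>0$. Because $1-\prod_\ell \sin(ux_\ell)/(ux_\ell)\to0$ pointwise as $u\downarrow0$ and is bounded, dominated convergence gives $I_P(u)\to0$, so I may fix $u$ with $2I_P(u)<\varepsilon/2$. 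On this fixed cube the integrands $1-\mathrm{Re}\,\Phi_{P_{k_j}}$ converge a.e.\ to $1-\mathrm{Re}\,\Phi_P$ and are bounded by $2$, so $I_{P_{k_j}}(u)\to I_P(u)$ by dominated convergence; thus $2I_{P_{k_j}}(u)<\varepsilon$ for all large $j$, while the finitely many remaining $P_{k_j}$ are individually tight. Enlarging the radius $R=2\sqrt{n}/u$ to absorb these finitely many terms shows $(P_{k_j})$ is tight.

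Finally I would conclude via Prokhorov and uniqueness. By Prokhorov's theorem $(P_{k_j})$ has a weakly convergent sub-subsequence with some limit $R$; along it $\Phi\to\Phi_R$ pointwise, but also $\Phi\to\Phi_P$ a.e., so $\Phi_R=\Phi_P$ a.e.\ and, both being continuous, everywhere; hence $R=P$ by uniqueness of characteristic functions. Therefore every subsequence of $(P_k)$ admits a further subsequence converging weakly to $P$, and since the weak topology is metrizable this forces $P_k\overset{d}{\to} P$, completing the hard direction and hence the proof. This recovers, in the multidimensional setting, the convergence claim attributed to \cite{CFMMD}, with the tightness-over-cubes estimate being the only nonroutine ingredient.
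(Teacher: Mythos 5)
Your proof is correct, but it takes a genuinely different route from the paper's. The paper works by verifying the integrability hypotheses of a result in \cite{CFMMD} — it checks that the Cauchy density $f_T$ and characteristic function $\Phi_T$ lie in $L^1(\R)$ and that $\int_\R \frac{1}{f_T(x)(1+|x|^4)}\,dx<\infty$ — and then passes from the $L^2$-type (MMD) form of the distance to the $L^1$ form via uniform boundedness of characteristic functions and an interpolation argument; moreover it only treats $n=1$ explicitly, deferring the general case to ``a straightforward extension''. You instead give a self-contained first-principles argument: the easy direction by L\'evy continuity plus dominated convergence, and the hard direction by extracting an a.e.-convergent subsequence of characteristic functions (using equivalence of the Cauchy law with Lebesgue measure), proving tightness via the classical truncation inequality $Q\{|x|\ge 2\sqrt{n}/u\}\le 2 I_Q(u)$ over full-dimensional cubes, and closing with Prokhorov, uniqueness of characteristic functions, and the subsequence principle. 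Your route is longer but buys two things the paper's does not: it handles $\R^n$ directly (your cube estimate is exactly the multidimensional ingredient the paper's sketch elides), and it reveals that the specific Cauchy form is irrelevant — only full support and equivalence with Lebesgue measure are used — whereas the paper's verification is tied to the explicit Cauchy density. The paper's approach buys brevity and a clean connection to the MMD framework. One small imprecision on your side: $L^1(\nu)$-convergence gives convergence in $\nu$-measure, not in (infinite) Lebesgue measure globally; what you actually use — a $\nu$-a.e., hence Lebesgue-a.e., convergent subsequence — is correct, so nothing breaks.
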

\begin{proof}
    We consider the case $n=1$; the general case is a straightforward extension. If the law of a random variable $T$ is that of a Cauchy distribution with location parameter $0$ and scale parameter $\gamma$, then it is well known that its characteristic function is given by
    \begin{equation*}
        \Phi_T(t)=e^{-\gamma|t|},
    \end{equation*}
    and has a density given by
    \begin{equation*}
        f_T(x) = \frac{1}{2\pi}\int_\R e^{-itx}\Phi_T(t)\m dt =\frac{1}{\pi\gamma\Big(1+\big(\frac{x}{\gamma}\big)^2\Big)}.
    \end{equation*}
    It is clear that both $\Phi_T$ and $f_T$ are in $L^1(\R)$, since
    \begin{equation*}
        \int_\R \Phi_T(t)\m dt = \frac{2}{\gamma},\ \text{and }\int_\R f_T(x)\m dx = 1.
    \end{equation*}
    We also observe that
    \begin{equation*}
        \int_\R\frac{1}{f_T(x)(1+\abs{x}^4)}\m dx\leq 2\pi\gamma\int_0^\infty \frac{\left(1+\frac{\abs{x}}{\gamma}\right)^2}{(1+\abs{x})^4}\m dx<\infty.
    \end{equation*}
    An application of \cite{CFMMD}, the uniform boundedness of characteristic functions in $L^\infty(\mu)$, and a standard interpolation argument yields that $\text{CFD}_{\Lambda\sim \mu}$ metrizes weak convergence.
\end{proof}
\subsection{Proof of the Second-order polynomial Expansion}
\label{sec:BBLAdistn_proof}


\begin{proof}
The Browninan bridge $B$ and increment $W_{s,t}$ are independent, hence so are $\{H_{s,u}\}_{u \in [s,t]}$ and $\{b_{s,u}\}_{u \in [s,t]}.$ This is because $B_u = W_{s,u} - \frac{u-s}{t-s} W_{s,t},$ and $W_{s,t}$ are jointly Gaussian and $\cov(B_u, W_{s,t}) = (u-s) - \frac{u-s}{t-s}(t-s) = 0$ for all $u\in [s,t]$, so they are independent.

To prove the second assertion, recall that an integral of a Gaussian process is Gaussian. Therefore, $H$ is Gaussian. Since $\Hi$ only depends on $\Wi$ and $\Wi$ and $\Wj$ are independent for $j \not= i$ (by definition), the coordinates of $H$ are independent.
Since $H_{s,t} \eqd H_{0, t-s}$ we only need to determine the mean and variance of $H_t.$
\[
\E\big[\Hi_t\big] = \E\bigg[\frac{1}{t} \int_0^t W_{0,u} - \frac{u}{t}W_{t} \, du\bigg] = \frac{1}{t} \int_0^t \E\bigg[W_{0,u} - \frac{u}{t}W_{t}\bigg] \, du = 0.
\]
Integration by parts gives
\begin{align*}
    t\Wi_t &= \int_0^t s \, d\Wi_s + \int_0^t \Wi_s \, ds \; \implies \; \frac{1}{t} \int_0^t \Wi_s \, ds = \Wi_t - \frac{1}{t}\int_0^t s \, d\Wi_s \\[2mm]
    \Hi_t &= \frac{1}{t} \int_0^t \Wi_s - \frac{s}{t} \Wi_t \, ds = \frac{1}{t} \int_0^t \Wi_s \, ds - \frac{1}{2} \Wi_t = \frac{1}{2} \Wi_t - \frac{1}{t} \int_0^t s \, d\Wi_s.
\end{align*}

Since $\E{(W_t)^2} = t$ and $H_t \, \perp \, W_t$ we obtain
\begin{align*}
\E{\left(\Hi_t \right)^2} &= \E{\left(\Hi_t \right)^2} + \frac{1}{4} \E{\left(\Wi_t \right)^2} - \frac{t}{4} \\
&= \E\Bigg[\bigg(\frac{1}{2} \Wi_t - \Hi_t \bigg)^2\Bigg] - \frac{t}{4} \;=\; \E\Bigg[\bigg( \frac{1}{t} \int_0^t s \, d\Wi_s \bigg)^2\Bigg] - \frac{t}{4} \\
&= \frac{1}{t^2} \E\bigg[\bigg\langle \int_0^t s \, d\Wi_s \bigg\rangle\bigg] - \frac{t}{4} \;=\; \frac{1}{t^2} \E{ \int_0^t s^2 \, d \langle \Wi_s \rangle} - \frac{t}{4} \\
& = \frac{1}{t^2} \E\bigg[\int_0^t s^2 \, ds\bigg] - \frac{t}{4} \;=\; \frac{1}{12} t.
\end{align*}
\end{proof}

\begin{proof}
We will prove the decomposition for $A_{0,1}$, which can be extended to the general case by scaling. If $B$ is the Brownian bridge on $[0,1]$, then for $t \in [0,1] \;$ $W_t = t W_1 + B_t$, so
\begin{align*}
\Aijst &= \int_0^1 \Wi_t \, d\Wj_t - \frac{1}{2} \Wi_1 \Wj_1 \\
&= \int_0^1 \left(t \Wi_1 + \Bi_t \right) \, d\! \left(t \Wj_1 + \Bj_t \right) - \frac{1}{2} \Wi_1 \Wj_1 \\
&= \frac{1}{2}\Wi_1 \Wj_1 + \Wj_1 \int_0^1 \Bi_t \, dt + \Wi_1 \int_0^1 t \, d\Bj_t + \int_0^1 \Bi_t \, d\Bj_t - \frac{1}{2} \Wi_1 \Wj_1 \\
&=  \Wj_1 \Hi_1 + \Wi_1 \int_0^1 t \, d\Bj_t + \bij_{0,1}
\end{align*}
we use stochastic integration by parts and the fact that $B_1 = 0$ to get
\[
t\Bj_t = \int_0^t s \, d\Bj_s + \int_0^t \Bj_s \, ds \; \implies \; \int_0^1 t \, d\Bj_t = - \int_0^1 \Bj_t \, dt = - \Hj_1.
\]
Plugging this into the above equality gives $A^{(i,j)}_{s,t} = H^{(i)}_{s,t} W^{(j)}_{s,t} - W^{(i)}_{s,t} H^{(j)}_{s,t} + b^{(i,j)}_{s,t}$ as required.
\end{proof}

\section{Proof of the Chen-Uniqueness Theorem}
\label{sec:chen_uniq_proofs}
Here we present the proofs of our main theoretical contributions. Firstly, we prove that the joint law of a Brownian increment and its L\'evy area is the unique distribution that is invariant under an iteration of the Chen-combine operation.
\begin{theorem}[Distributional uniqueness of L\'evy area under Chen's relation]\label{thm:DistrChenUniquenessGeneral_sm}

Suppose $\mu$ is a mean zero probability distribution on $\R^{d}\times \R^{d\times d}$, where the first marginal has finite second moment. Let $(V_i,Z_i)\stackrel{i.i.d.}{\sim}\mu$ for $i =1,2$; if it holds that
    \begin{equation}\label{eq: chen combine sm}
        V_3 \coloneqq \tfrac{1}{\sqrt{2}} \left( V_1 + V_2 \right), \;\; Z_3 \coloneqq \tfrac{1}{2} Z_1 + \tfrac{1}{2} Z_2 + \tfrac{1}{4} \left( V_1 \otimes V_2 - V_2 \otimes V_1 \right)
    \end{equation}
    is also distributed according to $\mu$, then $\mu$ is the distribution of $(W_{0,1}, A_{0,1})$ where $A$ is the L\'{e}vy area process associated with a $d$-dimensional Brownian motion $W$.
\end{theorem}
\begin{proof}
    Let $(\Omega, \FF, \P)$ be a probability space carrying for each $N\in\N$ a sequence of random variables~$(V_i^{N,N},Z_i^{N,N})_{i=1}^{2^N}\stackrel{\text{i.i.d.}}{\sim}\mu$. For each $N$ define via backward recursion the sequences $\big(V_i^{k,N},Z_i^{k,N}\big)_{i=1}^{2^k}$ for $k=0,1,\dots,N-1$ by
    \begin{align}
        V_i^{k,N}&=\frac{1}{\sqrt{2}}\big(V_{2i-1}^{k+1,N}+V_{2i}^{k+1,N}\big) \; \; \text{and}\label{eq:chen_V}\\
        Z_i^{k,N}&= \frac{1}{2}\big(V_{2i-1}^{k+1,N}+V_{2i}^{k+1,N}\big)+\frac{1}{4}\big(V_{2i-1}^{k+1,N}\otimes V_{2i}^{k+1,N}-V_{2i}^{k+1}\otimes V_{2i-1}^{k+1,N}\big).\label{eq:chen_Z}
    \end{align}
    
    Using the assumptions on $\mu$, it follows that $\big(V_i^{k,N},Z_i^{k,N}\big)_{i=1}^{2^k}\stackrel{\text{i.i.d.}}{\sim}\mu$ for every $k$. We now define
    \begin{equation}
        X_k^N = \bigg(\frac{1}{2^{k/2}}\sum_{i=1}^{2^k} V_i^{k, N}, \frac{1}{2^n}\sum_{i=1}^{2^k}Z_i^{k,N}\bigg)\; \; \text{for}\; \; k=0,\dots,N,
    \end{equation}
    noting that $X_0^N=\big(V_1^{0,N},Z_1^{0,N}\big)\sim \mu$. For every $N$ we have the telescoping sum
    \begin{equation*}
        X_0^N=\sum_{k=1}^{N-1}\big(X_{k-1}^N-X_k^N\big)+X_N^N,
    \end{equation*}
    and since by \cref{eq:chen_V,eq:chen_Z}
    \begin{equation*}
        X_{k-1}^N-X_k^N=\frac{1}{2^{k+1}}\bigg(0,\sum_{i=1}^{2^{k-1}}V_{2i-1}^{k,N}\otimes V_{2i}^{k,N} - V_{2i}^{k,N}\otimes V_{2i-1}^{k,N}\bigg),
    \end{equation*}
    this relation may be rewritten as
    \begin{equation*}
    \begin{split}
        X_0^N=& \Bigg(\frac{1}{2^{N/2}}\sum_{i=1}^{2^N}V_i^{N,N} \; , \;\; \sum_{k=1}^N\frac{1}{2^{k+1}}\sum_{i=1}^{2^{k-1}}V_{2i-1}^{k,N}\otimes V_{2i}^{k,N} - V_{2i}^{k,N}\otimes V_{2i-1}^{k,N}\Bigg) \\
        & +\Bigg(0 \: , \;\; \frac{1}{2^N}\sum_{i=1}^{2^N}Z_i^{N,N}\Bigg).
    \end{split}
    \end{equation*}
    
    The left-hand side has distribution $\mu$ independent of $N$, while on the right-hand side the second term tends to zero in probability, while the first term can be recognised as having the distribution
    \begin{equation*}
        \big(W_{0,1}^{D_N}, \text{Area}\big(W_{0,1}^{D_N}\big)\big)\stackrel{\text{d}}{\to} \big(W_{0,1}, A_{0,1}\big)\; \; \text{as}\; \; N\to\infty,
    \end{equation*}
    where  $W^{D_N}$ is the piecewise linear interpolation of a rescaled random walk along the $N$\textsuperscript{th} diadic partition $D_N$ of $[0,1]$. Here $\text{Area}\big(W_{0,1}^{D_N}\big)$ denotes the L\'evy area of the piecewise linear interpolation. The convergence in distribution follows from Donsker's theorem for enhanced Brownian motion \cite[Theorems 2 and 3]{Donsker_EBM}. The proof is concluded by an application of Slutsky's theorem.
\end{proof}
\begin{remark}
    To see that
    \begin{equation*}
        \sum_{k=1}^N\frac{1}{2^{k+1}}\sum_{i=1}^{2^{k-1}}V_{2i-1}^{k,N}\otimes V_{2i}^{k,N} - V_{2i}^{k,N}\otimes V_{2i-1}^{k,N}
    \end{equation*}
    has the distribution of the L\'evy area of the piecewise linear approximation of Brownian motion on the diadic partition $D^N$, one can note that for each $k=1,\dots,N$, the summand is the signed area between the piecewise linear approximation on $D^k$ and the piecewise linear approximation on the coarser partition $D^{k-1}$. For a piecewise linear path, the L\'evy area is given by the sum of these enclosed areas. Alternatively, one can show via induction that the sum may be rearranged to give the trapezium rule applied on $D^N$ to the L\'evy area of the piecewise linear approximation on the same partition. The trapezium rule in this case will be exact.
\end{remark}

If one additionally assumes that the distribution $\mu$ has finite variance and that the first marginal is Gaussian, then an alternative proof is possible utilising Wasserstein distances.
\begin{theorem}[Distributional Uniqueness of L\'evy area under Chen's relation (finite variance)]
\label{thm:DistrChenUniquenessFiniteVar}
    Suppose a distribution $\mu$ on $\reals^{d} \times \reals^{d\times d}$ has the following properties:\smallbreak
    \begin{enumerate}[label=(\roman*)]
    \item If $(V, Z) \sim \mu$, where $V \in \reals^d, \; Z \in \reals^{d \times d} ,$ then \begin{itemize}
    	\item $V \sim \normal^d(0,1),\; $
    	\item $\E[Z] = 0,$
    	\item $\var(Z) < \infty$;
    \end{itemize}
    \item If $(V_1, Z_1), (V_2, Z_2) \sim \mu$, are i.i.d. tuples, and we define
    \begin{equation}
    \label{eqn:ChenZ}
    V_3 \coloneqq \tfrac{1}{\sqrt{2}} \left( V_1 + V_2 \right), \;\;Z_3 \coloneqq \tfrac{1}{2} Z_1 + \tfrac{1}{2} Z_2 + \tfrac{1}{4} \left( V_1 \otimes V_2 - V_2 \otimes V_1 \right) ,
    \end{equation}
    then $(V_3, Z_3) \sim \mu$.\smallbreak
    \end{enumerate}
    
    Then $\mu$ is the distribution of $(W_{0,1}, A_{1})$ where $A$ is the L\'evy area process associated with a $d$-dimensional Brownian motion $W$.
\end{theorem}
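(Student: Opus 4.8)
The plan is to exploit the fact that both $\mu$ and the law $\nu \coloneqq \P_{(W_{0,1}, A_1)}$ are fixed points of the Chen-combine operation, and to show that this operation is a strict contraction in a suitable Wasserstein-type metric, so that two fixed points must coincide. Write $\mathcal{T}$ for the map sending a distribution $\rho$ on $\R^d \times \R^{d\times d}$ with $V$-marginal $\NN^d(0,1)$, mean-zero $Z$ and finite variance, to the law of $(V_3, Z_3)$ defined in \cref{eqn:ChenZ} from two independent copies $(V_1, Z_1),(V_2, Z_2) \sim \rho$. First I would check that $\mathcal{T}$ preserves this class: $V_3 = \tfrac{1}{\sqrt 2}(V_1 + V_2) \sim \NN^d(0,1)$, while the cross term $\tfrac14(V_1 \otimes V_2 - V_2 \otimes V_1)$ has mean zero and finite variance (finite Gaussian fourth moments), so $\E Z_3 = 0$ and $\var(Z_3) < \infty$. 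Hypothesis (ii) says precisely $\mathcal{T}\mu = \mu$, and Chen's relation (\cref{prop:ChenRel}) together with Brownian scaling ($W_{0,1/2}\eqd \tfrac{1}{\sqrt2}W_{0,1}$ and $A_{0,1/2}\eqd \tfrac12 A_{0,1}$) shows that $\nu$ satisfies hypothesis (i) and $\mathcal{T}\nu = \nu$.

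Since $\mu$ and $\nu$ share the first marginal $\NN^d(0,1)$, the natural object is the fiberwise (conditional) $2$-Wasserstein distance
\[
D(\rho, \sigma)^2 \coloneqq \inf_{\pi} \E_\pi \big[ \| Z - Z' \|^2 \big],
\]
where the infimum runs over couplings $\pi$ under which the $V$-component is \emph{shared}, i.e. $(V, Z) \sim \rho$ and $(V, Z') \sim \sigma$ with the same $V$. Equivalently $D(\rho,\sigma)^2 = \E_{V}\big[ \WW_2(\rho_V, \sigma_V)^2 \big]$, where $\rho_V, \sigma_V$ denote the disintegrations of $Z$ given $V$; this is finite by the finite-variance hypothesis, and by a standard disintegration/measurable-selection argument $D(\rho,\sigma) = 0$ forces $\rho_V = \sigma_V$ for a.e.\ $V$ and hence $\rho = \sigma$.

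The heart of the argument is the contraction estimate $D(\mathcal{T}\rho, \mathcal{T}\sigma) \le \tfrac{1}{\sqrt2}\, D(\rho, \sigma)$. To prove it I would take a shared-$V$ coupling $(V, Z, Z')$ of $\rho, \sigma$ and two independent copies $(V_1, Z_1, Z_1')$, $(V_2, Z_2, Z_2')$, then apply Chen-combine on each marginal using the \emph{same} $V_1, V_2$. Because the $V$-components agree, the outputs satisfy $\hat V = \hat V'$ and the antisymmetric cross terms $\tfrac14(V_1\otimes V_2 - V_2 \otimes V_1)$ cancel exactly, leaving $\hat Z - \hat Z' = \tfrac12\big((Z_1 - Z_1') + (Z_2 - Z_2')\big)$; this is again a shared-$V$ coupling, now of $\mathcal{T}\rho$ and $\mathcal{T}\sigma$. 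Since the two copies are independent and each $Z_i - Z_i'$ has mean zero, the cross-covariance vanishes, giving $\E\|\hat Z - \hat Z'\|^2 = \tfrac14 \cdot 2\,\E\|Z_1 - Z_1'\|^2 = \tfrac12\, \E\|Z_1 - Z_1'\|^2$. Taking the infimum over the original shared-$V$ couplings yields the factor $\tfrac12$ on $D^2$, i.e.\ the claimed $\tfrac1{\sqrt2}$ contraction. Applying this with $\rho = \mu$, $\sigma = \nu$ and using that both are fixed points of $\mathcal{T}$ gives $D(\mu,\nu) \le \tfrac1{\sqrt2}\, D(\mu,\nu)$, so $D(\mu,\nu) = 0$ and therefore $\mu = \nu$.

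I expect the main obstacle to be conceptual rather than computational: recognising that the contraction must be run in the fiberwise metric $D$ rather than in the full $\WW_2$. In an arbitrary $\WW_2$-coupling the $V$-components need not agree, so the antisymmetric cross term would fail to cancel and would instead contribute mixed moments that obstruct a clean contraction factor. The remaining care points are the verification that $D$ is a genuine metric (the disintegration step, where finite variance guarantees finiteness) and that $\mathcal{T}$ stabilises the relevant class of measures; both are routine once the right distance has been identified.
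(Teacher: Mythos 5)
Your argument is correct, and it is a genuine repackaging of the paper's proof rather than a copy of it. The common core is identical: both proofs couple the two laws by \emph{identifying the Gaussian components}, so that the antisymmetric cross terms $\tfrac14(V_1\otimes V_2 - V_2\otimes V_1)$ cancel, and both then use independence together with $\mathbb{E}\,Z=0$ (for $\nu$ this is the fact that L\'evy area is centred, which you use implicitly and should state) to kill the cross-covariances and extract a factor $\tfrac12$ in squared $L^2$ per combine. Where you diverge is in how this factor is exploited. The paper builds a depth-$K$ binary tree from $2^K$ independent leaves drawn from $\mu$, telescopes, identifies all the leaf-level $V$'s with the Brownian increments, and obtains $\WW_2(\mu,\nu)^2 \le C\,2^{-K}$ with $C$ a crude bound by the leaf variances, then sends $K\to\infty$. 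You instead observe that $\mu$ and $\nu$ are both fixed points of the combine map $\mathcal{T}$ and that $\mathcal{T}$ contracts the fiberwise (shared-$V$) $2$-Wasserstein distance $D$ by $1/\sqrt{2}$, so a single application plus a priori finiteness of $D(\mu,\nu)$ (which is exactly where the finite-variance hypothesis enters, just as it supplies $C$ in the paper) gives $D(\mu,\nu)^2\le\tfrac12 D(\mu,\nu)^2=0$. What your route buys: no telescoping bookkeeping, and the contraction statement is of independent interest --- since $D$ dominates the plain $\WW_2$, it makes quantitative the paper's later remark that repeated Chen-combining converges at rate at least $\tfrac12$ towards the true law. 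What the paper's route buys: it stays in the unconditional $\WW_2$ and so avoids the disintegration and measurable-selection points you must address to make $D$ a legitimate metric and to justify $D(\mu,\nu)=0\Rightarrow\mu=\nu$; as you note, only the easy inequality $D(\mu,\nu)^2\ge\mathbb{E}_V\bigl[\WW_2(\mu_V,\nu_V)^2\bigr]$ is actually needed there, so this is routine, but it is an extra layer of rigour your write-up should not wave away.
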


\begin{proof}
Denote the distribution of $(W_{0,1}, A_{1})$ by $\nu$. We will try to estimate the 2-Wasserstein metric $\WW_2(\mu,\nu)$, and show that it is 0. Since $\WW_2$ is a metric, that means that $\mu = \nu$. Since the Wasserstein metric is defined using an infimum over all couplings $\gamma$, an upper bound on it can be obtained by picking a particular coupling and computing the $L^2$ distance between $(Z,V)$ and $(A_{0,1}, W_1)$ under that coupling.\smallbreak

Let $K \in \naturals$ be a number and let $(Z^K_n, V^K_n)$ (for $n = 1,2,\ldots,2^K$) be independent random variables drawn from $\mu$ (the superscript $K$ is an index, not a power). We will repeatedly apply \eqref{eqn:ChenZ} in a binary-tree fashion to combine all of these random variables into $(Z^0_1,V^0_1).$ Provided we are at layer $k \geq K$, consisting of $\{ (Z^k_n, V^k_n) : 1 \leq n \leq 2^k \}$, we can produce layer $k-1$ by setting
\begin{align*}
Z^{k-1}_n &\coloneqq \tfrac{1}{2} Z^k_{2n-1} + \tfrac{1}{4} Z^k_{2n} + \tfrac{1}{2} \left( V^k_{2n-1} \otimes V^k_{2n} - V^k_{2n} \otimes V^k_{2n-1} \right), \\
V^{k-1}_n &\coloneqq \tfrac{1}{\sqrt{2}} \left( V^k_{2n-1} + V^k_{2n} \right).
\end{align*}

If the random variables $\{ (Z^k_n, V^k_n) : 1 \leq n \leq 2^k \}$ are independent and $\mu$-distributed, then by (ii), so are $\{ (Z^{k-1}_n, V^{k-1}_n) : 1 \leq n \leq 2^{k-1} \}$. By induction we conclude that $(Z^0_1,V^0_1) \sim \mu$.\smallbreak

Furthermore, $Z^0_1$ can be decomposed into a sum
\[
Z^0_1 = D + \sum_{n=1}^{2^K} 2^{-K} Z^K_n,
\]
where $D$ is a rather complicated sum of correction terms of the form $ V_{2n-1} \otimes V_{2n} - V_{2n} \otimes V_{2n-1},$ but does not depend on any of the $Z^K_n$ (that is not to say it is independent of them, just doesn't contain them).\smallbreak

Since Chen's relation holds for $(A_{0,1}, W_1)$, we can perform the same procedure with independent random variables $(A_{0,1}^K(n), W_1^K(n))$ (we omit the subscripts $_{0,1}$ and $_1$ when there is no ambiguity) and obtain
\[
A^0 = D' + \sum_{n=1}^{2^K} 2^{-K} A^K(n).  
\]
where $D'$ is again a weighted sum of the $W^K(n)$.\smallbreak

Now introduce the coupling $\gamma$:\smallbreak
\begin{enumerate}[label=\arabic*)]
\item $W^K(n) = V^K_n$ for all $n = 1,\ldots,2^K$ (this is possible, since by definition $W^K(n)$ and $V^K_n$ have the same marginals).\smallbreak
\item For any fixed $n$, the dependence between $A^K(n)$ and $Z^K_n$ is unspecified, except that they both depend on $V^K_n$ (but so far it is unknown whether the marginals of $A^K(n)$ and $Z^K_n$ are equal).\smallbreak
\item The tuples $\left\lbrace \left( A^K(n), Z^K_n, V^K_n \right) : 1 \leq n \leq 2^K \right\rbrace $ are independent.\smallbreak
\end{enumerate}

Using this we can estimate the 2-Wasserstein distance
\begin{align*}
 \left( \WW_2(\mu,\nu) \right)^2 &\leq \E_{\gamma}\bigg[\ltwonorm{(A^0,W^0) - (Z^0,V^0)}^2\bigg] = \E_{\gamma}\Big[\|A^0 - Z^0\|_2^2\Big]\\
&= \E_{\gamma}\left[\Bigg\|D' + \sum_{n=1}^{2^K} 2^{-K} A^K(n) \;\; - \;\; D + \sum_{n=1}^{2^K} 2^{-K} Z^K_n\Bigg\|_2^2\right]\\
&= \E_{\gamma}\left[\Bigg\|\sum_{n=1}^{2^K} 2^{-K} \big( A^K(n) \, - \, Z^K_n \big)\Bigg\|_2^2\right] \\
&= \sum_{n=1}^{2^K} \E_{\gamma}\Big[\big\|2^{-K} \big( A^K(n) \, - \, Z^K_n \big)\big\|_2^2\Big] \\
&= \sum_{n=1}^{2^K} 2^{-2K} \E_{\gamma}\Big[\big\| A^K(n) \, - \, Z^K_n\big\|_2^2\Big] \\
&\leq 2^{-K} \cdot 4 \max \{ \var(A^K(n)), \, \var(Z^K_n) \} = C \, 2^{-K},
\end{align*}
where $C$ is some finite constant independent of $K$.\smallbreak

Since $\WW_2$ is a metric, this implies that $\mu = \nu$ as required.
\end{proof}
We conclude this section with a result that heuristically says that the error between an estimator of L\'evy area and the true distribution is bounded above by an explicit constant multiplied by the error between the estimator and one iteration of Chen-combine applied to the estimator. Let $W$ be a $d$-dimensional Brownian motion and $A \in \R^{d\times d}$ be its associated L\'evy area process.
Let $\mu$ be a measure on $\R^d \times \R^{d \times d}$ and write $(X,Z) \sim \mu$ if for all $z \in \R^{d\times d}, x \in \R^d \;$ $\P(Z \in dz, X \in dx) = \mu(dz \times dx).$ 


\begin{theorem}[Chen error bound]
\label{thm:chen_err_bd}
    Let $W$ be a $d$-dimensional Brownian motion and $A \in \R^{d\times d}$ be its associated L\'evy area process. Given a suitable square integrable, zero mean measure $\mu$ on $\R^d\times \R^{d \times d} ,$ define the ``L\'evy-error'' of $\mu$ as
    \begin{equation}
        \elevy^2 \coloneqq \inf_{Z \in \Gamma} \E\Big[ \ltwonorm{Z - A_{0,1}}^2\Big],
    \end{equation}
    where $\Gamma$ is the set of all random variables $Z$ such that $(W_{0,1}, Z) \sim \mu,$ and $\ltwonorm{ \cdot }$ is the usual $l^2$ norm defined on matrices. Let $Z_1, \, Z_2$ be independent random variables with distributions given by $(\sqrt{2}W_{0,\frac{1}{2}}, Z_1) \sim \mu \,$ and $(\sqrt{2}W_{\frac{1}{2},1}, Z_2) \sim \mu \,$. The assumed independence is possible by the independence of $W_{0,\frac{1}{2}}$ and $W_{\frac{1}{2},1}$. Write
    \[D \coloneqq \big( W_{0,\frac{1}{2}} \otimes W_{\frac{1}{2}, 1} - W_{\frac{1}{2}, 1} \otimes W_{0,\frac{1}{2}} \big), \; \text{ and } \; \hat{Z} \coloneqq \tfrac{1}{2} \big( Z_1 + Z_2 + D \big)\]
    for the Chen-combine of $Z_1$ and $Z_2.$ Using this, define the ``Chen-error'' as
    \begin{equation}
        \echen^2 \coloneqq  \inf_{Z_3 \in \Gamma} \E\Big[\ltwonormm{Z_3 - \hat{Z} }^2\Big],
    \end{equation}
    where $\Gamma$ is the set of all random variables $Z_3$  such that $(Z_3, W_1) \sim \mu$. Then 
    \begin{equation}
        \elevy \leq \big( 2 + \sqrt{2}\m\big) \echen.
    \end{equation}
\end{theorem}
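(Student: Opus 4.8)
The plan is to recognise both error quantities as distances in a single metric and to exhibit the Chen-combine operation as a strict contraction whose unique fixed point is the true law, so that the factor $2+\sqrt{2}$ emerges from the standard ``distance-to-fixed-point'' estimate. Write $\nu$ for the law of $(W_{0,1},A_{0,1})$ and, for two laws $P,Q$ on $\R^d\times\R^{d\times d}$ whose $\R^d$-marginal is $\NN^d(0,1)$, define
\[
\rho(P,Q)^2 \coloneqq \mathbb{E}_{w\sim\NN^d(0,1)}\!\left[\WW_2\big(P(\cdot\mid w),Q(\cdot\mid w)\big)^2\right],
\]
the $L^2$-aggregate of the conditional $2$-Wasserstein distances; this is exactly the ``$W$-pinned'' coupling distance, in which the Brownian coordinate is forced to agree on both sides and only the area coordinate is optimised. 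The first step is to note that $\rho$ is a genuine metric — the triangle inequality follows from that of $\WW_2$ fibrewise together with Minkowski's inequality in $L^2(w)$ — and that $\elevy=\rho(\mu,\nu)$ straight from the definition. Letting $T$ denote the Chen-combine map sending a law $P$ to the law of $\tfrac12 Z_1+\tfrac12 Z_2+\tfrac14(V_1\otimes V_2-V_2\otimes V_1)$ for independent $(V_i,Z_i)\sim P$ (its $\R^d$-marginal is again $\NN^d(0,1)$ since $\tfrac{1}{\sqrt2}(V_1+V_2)\sim\NN^d(0,1)$), a short computation setting $V_i=\sqrt2\,W$ on each half identifies the joint law of $(W_{0,1},\hat Z)$ with $T\mu$, so that $\echen=\rho(\mu,T\mu)$.

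Next I would record the fixed-point property $T\nu=\nu$: substituting the rescaled true half-interval areas $Z_1=2A_{0,\frac12}$, $Z_2=2A_{\frac12,1}$ into the definition of $T$ reproduces precisely Chen's relation $A_{0,1}=A_{0,\frac12}+A_{\frac12,1}+\tfrac12 D$ from \cref{prop:ChenRel}, using $A_{0,\frac12}\eqd\tfrac12 A_{0,1}$ and $W_{0,\frac12}\eqd\tfrac{1}{\sqrt2}W_{0,1}$. The heart of the argument is then the contraction estimate $\rho(T P,T Q)\le\tfrac{1}{\sqrt2}\,\rho(P,Q)$. To prove it, couple two independent $P$-samples and two independent $Q$-samples by sharing the same Brownian halves $V_1,V_2$ and coupling the area coordinates optimally within each half; since the Brownian halves agree, the correction terms cancel in the difference and the difference of the combined areas is $\tfrac12(Z_1^P-Z_1^Q)+\tfrac12(Z_2^P-Z_2^Q)$. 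The two summands are independent and, because every law in sight is mean zero, each has zero mean, so the cross term vanishes and
\[
\mathbb{E}\big\|Z_3^P-Z_3^Q\big\|^2=\tfrac14\,\mathbb{E}\big\|Z_1^P-Z_1^Q\big\|^2+\tfrac14\,\mathbb{E}\big\|Z_2^P-Z_2^Q\big\|^2=\tfrac12\,\rho(P,Q)^2 ,
\]
each half-coupling realising $\rho(P,Q)^2$. This is the step I expect to be the main obstacle, as it is where the scaling must be tracked so that the constant is exactly $\tfrac{1}{\sqrt2}$ and where the mean-zero hypothesis on $\mu$ is essential to kill the cross term.

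Finally I would assemble the pieces: applying the triangle inequality for $\rho$, the fixed-point identity $T\nu=\nu$, and the contraction bound,
\[
\elevy=\rho(\mu,\nu)\le\rho(\mu,T\mu)+\rho(T\mu,T\nu)\le\echen+\tfrac{1}{\sqrt2}\,\rho(\mu,\nu)=\echen+\tfrac{1}{\sqrt2}\,\elevy .
\]
Rearranging gives $\elevy\le\big(1-\tfrac{1}{\sqrt2}\big)^{-1}\echen=(2+\sqrt2)\,\echen$, as claimed. As a consistency check, iterating the contraction yields $\rho(T^{K}\mu,\nu)\le 2^{-K/2}\rho(\mu,\nu)$, matching the order-$\tfrac12$ convergence rate already observed for repeated Chen-combine in the proof of \cref{thm:DistrChenUniquenessFiniteVar}.
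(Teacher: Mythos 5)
Your argument is correct and is, at bottom, the same proof as the paper's: your triangle inequality through $\hat Z$ corresponds exactly to the paper's expansion of $\mathbb{E}\big[\ltwonormm{A_{0,1}-Z_3}^2\big]$ followed by Cauchy--Schwarz, and your $\tfrac{1}{\sqrt{2}}$-contraction estimate is precisely the paper's bound $\mathbb{E}\big[\ltwonormm{A_{0,1}-\hat Z}^2\big]\leq\tfrac{1}{2}\elevy^2+\tfrac{1}{2}\varepsilon$, obtained with the identical coupling (shared Brownian halves, near-optimal area couplings on each half, cross term killed by independence and the mean-zero hypothesis), leading to the same rearrangement $\elevy\leq\echen+\tfrac{1}{\sqrt{2}}\elevy$. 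Recasting this as a fixed-point statement for the Chen-combine operator in a $W$-pinned Wasserstein metric is a tidy reformulation (modulo the standard measurable-selection points you gloss over in defining $\rho$ fibrewise), but it does not change the substance of the argument.
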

\begin{proof}

    Fix $\varepsilon>0$, and since $W_1 \eqd \sqrt{2}W_{0,\frac{1}{2}} \eqd \sqrt{2}W_{\frac{1}{2},1}$ we can find random independent variables $Z_1, Z_2$ (possibly coupled to other random variables) with 
    \[
    \big(\sqrt{2}W_{0,\frac{1}{2}}, Z_1\big) \sim \mu \; \text{ and } \; \big(\sqrt{2}W_{\frac{1}{2},1}, Z_2\big) \sim \mu
    \]
    that satisfy
    \begin{equation}\label{eq:e_elevy}
        \Ex\Big[\ltwonormm{Z_1-2A_{0,\frac{1}{2}}}^2\Big]\leq \elevy^2+\varepsilon\quad\text{and}\quad \Ex\Big[\ltwonormm{Z_2-2A_{\frac{1}{2},1}}^2\Big]\leq \elevy^2+\varepsilon.
    \end{equation}

    Similarly, let $Z_3$ be a random variable for which $(W_{0,1},Z_3)\sim \mu$ and
    \begin{equation}
        \Ex\Big[\ltwonormm{Z_3-\hat{Z}}^2\Big]\leq \echen^2+\varepsilon,
    \end{equation}

    where $\hat{Z}$ is defined as in the statement of the theorem and $Z_1$ and $Z_2$ being exactly those random variables satisfying the inequalities in \cref{eq:e_elevy}. Finally, recall by Chen's relation that $A_{0,1} = A_{0,\frac{1}{2}}+A_{\frac{1}{2},1}+\frac{1}{2}D$. With the joint random variable $\big(Z_1,Z_2,Z_3,A_{0,\frac{1}{2}},A_{\frac{1}{2},1},A_{0,1},W_{0,\frac{1}{2}},W_{\frac{1}{2},1},W_{0,1}\big)$ now fully specified, we see that
    \begin{align*}
        \elevy^2 &\leq \E\Big[\ltwonormm{A_{0,1} - Z_3}^2\Big] \\
        &= \E\Big[\ltwonormm{A_{0,1} - \hat{Z} - Z_3 + \hat{Z}}^2\Big]\\
        &\leq \E\Big[\ltwonormm{A_{0,1} - \hat{Z}}^2\Big] + \E\Big[\ltwonormm{Z_3 - \hat{Z}}^2\Big] -2 \E\Big[\big\langle A_{0,1} - \hat{Z} \, , \, Z_3 - \hat{Z} \big\rangle\Big].
    \end{align*}
    The second term may be bounded as $\E{ \ltwonormm{Z_3 - \hat{Z}}^2} \leq \echen^2 + \varepsilon$. By construction 
    \begin{align*}
        A_{0,1} - \hat{Z} &= A_{0,\frac{1}{2}} + A_{\frac{1}{2},1} + \frac{1}{2}D - \hat{Z}\\
        &= \Big(A_{0,\frac{1}{2}} - \frac{1}{2}Z_1\Big) + \Big(A_{\frac{1}{2},1} - \frac{1}{2}Z_2\Big),
    \end{align*}
    allowing us to write
    \begin{align*}
        \E\Big[\ltwonormm{A_{0,1} - \hat{Z}}^2\Big] & = \Ex\Bigg[\bigg\|\Big(A_{0,\frac{1}{2}} - \frac{1}{2}Z_1\Big) + \Big(A_{\frac{1}{2},1} - \frac{1}{2}Z_2\Big)\bigg\|_2^2\Bigg] \\[2pt]
        & = \frac{1}{4} \Bigg( \Ex\Big[\ltwonormm{Z_1 - 2 A_{0, \frac{1}{2}}}^2\Big]+ \Ex\Big[\ltwonormm{Z_2 - 2 A_{\frac{1}{2}, 1}}^2\Big]\\[-2pt]
        &\hspace{12.5mm} + \E{\big\langle Z_1 - 2 A_{0, \frac{1}{2}} \, , \, Z_2 - 2 A_{\frac{1}{2}, 1} \big\rangle} \Bigg) \\
        &\leq \frac{1}{2} \elevy^2 + \frac{1}{2} \varepsilon.
    \end{align*}
    The cross terms vanish by independence and the mean zero property of all random variables involved. Finally we can bound the third term using the Cauchy-Schwarz inequality
    \begin{align*}
        -2 \E\Big[\big\langle A_{0,1} - \hat{Z} \; , \; Z_3 - \hat{Z} \big\rangle\Big] &\leq 2 \Ex\Big[\ltwonormm{A_{0,1} - \hat{Z}}^2\Big]^{\frac{1}{2}} \Ex\Big[\ltwonormm{Z_3 - \hat{Z}}^2\Big]^{\frac{1}{2}} \\
        &\leq \sqrt{2} \big(  \elevy^2 +  \varepsilon \big)^{\frac{1}{2}} \big( \echen^2 + \varepsilon \big)^{\frac{1}{2}} 
        %
    \end{align*}
    By substituting all bounds into the original inequality, and taking $\varepsilon\to 0$, we obtain
    \begin{equation*}
        \elevy^2 \leq \tfrac{1}{2} \elevy^2 + \sqrt{2} \elevy \echen + \echen^2 = \Big( \tfrac{1}{\sqrt{2}} \elevy + \echen \Big)^2.
    \end{equation*}
    Since all quantities are non-negative, we see that $\left( 1 - \frac{1}{\sqrt{2}} \right) \elevy \leq \echen$ as required.
    \end{proof}
    \begin{lemma}\label{lem:BridgeLAFlip_sm}
    Let $W$ be a $d$-dimensional Brownian motion on $[0,1]$ and let $B, H, b$ be the corresponding derived processes from \cref{def:BBridge}. Fix some $\xi \in \{-1, 1 \}^d $, and let $H^\prime$, and $b^\prime$ be the space-time and space-space L\'evy area processes associated with the process $\xi \odot B = \{\xi \odot B_t\}_{t\in[0,t]}$, where $\odot$ denotes the Hadamard (coordinate-wise) product. Then
    \[
         \{ B_t \}_{t \in [0,1]} \eqd \{ \xi \odot B_t \}_{t \in [0,1]}, \;\;  H^\prime = \xi \odot H, \;\; b^\prime = (\xi \otimes \xi) \odot b, \;\; \text{ and } \;\; (H^\prime,b^\prime)\eqd (H, b).
    \]
\end{lemma}
    \begin{proof}
    By definition of $H$ and $b$ (in vector form)
    \begin{align*}
        H'_{0,t} &= \frac{1}{t} \int_0^t \xi \odot B_{0,u} \, du = \xi \odot H_{0,t} \\[2mm]
        b'_{0,t} &= \int_0^t ( \xi \odot B_{0,u}) \otimes d( \xi \odot B_u ) \\
        &= (\xi \otimes \xi) \odot \int_0^t B_{0,u} \otimes dB_u = (\xi \otimes \xi) \odot b_{0,t}.
    \end{align*}
\end{proof}
\section{Characteristic Function and Joint Moments}\label{sec:dist_properties}
Here we briefly complete the definition of the joint characteristic function and L\'evy area found in \cref{main_theorem_into}, and also provide a proof of \cref{prop:moments}. Part of the proof of the second result comes as a corollary of the form of the characteristic function found in \cref{main_theorem_into} and \cref{lem_antisym_matrix_decomposition}.
\begin{theorem}[Characteristic function of Brownian motion and L\'evy area]
\label{main_theorem_into}
Let $W_t = (W_t^{(1)},\dots W_t^{(d)})$ be a $d$-dimensional Brownian motion and let $A^{(j_1,j_2)}_t$ be the corresponding L\'evy area of $W_t^{(j_1)}$ and $W_t^{(j_2)}$. Let $\mu \in \mathbb{R}^d$ and $\Lambda = \{\Lambda_{i,j}\}_{1\leq i< j\leq d}\in \mathbb{R}^{\frac{d(d-1)}{2}}$. The joint characteristic function of coupled Brownian motion $W_{t}$ and the L\'{e}vy area $A_{t}$ at time $t$
    \begin{equation*}
        \Phi_{(W, A)}(t, \mu, \Lambda) := \mathbb{E} \bigg[ \exp \bigg(i \sum_{i=1}^d \mu_i W_{0,t}^{(i)}+ i \sum_{1\leq j_1< j_2\leq d} \Lambda_{j_1,j_2} A^{(j_1,j_2)}_{t} \bigg)\bigg]
    \end{equation*}
    admits the following formula
    \begin{align*}
        \Phi_{(W, A)}(t, \mu, \Lambda) = & \bigg( \prod_{i=1}^{d_1} \frac{1}{\cosh(\frac{\eta_i}{2}t)} \bigg)  \exp \bigg( \bigg[ \sum_{i=1}^{d_1} - \frac{1}{\eta_i} ((R\mu)_{2i-1}^2
        \\
        &\hspace{45mm}+(R\mu)_{2i}^2)\tanh\big(\frac{\eta_i}{2}t\big) \bigg] - \frac{1}{2}t \sum_{i=1}^{d_0} (R\mu)^2_{2d_1+i}\bigg)
    \end{align*}
    where $R,\ \eta,\ d_0,\ d_1$ are defined in \cref{lem_antisym_matrix_decomposition}.
\end{theorem}

\begin{proof}
The result was first proved in \cite{Helmes1983levy} using a probabilistic approach. Recently, \cite{Jiajie2023levy} proposed a rough paths approach. 
\end{proof}
\begin{lemma}[Decomposition of anti-symmetric matrix]\label{lem_antisym_matrix_decomposition}
For any $d \times d$ anti-symmetric real-valued matrix $\Lambda$, let $(\pm \eta_{1}i,\dots, \pm \eta_{d_1}i)$ be the set of non-zero conjugate eigenvalue pairs of $\Lambda$ with $\eta_i > 0$ and $\eta_1\geq,\dots, \geq \eta_{d_1}$. Let $d_0 = d-2d_1$ be the algebraic multiplicity of the eigenvalue 0 of $\Lambda$ (if $\Lambda$ does not have zero eigenvalues, then $d_0=0$). Then there exists an orthogonal matrix $R$, such that the following decomposition of $\Lambda$ holds:
\begin{eqnarray*}
\Lambda = R^T \Sigma R,
\end{eqnarray*}
where $\Sigma$ is in the form that
\begin{align*}
\Sigma &= \begin{pmatrix}
\Sigma_0,& \mathbf{0}_{d-d_0, d_0}\\
\mathbf{0}_{d_0, d-d_0}&\mathbf{0}_{d_0,d_0}
\end{pmatrix} \\[3mm]
& \coloneqq \begin{pmatrix}
0 & -\eta_{1} & 0 &0& \dots & 0 &0 & 0 & \dots&0\\
\eta_{1} & 0 & 0 & 0 & \dots &  0&0& 0 & \dots&0 \\
0 & 0  & 0& -\eta_{2} & \dots & 0&0& 0 & \dots&0\\
0 &0& \eta_{2} &0 &\dots &0&0&0 & \dots&0\\
\vdots & \vdots & \vdots & \vdots&\ddots&\vdots &\vdots&\vdots &\dots& \vdots\\
0 & 0 & \dots & 0&\dots &0& -\eta_{d_1} &0& \dots  & 0  \\
0 & 0 & \dots & 0&\dots &\eta_{d_1} & 0 &  0& \dots  & 0 \\
0 & 0 & \dots & 0 & 0 &0 & 0 &0& \dots  & 0 \\
0 & 0 & \dots & 0 & 0 &0 & 0 & \vdots & \ddots&\vdots\\
0 & 0 & \dots & 0 & 0 &0 & 0 & 0 & \dots&0\\
\end{pmatrix},
\end{align*}
where $\Sigma_0$ is the block diagonal matrix with all non-zero $\eta_i$.
\end{lemma}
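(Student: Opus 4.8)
The plan is to recognise this as the classical real normal form for a real skew-symmetric matrix and to derive it from the spectral theorem for normal matrices. First I would observe that $\Lambda^T=-\Lambda$ forces $\Lambda$ to be normal, since $\Lambda\Lambda^T=-\Lambda^2=\Lambda^T\Lambda$. Hence by the complex spectral theorem $\Lambda$ is unitarily diagonalisable, and for any unit eigenvector $x$ with $\Lambda x=\lambda x$ one computes $\bar\lambda=(x^*\Lambda x)^*=x^*\Lambda^*x=-x^*\Lambda x=-\lambda$, so that every eigenvalue is purely imaginary. Because $\Lambda$ is real, its non-real eigenvalues occur in conjugate pairs with equal multiplicities, which I label $\pm\eta_k i$ with $\eta_k>0$, while the eigenvalue $0$ has some multiplicity $d_0$. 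Since $\Lambda$ is diagonalisable, $d_0=\dim\ker\Lambda$ and $d=2d_1+d_0$, matching the statement.

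Next I would convert the complex eigenbasis into a real orthonormal basis that realises the $2\times 2$ blocks. For each pair, pick a unit eigenvector $z_k=\tfrac{1}{\sqrt 2}(p_k+iq_k)$ with $p_k,q_k\in\R^d$ for the eigenvalue $-\eta_k i$; its conjugate $\bar z_k$ is then a unit eigenvector for $+\eta_k i$. Orthogonality of eigenvectors for the distinct eigenvalues $\pm\eta_k i$ gives $z_k^*\bar z_k=0$, which unpacks into $\|p_k\|=\|q_k\|$ and $p_k^Tq_k=0$; combined with $z_k^*z_k=1$ this forces $p_k,q_k$ to be real orthonormal. Splitting $\Lambda z_k=-\eta_k i\,z_k$ into real and imaginary parts then yields exactly $\Lambda p_k=\eta_k q_k$ and $\Lambda q_k=-\eta_k p_k$, which is precisely the action prescribed by the $2\times 2$ block $\left(\begin{smallmatrix}0&-\eta_k\\ \eta_k&0\end{smallmatrix}\right)$. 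A short computation using the mutual orthogonality of $z_k,\bar z_k$ across distinct indices shows the whole collection $\{p_k,q_k\}$ is orthonormal, and for the kernel I would take any real orthonormal basis of $\ker\Lambda$, which has dimension $d_0$ since $\Lambda$ is real.

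Finally I would assemble these $d$ real orthonormal vectors, ordered so that the blocks appear with $\eta_1\geq\cdots\geq\eta_{d_1}$ followed by the $d_0$ kernel directions, as the columns of $R^T$. Orthonormality makes $R$ orthogonal, and by construction the matrix of $\Lambda$ expressed in this basis is exactly $\Sigma$, giving $\Lambda=R^T\Sigma R$. The main technical point to handle carefully is the passage from complex eigenvectors to a genuinely real orthonormal basis respecting the block structure, namely verifying $\|p_k\|=\|q_k\|$, $p_k^Tq_k=0$ and the cross-orthogonality between distinct pairs, together with the sign bookkeeping needed to place $\pm\eta_k$ correctly in $\Sigma$. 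Eigenvalue multiplicities greater than one cause no difficulty, since one may simply choose orthonormal eigenvectors within each eigenspace.
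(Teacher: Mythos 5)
Your proof is correct. The paper itself offers no argument for this lemma --- it simply cites Greub's \emph{Linear Algebra} (1967), where this classical real canonical form of a skew-symmetric matrix is established --- so your write-up supplies the standard self-contained derivation that the reference contains: normality of $\Lambda$, purely imaginary spectrum, conversion of each conjugate eigenvector pair $z_k=\tfrac{1}{\sqrt2}(p_k+iq_k)$ into a real orthonormal pair realising the block $\left(\begin{smallmatrix}0&-\eta_k\\ \eta_k&0\end{smallmatrix}\right)$, and a real orthonormal basis of the kernel. The computations you flag as the technical crux ($\|p_k\|=\|q_k\|$, $p_k^Tq_k=0$ from $z_k^*\bar z_k=0$, cross-orthogonality from $z_k^*z_l=z_k^*\bar z_l=0$, and the sign convention placing $\eta_k$ below the diagonal) all check out, and your remark that repeated eigenvalues are handled by choosing orthonormal bases within each eigenspace closes the only remaining gap.
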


\begin{proof}
   See \cite{Greub1967}.
\end{proof}

In the following proposition, we denote by $\Pbf$ the output distribution of the BF generator given inputs $w$ and $\theta$, and the joint distribution of Brownian increments and fake area defined by the generator is denoted by $\Pbfj$.
\begin{proposition}\label{prop:moments}
The following facts about the joint and conditional distributions of Brownian motion, L\'evy area, and the BF generator hold.\smallbreak
\begin{enumerate}[label=\arabic*)]
\item If $r\sim\mathrm{Rad}\big(\frac{1}{2}\big)$ then $(rW_{0,t},rA_{0,t})\eqd(W_{0,t},rA_{0,t})\eqd(W_{0,t},A_{0,t})$.\smallbreak
\item  For any $n_i,n_{ij},m_{ij}\in\N,\ 1\leq i<j\leq d$ it holds that
{\allowdisplaybreaks
\begin{align}
    \Ex_{(W_{0,1},A_{0,1})\sim\P_{(W_{0,1},A_{0,1})}}\left[\prod_{i=1}^d\big(W_{0,1}^{(i)}\big)^{n_i}\bigg(\prod_{1\leq i<j\leq d}\big(A_{0,1}^{(i,j)}\big)^{n_{ij}}\bigg)\right]&=0,\label{eq:true_moments}\\
    \Ex_{(W_{0,1},\tA)\sim\Pbfj}\left[\prod_{i=1}^d\big(W^{(i)}\big)^{n_i}\bigg(\prod_{1\leq i<j\leq d}\big(\tA^{(i,j)}\big)^{n_{ij}}\bigg)\right]&=0\label{eq:fake_moments},\\
    \Ex_{A_{0,1}\sim\P_{A_{0,1}\mid W_{0,1}=w}}\left[\prod_{1\leq i<j\leq d}\big(A_{0,1}^{(i,j)}\big)^{m_{ij}}\right]&=0,\label{eq:true_cond_moments}\\
    \Ex_{\tA\sim\Pbf}\left[\prod_{1\leq i<j\leq d}\big(\tA^{(i,j)}\big)^{m_{ij}}\right]&=0\label{eq:fake_cond_moments},
\end{align}}
provided that $\sum_{i=1}^d n_i+\bigg(\sum_{1\leq i<j\leq d}n_{ij}\bigg)\; \text{and}\; \sum_{1\leq i<j\leq d}m_{ij}$ are odd.
\end{enumerate}
\end{proposition}


\begin{proof}
    To see item $(1)$, fix $\mu \in \mathbb{R}^d$ and $\lambda = \{\lambda_{i,j}\}_{1\leq i< j\leq d}\in \mathbb{R}^{\frac{d(d-1)}{2}}$. We claim that
    \begin{equation}\label{eq:char_negatives}
        \Phi_{(W, \vA)}(t, \mu, \lambda)=\Phi_{(W, \vA)}(t, -\mu, -\lambda)=\Phi_{(W, \vA)}(t, \mu, -\lambda).
    \end{equation}
    
    Let $\Lambda$ stand for the $d\times d$ anti-symmetric matrix with the above diagonal elements given by $\lambda$. Let $\Lambda$ have a decomposition $\Lambda = R^T\Sigma R$ as in \cref{lem_antisym_matrix_decomposition}. Then the corresponding decomposition for $-\Lambda$ is given by $\tilde{R}^T\Sigma \tilde{R}$, where for $1\leq i \leq d_1$
    \begin{equation*}
        \tilde{R}_{2i,\cdot}\coloneqq R_{2i-1,\cdot}\quad\text{and}\quad\tilde{R}_{2i-1,\cdot}\coloneqq R_{2i,\cdot}\,.
    \end{equation*}
    
    It is clear from the form of the characteristic function, that pairwise swapping the first $2d_1$ rows of $R$ leaves the final value unchanged. Since all the terms involving $\mu$ are squared, the characteristic function is also invariant under taking the negative of $\mu$. Item $(1)$ now follows from \cref{eq:char_negatives}. For item $(2)$, \cref{eq:true_moments} follows immediately from item $(1)$, and \cref{eq:true_cond_moments} follows from the fact that the conditional characteristic function of L\'evy area given $W_{0,t}=\vw$ is purely real \cite{Wiktorsson}. \cref{eq:fake_cond_moments} follows by independence of $\xi_0$ and the observation that its power will be odd. We finally turn our attention to \cref{eq:fake_moments}, where for simplicity we take $t=1$. We recall that $\tA_{0,1}\sim\Pbfj$ is defined by $\BFf{W_{0,1},H_{0,1},\tb_{0,1},\xi_,\vxi}$, where the only dependence is between $H_{0,1}$ and $\tb_{0,1}$. Indeed, by expanding each $\tA^{(i)}$ we see that it is enough to show that
    \begin{equation}\label{eq:moment_expansion}
        \Ex\left[\prod_{i=1}^d\big(W^{(i)}\big)^{n_i}\bigg(\prod_{1\leq i<j\leq d}(\xi_0)^{n_{ij}}\big(\xi_i H^{(i)}W^{(j)}\big)^{k_{ij}}\big(\xi_j H^{(j)}W^{(i)}\big)^{l_{ij}}\big(\xi_i\xi_j \tb^{(i,j)}\big)^{p_{ij}}\bigg)\right]=0,
    \end{equation}
    where $k_{ij}+l_{ij}+p_{ij}=n_{ij}$. We consider three exhaustive, but overlapping cases.\smallbreak
    \begin{enumerate}[label=\roman*)]
        \item $\sum_{1\leq i<j\leq d}n_{ij}$ is odd.\smallbreak
        \item For some $1\leq i \leq d $, the power of $W^{(i)}$ or $\xi_{i}$ is odd.\smallbreak
        \item For every $1\leq i \leq d$, the power of $W^{(i)}$ and $\xi_{i}$ is even.\smallbreak
    \end{enumerate}
    If i) holds, then the power of $\xi_0$ is odd and by independence \cref{eq:moment_expansion} is true. Item ii) is similar. To conclude, we will show that if iii) holds, then i) must also hold. We observe that the power of $W^{(i)}$ is given by
    \begin{equation*}
        n_i+\sum_{1\leq j <i}k_{ji} + \sum_{i<j\leq d}l_{ij},
    \end{equation*}
    and the power of $\xi_i$ is given by
    \begin{equation*}
        \sum_{1\leq j <i}l_{ji} +p_{ji}+ \sum_{i<j\leq d}k_{ij} + p_{ij}.
    \end{equation*}
    
    Under the assumption that all these powers are even, we may add them together to see that
    \begin{equation*}
        \sum_{i=1}^d n_i + \bigg(\sum_{1\leq j <i}k_{ji}+l_{ji}+p_{ji}+\sum_{i<j\leq d}k_{ij}+l_{ij}+p_{ij}\bigg)= \sum_{i=1}^d n_i+2\bigg(\sum_{1\leq i<j\leq d}n_{ij}\bigg)
    \end{equation*}
    is even. As the sum over all $n_i$ and $n_{ij}$ is odd by assumption, it must be the case that
    \begin{equation*}
        \sum_{1\leq i<j\leq d}n_{ij}
    \end{equation*}
    is odd.
\end{proof}


\section{Test metrics}
\label{appendix:test metrics}
We list in this section the test metrics we used to assess the performance of our generative model. Throughout the section let $X$, $Y$ be $d$-dimensional random variables on the metric space $(M, m)$ and let $\mu \coloneqq \P_X$, $\nu \coloneqq \P_Y$ be the induced probability measure. 
We denote by $\mu_i$ and $\nu_i$ the measure induced by the marginal distributions of $X^{(i)}$ and $Y^{(i)}$ respectively.\smallbreak
\begin{enumerate}[label=\arabic*)]
    \item Marginal 2-Wasserstein metric. For $1\leq i\leq d$, The 2-Wasserstein distance between $X^{(i)}$ and $Y^{(i)}$ is given by
    \begin{equation*}
        \WW_2(\mu, \nu) = \bigg ( \inf_{\gamma \in \Gamma(\mu, \nu)} \int m(x,y)^2 \m d\gamma(x,y)  \bigg )^{\frac{1}{2}}
    \end{equation*}
    where $\Gamma(\mu, \nu)$ is the set of all joint measures on $M \times M$ such that the marginal measure corresponds to $\mu$ and $\nu$, i.e.
    \begin{align*}
        \int_M \gamma(x,y)\m dy & = \mu(x)
        \\
        \int_M \gamma(x,y)\m dx & = \nu(y)
    \end{align*}
    Wasserstein metric is a way of assessing the difference between two distributions, however, in practice the estimation is often intractable when $d$ is high as it needs to compute $m(x,y)$ for all samples $x$ and $y$. To accommodate this issue, we compute the $\WW_2$ distance between the marginal distribution $X^{(i)}$ and $Y^{(i)}$ only.\smallbreak
    \item Cross moment metric. We compare the difference of the fourth moments between the real and generated L\'{e}vy area. Let $\{\vX_i\}_{i=1}^N$ be samples of the random variable $X$, for any $(i_1, i_2, i_3, i_4) \in \{1,\dots, d\}^4$ we estimate the cross moment $\E [X^{(i_1)}X^{(i_2)}X^{(i_3)}X^{(i_4)}]$ by
    \begin{equation*}
        \frac{1}{N^4} \sum_{1\leq j_1, j_2, j_3, j_4 \leq N} \vX_{j_1}^{(i_1)}\vX_{j_2}^{(i_2)}\vX_{j_3}^{(i_3)}\vX_{j_4}^{(i_4)}
    \end{equation*}
    If $X$ happens to be the L\'{e}vy area of Brownian motions we do know the analytical form of each cross-moment. One can derive it using the basic tools of rough path theory, i.e. the expected signature of Brownian motion. This metric is used to assess the joint fitting of generated L\'{e}vy area without using estimations of real data.\smallbreak
    \item Characteristic Function Distance and Maximum Mean Discrepancy distance (MMD). In \cite{CFMMD} it is shown that the characteristic function distance $\text{CFD}_{\Lambda}$ when using an $L^2$ norm instead of an $L^1$ norm is equivalent to an MMD distance with a certain kernel. For example, if $\Lambda$ is distributed as a Gaussian then the corresponding kernel is the Gaussian kernel. For our tests we use the MMD distance with both a Gaussian and polynomial kernel, with the Fourier series expansion with a high truncation level taken to be the ground truth.\smallbreak
    \item Empirical Unitary Characteristic Function Distance. We parametrize the measure $\mathcal{M}$ on the space of linear transformations $\mathbb{R}^{d+a}$ onto $\mathfrak{g}_m$ by $M$ linear maps, where $d,a$ and $m$ denotes the Brownian dimension, L\'evy dimension, and unitary Lie degree respectively. Then, we train $\mathcal{M}$ to maximize the EUCFD distance between real and generated data. In practice, we set $m=8$, $M=128$, and train $\mathcal{M}$ for $2000$ iterations, the EUCFD computed on an independent test set is used for model assessment.\smallbreak
\end{enumerate}
\section{Training procedure and hyperparameter tuning}\label{sec:param_tuning}
We performed the training procedure as illustrated in \cref{alg:LevyGAN}.\smallbreak

On the generator side, we used a FNN with $3$ hidden layers and $16$ hidden dimensions per layer. The activation function is chosen to be LeakyRelu with slope $0.01$. The dimension of the noise vector is set to be $4$.\smallbreak

On the discriminator side, we parameterize $128$ linear maps onto the Lie algebra of degree $3$ to mimic the empirical distribution used to compute UCFD mentioned in \cref{sec: ucf}. The total number of training iterations is set to be $2500$, where we observed the convergence on the marginal $2$-Wasserstein metric on real data. We optimize both the generator and discriminator using Stochastic Gradient Descent and Adam optimizer. We set the batch size to $2^{13}$ and the learning rate for generator/discriminator is set to be $0.001$/$0.01$ respectively. Both learning rates decay for each $500$ iteration. Finally, we set $\text{iter}_d$ to be $3$.
\smallbreak

Regarding the choice of some of the parameters, we have conducted a wide range hyperparameter grid search for training the L\'evyGAN. Tuning is done on both the generator and discriminator sides. The model selection is based on the marginal $2$-Wasserstein metric. We provide here a complete grid for interested readers.
\begin{table}[H]
\begin{center}
\begin{tabular}{@{}ll@{}}
\toprule
Hyperparameter & Grid values \\
\midrule
Generator hidden layers & $2, 3, 5$ \\
Generator hidden dimension & $8, 16$ \\ 
Generator noise size & $4, 8$ \\
Generator slope of LeakyRelu & $0.01, 0.2$\\
Discriminator Lie degree & $2, 3, 5$ \\
Discriminator batch size & $16, 64, 128$ \\
 \bottomrule
\end{tabular}
\end{center}\vspace*{2.5mm}
\caption{Grid for hyperparameter tuning}
\end{table}\smallbreak
\section{Foster's Moment-Matching Method}
\label{sec:fosters_method}

\begin{definition}[Foster's approximation of L\'evy area]
\label{def:fosters_method}
For any $0 \leq s < t $ and $d \geq 2$ we define the antisymmetric tensor $\tilde{A}_{s,t}$ with entries
\[
\tilde{A}^{(i,j)}_{s,t} = H^{(i)}_{s,t} W^{(j)}_{s,t} - W^{(i)}_{s,t} H^{(j)}_{s,t} + 12 \Big( K^{(i)}_{s,t} H^{(j)}_{s,t} - H^{(i)}_{s,t} K^{(j)}_{s,t} \Big) + \tilde{a}^{(i,j)}_{s,t}, \;\; \text{ for } \; 1 \leq i,j \leq d
\]
where
\begin{itemize}
    \item $W$ is a $d$-dimensional Brownian motion and $H$ is a space-time bridge L\'evy area on $[s,t]$.\smallbreak
    \item $K_{s,t} \in \reals^d$ is the space-time-time L\'evy area of the Brownian bridge between $s$ and $t$, is distributed as $K_{s,t} \sim \normal^d \! \left( 0, \, \frac{1}{720} (t-s) \right)$, with $K$ and $(W, H)$ being independent.\smallbreak
    \item $\tilde{a}$ is an approximation of the Brownian arch L\'evy area (see \cite{FosterThesis}, definition 4.1.14) constructed as shown below.
\end{itemize}
\[
\tilde{a}^{(i,j)}_{s,t} = \begin{cases}
	\sigma^{(i,j)}_{s,t} \xi^{(i,j)}_{s,t} & \text{ if } i < j \\
	-\sigma^{(i,j)}_{s,t} \xi^{(i,j)}_{s,t} & \text{ if } j < i \\
	0 & \text{ if } i = j
\end{cases}
\]

with the independent random variables $\sigma^{(i,j)}_{s,t}$ and $\xi^{(i,j)}_{s,t}$ defined for $1 \leq i < j \leq d$ according to

\[
\xi^{(i,j)}_{s,t} \sim \begin{cases}
\Uni\! \left[ -\sqrt{3}, \, \sqrt{3} \right] \; & \text{ with probability } p \\
\Rad(1/2) \; & \text{ with probability } 1-p
\end{cases} \hspace{1cm} \text{where} \;\; p \coloneqq \frac{21130}{25621}
\]
and
\[
\sigma^{(i,j)}_{s,t} = \sqrt{ \frac{3}{28} \left(C^{(i)} + c \right) \left(C^{(j)} + c \right) (t-s)^2 + \frac{1}{28} (t-s) \left( \left(12 K^{(i)}_{s,t} \right)^2 + \left(12 K^{(i)}_{s,t} \right)^2 \right) },
\]
where the $C^{(i)} \sim \Exp \! \left( \frac{15}{8} \right), \; (1 \leq i \leq d) $ are i.i.d. random variables and $c \coloneqq \frac{1}{\sqrt{3}} - \frac{8}{15}.$

\end{definition}

\begin{theorem}
Let $d \in \{ 2, 3 \}$ and let $\tilde{A}$ be as in \cref{def:fosters_method}. Then $\tilde{A}$ matches all the fifth and lower moments of the L\'evy area $A$ conditional on $W$ and $H$. That is, for any $n_1, n_2, n_3 \geq 0$ with $n_1 + n_2 + n_3 \leq 5$ 
{\small
\begin{align*}
    &\E{ \left( \tilde{A}^{(1,2)}_{s,t} \right)^{n_1} \left( \tilde{A}^{(2,3)}_{s,t} \right)^{n_2} \left( \tilde{A}^{(3,1)}_{s,t} \right)^{n_3} \bigm\vert W_{s,t}, H_{s,t} } = \\
    &\E{ \left( A^{(1,2)}_{s,t} \right)^{n_1} \left( A^{(2,3)}_{s,t} \right)^{n_2} \left( A^{(3,1)}_{s,t} \right)^{n_3} \bigm\vert W_{s,t}, H_{s,t} }.
\end{align*}
}
\end{theorem}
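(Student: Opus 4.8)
\section*{Proof proposal}

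The plan is to strip off the parts of $\tilde A$ and $A$ that are measurable with respect to $(W,H)$ and reduce the claim to a statement about bridge areas only. Writing $c^{(i,j)} \coloneqq H^{(i)}_{s,t}W^{(j)}_{s,t} - W^{(i)}_{s,t}H^{(j)}_{s,t}$, which is $\sigma(W,H)$-measurable, the polynomial expansion of \cref{prop:PolyExpansion} gives $A^{(i,j)}_{s,t} = c^{(i,j)} + b^{(i,j)}_{s,t}$, while by construction $\tilde A^{(i,j)}_{s,t} = c^{(i,j)} + \tilde b^{(i,j)}_{s,t}$ with $\tilde b^{(i,j)} \coloneqq 12\big(K^{(i)}_{s,t}H^{(j)}_{s,t} - H^{(i)}_{s,t}K^{(j)}_{s,t}\big) + \tilde a^{(i,j)}_{s,t}$. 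First I would expand each power as $(\tilde A^{(i,j)})^{n} = \sum_k \binom{n}{k}(c^{(i,j)})^{n-k}(\tilde b^{(i,j)})^k$, multiply the three such factors, and take conditional expectation given $(W,H)$. Since the $c^{(i,j)}$ are deterministic given $(W,H)$, the result is a fixed polynomial in the $c^{(i,j)}$ whose coefficients are the conditional mixed moments of the $\tilde b^{(i,j)}$. Because $(H,b)\perp W$ by \cref{prop:BBLAdistn}, and because $K$, the $C^{(i)}$ and the $\xi^{(i,j)}$ are by construction independent of $W$ given $H$, all these conditional moments given $(W,H)$ collapse to conditional moments given $H$ alone. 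As each surviving term satisfies $k_1+k_2+k_3\le n_1+n_2+n_3\le 5$, the theorem reduces to showing
\[
\mathbb{E}\Big[\textstyle\prod_{\alpha}(\tilde b^{\alpha})^{m_\alpha}\,\Big|\,H\Big] = \mathbb{E}\Big[\textstyle\prod_{\alpha}(b^{\alpha})^{m_\alpha}\,\Big|\,H\Big]\quad\text{for } m_1+m_2+m_3\le 5,
\]
where $\alpha$ ranges over the pairs $(1,2),(2,3),(3,1)$.

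Next I would compute both sides of this reduced identity. For the true side I would use the further decomposition of the space-space bridge area $b$ into the space-time-time area $K$ and the Brownian arch area $a$, and read off the conditional moments $\mathbb{E}[\prod_\alpha (b^\alpha)^{m_\alpha}\mid H]$ up to order five from the iterated-integral (expected-signature) structure of Brownian motion, exactly as tabulated in \cite{FosterThesis}. For the approximation side I would exploit the independence built into \cref{def:fosters_method}: $K\sim\normal^d(0,\tfrac{1}{720}(t-s))$ is independent of $H$, the $C^{(i)}$ are i.i.d.\ $\Exp(\tfrac{15}{8})$, and each $\xi^{(i,j)}$ is symmetric about zero and independent of $\sigma^{(i,j)}$. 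Symmetry of $\xi^{(i,j)}$ immediately annihilates every term carrying an odd power of a single $\tilde a^{(i,j)}$, so only even powers of each arch term survive alongside the Gaussian $K$-moments; this prunes the list to a short set of second- and fourth-order quantities together with the mixed $K$–$H$ cross moments of the skew-symmetric leading term.

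The final and hardest step is verifying that the two lists agree. This is precisely where the calibration constants in \cref{def:fosters_method} enter — the factor $12$, the variance $\tfrac{1}{720}$, the mixing probability $p=\tfrac{21130}{25621}$, the rate $\tfrac{15}{8}$, the shift $c=\tfrac{1}{\sqrt3}-\tfrac{8}{15}$, and the weights $\tfrac{3}{28},\tfrac{1}{28}$ in $\sigma^{(i,j)}$ — since they are reverse-engineered so that $\mathbb{E}[\xi^2]=1$ and so that $\mathbb{E}[\xi^4]$, $\mathbb{E}[\sigma^2]$, $\mathbb{E}[\sigma^4]$ reproduce the true conditional second and fourth arch moments once the $K$-contribution is subtracted. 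I would carry out these moment computations for $\sigma^{(i,j)}\xi^{(i,j)}$ and match coefficient by coefficient. The main obstacle is the bookkeeping of the order-four and cyclic order-five mixed moments, and in particular the triple cross moment $\mathbb{E}[b^{(1,2)}b^{(2,3)}b^{(3,1)}\mid H]$: the restriction $d\le 3$ is essential here, because the independence of the $\xi^{(i,j)}$ across distinct planes can reproduce the true cross-correlation structure of the arch areas only when at most the three planes of a single cyclic triple are involved, which fails once $d\ge 4$.
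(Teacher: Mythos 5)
The paper does not actually prove this theorem: it is quoted from Foster's thesis and the ``proof'' given is the single line ``See \cite{FosterThesis}, page 190.'' So your proposal can only be judged against that reference. Your reduction is structurally the right one and matches Foster's argument in outline: peel off the $\sigma(W,H)$-measurable term $H^{(i)}_{s,t}W^{(j)}_{s,t}-W^{(i)}_{s,t}H^{(j)}_{s,t}$ via \cref{prop:PolyExpansion}, use the independence of $(H,b)$ from $W$ (\cref{prop:BBLAdistn}) to drop the conditioning on $W$, and use the symmetry and mutual independence of the $\xi^{(i,j)}$ to annihilate every term carrying an odd power of some $\tilde a^{(i,j)}$. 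The gap is that your proposal stops exactly where the content of the theorem begins. The result holds only because the constants in \cref{def:fosters_method} satisfy a specific finite list of identities --- $\mathbb{E}\big[(\xi^{(i,j)})^2\big]=1$, $\mathbb{E}\big[(\xi^{(i,j)})^4\big]=\tfrac{9}{5}p+(1-p)$ matching the conditional kurtosis of the arch area, $\mathbb{E}\big[(\sigma^{(i,j)})^2\mid K\big]$ and $\mathbb{E}\big[(\sigma^{(i,j)})^4\mid K\big]$ reproducing the true conditional second and fourth arch moments given $K$, and the order-three cross moment $\mathbb{E}\big[b^{(1,2)}b^{(2,3)}b^{(3,1)}\mid H\big]$ coming out right --- and not one of these is computed, nor is its target value (the corresponding true conditional moment, obtained from the expected signature of Brownian motion) exhibited. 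Saying the constants ``are reverse-engineered so that'' the moments match is assuming the conclusion; the verification you defer to the last step \emph{is} the proof.

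A second, smaller issue: your claim that the conditional moments of $\tilde b$ given $(W,H)$ collapse to moments given $H$ alone, and can then be compared marginally, needs more care on the true side. The true arch area $a^{(i,j)}$ is not independent of the space-time-time area $K$; the entire reason $\sigma^{(i,j)}$ is built out of $K^{(i)}_{s,t}$ and $K^{(j)}_{s,t}$ is to mimic that conditional dependence, so the matching must be checked for the joint law of $(K,\tilde a)$ against $(K,a)$ rather than term by term after splitting $\tilde b$ into its $K$-part and its arch part. Likewise, your explanation of why $d\le 3$ is required is a plausible heuristic about the cross-correlation of arch areas across planes, but it is asserted rather than demonstrated. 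As it stands the proposal is a correct plan of attack, not a proof.
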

\begin{proof}
    See \cite{FosterThesis}, page 190.
\end{proof}
\section{SDE Numerical Schemes}
This section details the various numerical schemes with which we perform our tests. We start by recalling the definition of weak and strong errors and discuss why we are only able to analyse the former.
\subsection{Weak vs Strong Error}\label{sec:weak_vs_strong}
Strong error measures the discrepancy between the true sample paths of the process and the approximate sample paths, while weak error measures the similarity between the distributions of the true and approximate solutions. Since our simulation methods are not exact, nor do they correspond to some approximation of a true sample of L\'evy area (in comparison to a truncated Fourier series expansion for example), we cannot measure the strong error for schemes involving fake L\'evy area. We can, however, measure the weak error of the discretisation schemes. Concretely, a discretisation scheme $\{\hat{X}_{k}\}_{0\leq k \leq N}$ is said to converge weakly with order $\alpha$ if for any polynomial $p$
\begin{equation}
    \abs{\Ex\big[ p(\hat{X}_N)-p(X_T)\big]} \leq C_p h^\alpha,
\end{equation}
for some constant $C_p>0$ and for all sufficiently small step sizes $h\coloneqq \tfrac{T}{N}$. Convergence with weak order $\alpha$ will be denoted by $O(h^\alpha)$. The following subsections introduce three discretisation schemes that will be used for our numerical results.
\subsection{Discretisation Schemes}\label{sec:discretisation_schemes}
Here we recall the definitions of the numerical schemes used for the results presented in \cref{sec:sde_results}.
\subsubsection{Milstein's Method}\label{sec:milstein}
Milstein's method is derived from the second order (It\^o) Taylor expansion of the SDE in \cref{eq:SDE_1}. It is defined as follows: fix $N\geq 1$, set $\hat{X}_0=x_0$, and for all $0\leq k \leq N-1$ construct $\hat{X}_{k+1}$ recursively via the relation
\begin{equation}
\begin{split}
    \hat{X}_{k+1}=&\hat{X}_k + f(\hat{X}_k)h+\sum_{i=1}^dg_i(\hat{X}_k)\Delta W_k^{(i)}\\
    &+\sum_{i,j=1}^dg_i^\prime(\hat{X}_k) g_j(\hat{X}_k)\left(\frac{1}{2}(\Delta W_k^{(i)})(\Delta W_k^{(j)})+A_k^{(i,j)}-\frac{1}{2}\delta_{ij}h\right),
\end{split}
\end{equation}
where $h\coloneqq\frac{T}{N}$, $\Delta W_k^{(i)}\coloneqq W_{t_{k+1}}^{(i)}-W_{t_k}^{(i)}$, $\delta_{ij}$ is the Kronecker delta, and
\begin{equation}\label{eq: mil LA}
    A_k^{(i,j)} \coloneqq \frac{1}{2}\left(\int_{t_k}^{t_{k+1}}(W_s^{(i)}-W_{t_k}^{(i)})\m dW_s^{(j)}-\int_{t_k}^{t_{k+1}}(W_s^{(j)}-W_{t_k}^{(j)})\m dW_s^{(i)}\right)
\end{equation}
is the L\'evy area. Milstein's method, under certain conditions, is known to converge with both weak and strong order $O(h)$. Meanwhile, removing the L\'evy area terms leaves the weak order unchanged, but reduces the strong order to $O(\sqrt{h})$. Since we are concerned exclusively with weak convergence, we will consider Milstein's method without L\'evy area.
\subsubsection{Strang log-ODE}
The Strang log-ODE method, introduced in \cite{foster2024high}, is a higher order method that incorporates a term involving fake L\'evy area. To define this scheme we must first convert the SDE in \cref{eq:SDE_1} into Stratonovich form, that is
\begin{equation}\label{eq:base SDE strat}
    dX_t = \bar{f}(X_t\m dt + \sum_{i=1}^dg_i(X_t)\circ dW_t^{(i)},\ \ X_0=x_0,
\end{equation}
where the Stratonovich drift $\bar{f}$ is given by
\begin{equation*}
    \bar{f}(x)\coloneqq f(x)-\sum_{i=1}^d g_i'(x)g_i(x).
\end{equation*}
As in the Milstein scheme we fix $\hat{X}_0=x_0$, and recursively define
\begin{equation}
    \hat{X}_{k+1}=\exp\left(\tfrac{1}{2}h\bar{f}\right)\exp\left(\sum_{i=1}^dg_i\Delta W_k^{(i)}+\sum_{i<j}(g_j^\prime g_i-g_jg_i^\prime)A_k^{(i,j)}\right)\exp\left(\tfrac{1}{2}h\bar{f}\right)\left(\hat{X}_k\right),
\end{equation}
where $\exp\left(Cf\right)(x)$ denotes the time $1$ solution to the following ordinary differential equation
\begin{equation*}
    \frac{dy}{dt}=Cf(y),\ \ y_0=x.
\end{equation*}
For the L\'evy area term we will consider three different weak approximations: a Rademacher random variable which matches the variance of L\'evy area, Foster's method, and our generative modelling method. The first method is based on the use of Rademacher random variables in Talay's scheme \cite{Talay}.

\subsection{Antithetic MLMC}
Normally, in order to achieve optimal computational complexity of the MLMC it is required that the underlying discretisation scheme has strong order $O(h)$, see \cite{GilesMLMC}, which requires strong simulation of L\'evy area when the SDE is more than one dimensional. However, there exists clever reformulation of the standard MLMC which achieves optimal computational complexity using the no area Milstein scheme, despite only having a strong convergence rate of $O(\sqrt{h})$. Antithetic MLMC, proposed in \cite{GilesSzpruch}, was originally motivated by the difficulty of simulating L\'evy area exactly. Now, as mentioned in the previous subsection, on each level the coarse and fine path are generated using the same underlying Brownian motion. It is not in fact necessary to use the same estimator on both the coarse and fine paths. It suffices that $Y_{l}^{\cdot,l}\eqd Y_{l}^{\cdot,l+1}$.
For the antithetic method, the paper \cite{GilesSzpruch} uses two different paths on the fine level: the fine path itself and an antithetic twin. The antithetic twin is obtained by pairwise swapping the increments of the fine path; essentially this means that $\hat{X}^{(f)}+\hat{X}^{(a)}\approx 2\hat{X}^{(c)}$, where $\hat{X}^{(f)}$ denotes the fine path, $\hat{X}^{(a)}$ the antithetic path, and $\hat{X}^{(c)}$ the coarse path. The resulting effect is that the variance of $\tfrac{1}{2}(\varphi(\hat{X}^{(f)})+\varphi(\hat{X}^{(a)}))-\varphi(\hat{X}^{(c)})$ should be small. For more details, we refer the reader to \cite{GilesSzpruch}.
\subsection{The log-Heston SDE}
We recall the definition of the log-Heston model as the following two-dimensional SDE
\begin{equation}\label{eq:logheston_sde}
    \begin{split}
        dU_t &= \left(r-\frac{1}{2}V_t\right)dt+\sqrt{V_t}dW_t^{(1)},\ \ U_0\in\R\\
        dV_t&= \kappa(\theta-V_t)dt+\sigma\sqrt{V_t}dW_t^{(2)},\ \ V_0>0.
    \end{split}
\end{equation}
The discounted payoff of a European call option for a price $S$ with $d\log(S):=U$ is
\begin{equation*}
    \varphi(S):=e^{-rT}\left(e^{U_T}-K\right)^+.
\end{equation*}
The price at $t=0$ of this option may be written as
\begin{equation}
    C_0\coloneqq \Ex\left[\varphi(S)\right]=S_0\Pi_0-e^{-rT}K\Pi_1,
\end{equation}
where the factors $\Pi_0$ and $\Pi_1$ area given by
\begin{equation}\label{eq:heston_factors}
\begin{split}
    \Pi_0&=\frac{1}{2}+\frac{1}{\pi}\int_0^\infty\mathfrak{R}\left[\frac{e^{i\omega K}\Psi_{U_T}(\omega - i)}{i\omega\Psi_{U_T}(-i)}\right]d\omega\\
    \Pi_1&=\frac{1}{2}+\frac{1}{\pi}\int_0^\infty\mathfrak{R}\left[\frac{e^{i\omega K}\Psi_{U_T}(\omega)}{i\omega}\right]d\omega.
\end{split}
\end{equation}
Here $\Psi_{\log Y_T}$ denotes the characteristic function of $U_T$. The characteristic function itself has an analytic form given by
\begin{equation*}
    \Psi_{U_T}(\omega)=\exp\left\{C(\omega)\theta+D(\omega)V_0+i\omega\log\left(S_0e^{rT}\right)\right\},
\end{equation*}
where the functions $C(\cdot)$ and $D(\cdot)$ are defined as
\begin{align*}
    C(\omega)&\coloneqq \kappa \left[b_1T-\frac{2}{\sigma^2}\log\left(\frac{1-b_2e^{-aT}}{1-b_2}\right)\right]\\
    D(\omega)&\coloneqq b_1\frac{1-e^{-at}}{1-b_2e^{-at}},
\end{align*}
with constants $a,b_1$ and $b_2$ given by
\begin{align*}
a&\coloneqq \sqrt{\kappa^2+\sigma^2\omega(\omega-i)}\\
b_1&\coloneqq \frac{\kappa-a}{\sigma^2}\\
b_2&\coloneqq \frac{\kappa - a}{\kappa_a}.
\end{align*}
For full details of the derivation of this formula, we refer the reader to \cite{logHestonMatlab}. While the preceding expression may appear complicated, it requires only the evaluation of the deterministic integrals in \cref{eq:heston_factors}, which are computed in practice using quadrature.\smallbreak

What remains to full specify the numerical schemes is to compute the one step recursion for the various numerical schemes applied log-Heston SDE from \cref{eq:logheston_sde}. Full details and derivations of the vector field derivatives can be found in \cite{Terada}. The no-area Milstein update is given by
\begin{equation}
    \begin{split}
        \hat{U}_{k+1}&=\hat{U}_k+(r-\tfrac{1}{2}\hat{V}_k)h+\sqrt{\hat{V}_k}\Delta W_k^{(1)}+\tfrac{1}{4}\sigma\Delta W_k^{(1)}\Delta W_k^{(2)}\\
        \hat{V}_{k+1}&=\hat{V}_k+\kappa(\theta-\hat{V}_n)h+\sigma\sqrt{\hat{V}_k}\Delta W_k^{(2)}+\frac{1}{4}\sigma^2\left((\Delta W_k^{(2)})^2-h\right).
    \end{split}
\end{equation}
The Strang log-ODE recursion is more involved as it involves solving two ordinary differential equations: one for the Stratonovich drift and one involving diffusion terms. Fortunately, these are analytically solvable, with the resulting scheme given by
\begin{equation}
    \begin{split}
        \Tilde{V}_{k+1}^1 &= (\hat{V}_k+\xi)e^{-\frac{\kappa h}{2}}+\xi\\
        \Tilde{U}_{k+1}^1 &= \hat{U}_k+\frac{1}{2\kappa}(\hat{V}_k-\xi)(e^{-\frac{\kappa h}{2}}-1)+\tfrac{h}{2}(r-\tfrac{\xi}{2})\\
        \Tilde{V}_{k+1}^2&=\left(\sqrt{\Tilde{V}_{k+1}^1}+\tfrac{\sigma}{2}\Delta W_k^{(2)}\right)^2
    \end{split}
\end{equation}
\begin{align*}
    \Tilde{U}_{k+1}^2&=\Tilde{U}_{k+1}^1 + \sqrt{\Tilde{V}_{k+1}^1}\Delta W_k^{(1)}+\tfrac{\sigma}{4}\Delta W_k^{(1)}\Delta W_k^{(2)}-\tfrac{\sigma}{2}\tA_k^{(1,2)}\\
    \hat{V}_{k+1}&=(\Tilde{V}_{k+1}^2-\xi)e^{-\frac{\kappa h}{2}}+\xi\\
    \hat{U}_{k+1}&=\Tilde{U}_{k+1}+\frac{1}{2\kappa}(\Tilde{V}_{k+1}^2-\xi)(e^{-\frac{\kappa h}{2}}-1)+\tfrac{h}{2}(r-\tfrac{\xi}{2}),
\end{align*}
where $\xi\coloneqq \theta-\tfrac{\sigma^2}{4\kappa}$ and $\tA$ denotes the fake L\'evy area.\smallbreak

\end{document}